\newtheorem{theorem}{Theorem}
\newtheorem{corollary}[theorem]{Corollary}
\newtheorem{proposition}[theorem]{Proposition}
\newtheorem{lemma}[theorem]{Lemma}
\theoremstyle{definition}
\newtheorem{remark}{Remark}
\newcommand{\RR}{\mathbb{R}}
\newcommand{\EE}{\mathbb{E}}
\newcommand{\norm}[1]{\left\lVert #1 \right\rVert}
\newcommand{\inner}[2]{\left\langle #1, #2 \right\rangle}
\newcommand{\argmin}{\mathop{\mathrm{argmin}}}
\newcommand{\tr}{\mathrm{tr}}
\newcommand{\tbt}[4]{\begin{bmatrix}#1 & #2 \\ #3 & #4\end{bmatrix}}
\newcommand{\tbo}[2]{\begin{bmatrix}#1 \\ #2\end{bmatrix}}
\newcommand{\obt}[2]{\begin{bmatrix}#1 & #2\end{bmatrix}}
\newcommand{\stbt}[4]{\begin{bsmallmatrix}#1 & #2 \\ #3 & #4\end{bsmallmatrix}}
\title{{\bf On the Role of Transformer Feed-Forward Layers in \\ Nonlinear In-Context Learning}}
\author{Haoyuan Sun, Ali Jadbabaie, Navid Azizan \\ \\Massachusetts Institute of Technology}
\date{}
\begin{document}

\renewcommand{\thefootnote}{\roman{footnote}}
\footnotetext[0]{Corresponding author: Haoyuan Sun (\texttt{haoyuans [at] mit [dot] edu}).}
\renewcommand{\thefootnote}{\arabic{footnote}}

\maketitle

\begin{abstract}
    Transformer-based models demonstrate a remarkable ability for \textit{in-context learning} (ICL), where they can adapt to unseen tasks from a few prompt examples without parameter updates. 
    Recent research has illuminated how Transformers perform ICL, showing that the optimal \textit{linear self-attention} (LSA) mechanism can implement one step of gradient descent for linear least-squares objectives when trained on random linear regression tasks.
    Building on this, we investigate ICL for \textit{nonlinear} function classes.
    We first prove that LSA is inherently incapable of outperforming linear predictors on nonlinear tasks, underscoring why prior solutions cannot readily extend to these problems.
    To overcome this limitation, we analyze a Transformer block consisting of LSA and feed-forward layers inspired by the \textit{gated linear units} (GLU), which is a standard component of modern Transformers. 
    We show that this block achieves nonlinear ICL by implementing one step of gradient descent on a polynomial kernel regression loss.
    Furthermore, our analysis reveals that the expressivity of a single block is inherently limited by its dimensions.
    We then show that a deep Transformer can overcome this bottleneck by distributing the computation of richer kernel functions across multiple blocks, performing block-coordinate descent in a high-dimensional feature space that a single block cannot represent.
    Our findings highlight that the feed-forward layers provide a crucial and scalable mechanism by which Transformers can express nonlinear representations for ICL.
\end{abstract}

\section{Introduction}
In recent years, \textit{large language models} (LLM) based on the Transformer architecture~\citep{vaswani2017attention} have achieved remarkable successes across numerous domains such as computer vision~\citep{dosovitskiy2020image}, speech recognition~\citep{radford2023robust}, multi-modal data processing~\citep{yang2023dawn}, and human reasoning tasks~\citep{achiam2023gpt}.
Beyond these practical achievements, Transformers have also demonstrated the extraordinary ability to perform \textit{in-context learning} (ICL)~\citep{brown2020language,wei2022emergent,min2021metaicl}.
In ICL, a model adapts to different tasks by leveraging a \textit{prompt} containing a short sequence of examples without requiring updates to its parameters.

The study of in-context learning has attracted significant attention in recent literature.
Formally, given a prompt composed of input-label pairs and a query, $(x_1, f(x_1), x_2, f(x_2), \dots,$ $x_n, f(x_n), x_{\textsf{query}})$, a Transformer model is said to learn a function class $\mathcal{F}$ in context if it can accurately predict $f(x_{\textsf{query}})$ for previously unseen functions $f \in \mathcal{F}$.
Empirical studies by \citet{garg2022can} have demonstrated the efficacy of Transformers in performing in-context learning for a range of function classes such as linear functions, decision trees, and ReLU neural networks.

Building on these insights, \citet{akyurek2022learning} and \citet{von2023transformers} established that Transformers can learn linear functions in context by implicitly implementing gradient descent for a linear regression objective on the prompt's examples and provided a construction that facilitates such a mechanism.
Extending this perspective, subsequent theoretical analysis by \citet{ahn2023linear,zhang2023trained,mahankali2023one} showed that under suitably defined distributions over the prompts, the optimal single-layer \textit{linear self-attention} (LSA) mechanism can effectively implement one step of gradient descent.

Despite these advancements, the theoretical understanding of ICL for \textit{nonlinear} function classes remains underdeveloped. 
In this work, we demonstrate that the feed-forward layers in Transformers is a significant component to advance the theoretical understanding of ICL for nonlinear functions.
As a representative case, we show that using standard feed-forward layer architectures, such as those based on the \textit{gated linear units} (GLU)~\citep{mnih2007three,dauphin2017language,shazeer2020glu}, enables Transformers to efficiently learn quadratic and polynomial functions in context.
This insight underscores a broader significance for feed-forward layers enabling rich expressive powers within Transformer models.

\subsection{Our contributions}
In this work, we investigate the pivotal role of feed-forward layers in enabling Transformers to learn nonlinear functions in context.
We show that Transformers with alternating linear attention and feed-forward layers can learn nonlinear representations in context by implementing gradient descent updates with respect to a collection of nonlinear features. 
The key contributions of our paper are as follows:
\begin{list}{{\tiny $\blacksquare$}}{\leftmargin=1em}
\setlength{\topsep}{-1pt}
\setlength{\itemsep}{-3pt}
    \item We establish the essential role of feed-forward layers in nonlinear ICL.
    In Section~\ref{sec:lsa-lower-bound}, we show that when the target functions are nonlinear, \textit{no} deep linear self-attention (LSA) network can achieve lower in-context learning loss than a linear least-squares predictor. 
    Then, in Section~\ref{sec:bilinear-construction}, motivated by modern Transformer architectures, we construct an example with simple parameter configurations where a Transformer block consisting of one GLU-like feed-forward layer and one LSA layer can implement one step of gradient descent with respect to a quadratic kernel.
    \item In Section~\ref{sec:approx-opt}, we analyze a Transformer model featuring one block of the LSA-GLU mechanism.
    We show that the parameterization given in Section~\ref{sec:bilinear-construction} is approximately optimal.
    Notably, we show that, regardless of the feed-forward layer architecture, the optimal feed-forward layer must compute a kernel that captures all nonlinear features present in the target function.  
    This implies a fundamental limitation where the expressive power of a single Transformer block is inherently constrained by its embedding dimension.
    \item In Section~\ref{sec:efficiency}, we show that a deep Transformer model can overcome this limitation.
    By distributing the computation of complex kernel functions across multiple feed-forward layers, deep Transformers can represent nonlinear features that are substantially richer than the capacity of individual blocks.
    In Section~\ref{sec:cubic-training}, we illustrate this approach with a concrete example in learning higher-order polynomial functions in context.
    This demonstrates how depth enables scalable in-context learning over richer function classes.
    Lastly, we substantiate these insights with numerical experiments throughout the section.
\end{list}
While our technical discussions focus on a GLU-like feed-forward layer for concreteness, the underlying principles naturally generalize to other choices of nonlinear feed-forward layers.
We discuss this broader applicability in Appendix~\ref{sec:extended-discussion}.

\subsection{Related works} 
Motivated by the successes of the Transformer architecture, there has been tremendous progress in understanding its algorithmic power.
Independent of the studies on ICL, many works sought to characterize the expressivity of Transformers from an approximation-theoretic perspective.
These studies established that Transformers can effectively approximate a wide range of functions and algorithms (e.g., \citet{perez2021attention,wei2022statistically,giannou2023looped,olsson2022context,bai2024transformers,edelman2022inductive,reddy2024mechanistic}).

In the context of ICL, numerous empirical studies have examined the Transformer's capabilities to learn a variety of tasks and settings in context, including linear regression~\citep{garg2022can}, discrete boolean functions~\citep{bhattamishra2023understanding}, representation learning~\citep{guo2023transformers}, reinforcement learning~\citep{lin2023transformers}, and learning causal structures~\citep{nichani2024transformers}.
To explain these observations, \citet{akyurek2022learning,dai2023can,li2023transformers} proposed that Transformers can learn in context by implementing learning algorithms.
In particular, for linear regression, \citet{von2023transformers} proposed that Transformers implement gradient descent with zero initialization and provided a simple construction for a single-layer linear attention model.
Subsequently, \citet{ahn2024transformers,mahankali2023one,zhang2023trained,wu2024many} showed that the optimal single-layer linear self-attention with respect to some suitably chosen in-context learning loss closely corresponds to the construction proposed by~\citet{von2023transformers}.
Additionally, \citet{chen2024training} studied the training dynamics of applying softmax attention to linear ICL tasks, and they showed that for long context $n$, the solution attained by softmax attention is approximately equivalent to that of linear attention.

While theoretical analysis of attention for nonlinear ICL has gathered significant attention in the literature, existing works remain limited.
First, \citet{cheng2023transformers} show that attention with task-specific nonlinear activations can implement gradient descent in a function space induced by those activations.
But this requires redesigning the attention mechanism for each function class, limiting generality.
Another line of work \citep{huang2023context,yang2024context} studies how multi-head softmax attention can learn functions represented by a \textit{fixed} nonlinear feature map.
However, these works impose some significant limitations on the input distribution.
\citet{huang2023context} assumes that the inputs $x_i$ are drawn from an orthonormal set of vectors, while \citet{yang2024context} assumes the list of $x_i$'s is a fixed dictionary identical across all prompts. In contrast, our work assumes inputs are drawn i.i.d. from $\RR^d$, a significantly less restrictive assumption.

There is also an emerging area of study which studies the importance of feed-forward layers in enabling ICL.
\citet{zhang2024context} showed that incorporating an MLP feed-forward layer allows for Transformer to implement gradient descent with learnable \textit{nonzero} initialization, but their result is still confined to linear ICL.
Also, \citet{oko2024pretrained} showed that a model consisting of two-layer ReLU network and linear attention can perform ICL on \textit{low dimensional} generalized linear models, where the dimension of the target function's parameter space has to be much smaller than both the Transformer's embedding dimension and the ReLU network width.
Our work overcomes this shortcoming by investigating a deep Transformer architecture that can learn rich nonlinear functions in-context even when a single block lacks the capacity to represent such features.

\section{Problem Setting}
\label{sec:settings}

\subsection{In-context learning}
\label{sec:in-context}
We are interested in the in-context learning problem, where a \textit{prompt} consists of a sequence of input-label pairs $\{(x_i, y_i)\}_{i=1}^n$ along with a query $x_{\textsf{query}}$, where $y_i = f(x_i)$ for an unknown \textit{target function} $f : \RR^d \to \RR$.
Given this prompt, our goal is to make a prediction $\hat{y}$ so that $\hat{y} \approx y_{\textsf{query}} = f(x_{\textsf{query}})$.
In this paper, the prompt is represented as a matrix $Z \in \RR^{(\bar{d}+1) \times (n+1)}$:
\begin{equation}
\label{equ:prompt-form}
    Z =
    \begin{bmatrix}
        1 & 1 & \cdots & 1 & 1 \\
        x_1 & x_2 & \cdots & x_n & x_{n+1} := x_{\textsf{query}} \\
        \mathbf{0} & \mathbf{0} & \cdots & \mathbf{0} & \mathbf{0} \\
        y_1 & y_2 & \cdots & y_n & 0
    \end{bmatrix},
\end{equation}
where and the bottommost zero hides the unknown quantity $y_\textsf{query}$ that we seek to predict.
Similar to the work by \citet{bai2024transformers}, we include $\bar{d} - d - 1$ rows of zeros to embed the input in a higher-dimensional space.
Also, we include a row of ones to accommodate inhomogeneity in the nonlinear target function $f$.

We sample the prompts by first drawing i.i.d. Gaussian inputs $x_1, \dots, x_{n+1} \sim \mathcal{N}(0, \mathbf{I}_d)$.
In this work, we consider random \textit{quadratic} target functions of the form:
\begin{align*}
    f(x) = w_0 + \sum_{i=1}^d w_i x_i + \sum_{1 \le i \le j \le d} w_{ij} x_i x_j, \quad \text{where } w_i, w_{ij} \overset{i.i.d.}{\sim} \mathcal{N}(0, 1).
\end{align*}
For a symmetric matrix $M$, we define $\mathrm{svec}(M)$ as the vector obtained by flattening $M$ but which does not repeat the off-diagonal entries and scales them by a factor of 2.
And let $\mathrm{smat}(\cdot)$ be the inverse of this operation.
Then, in compact notation, we equivalently write:
\[f(x) = \obt{1}{x} W \tbo{1}{x}, \quad \text{for } \mathrm{svec}(W) \sim \mathcal{N}(0, \mathbf{I}),\]
where $W$ is a $(d+1) \times (d+1)$ random matrix and $\mathrm{svec}(W)$ has length $\binom{d+1}{2} + (d+1) = \binom{d+2}{2}$.
Given this class of target functions, we set $y_i = f(x_i)$.
Then, prompts $Z$ are sampled based on the randomness in the inputs $x_i$ and the coefficients $w$ of the target quadratic function.

\subsection{Transformer architecture}
Given a prompt $Z \in \RR^{D \times N} := \RR^{(\bar{d}+1) \times (n+1)}$, the \textit{Transformer} model is a sequence-to-sequence mapping $\RR^{D \times N} \mapsto \RR^{D \times N}$ that consists of self-attention and feed-forward layers:
\begin{list}{{\tiny $\blacksquare$}}{\leftmargin=1em}
\setlength{\itemsep}{1pt}
    \item \textsc{Attention Layer:} The most crucial component of Transformers is the \textit{self-attention} mechanism~\citep{bahdanau2014neural}, which can be written as
    
    \begin{equation*}
        \hspace{-0.5em} \mathsf{Attention}(Z) = Z + \frac{1}{n} \mathbf{W_V} Z M \, \sigma \bigg((\mathbf{W_K}Z)^\top(\mathbf{W_Q}Z)\bigg).
    \end{equation*}
    Here, $\sigma$ is typically the \textit{softmax} function, $\mathbf{W_Q}, \mathbf{W_K}, \mathbf{W_V} \in \RR^{D \times D}$ are learnable matrix parameters, and
    mask matrix $M = \mathrm{diag}(\mathbf{I}_n, 0)$ erases the attention score with the last column,
    due to the asymmetry that $y_{\textsf{query}}$ is not part of the prompt.
    Following the conventions established by \citet{von2023transformers}, we consider a variant called \textit{linear self-attention} (LSA), which omits the softmax activation and reparameterizes the weights as $\mathbf{P} := \mathbf{W_V}, \mathbf{Q} := \mathbf{W_K}^\top \mathbf{W_Q}$:
    \begin{equation}
    \label{equ:attention}
        \mathsf{attn}(Z) = Z + \frac{1}{n} \mathbf{P} Z M \, \bigg(Z^\top\mathbf{Q}Z\bigg).
    \end{equation}
    We note that LSA is practically relevant --- \citet{ahn2023linear} found that the optimization landscape of LSA closely mirrors that of full Transformers.
    Furthermore, variants of the linear attention mechanism have gained significant popularity in improving the computational efficiency of state-of-the-art LLMs~\citep{dao2024transformers,yang2025gated,li2025minimax}.
    \item \textsc{Feed-forward Layer:} The original Transformer model \citep{vaswani2017attention} employs a \textit{multi-layer perceptron} (MLP) for the feed-forward layer. 
    However, recent studies have shown that variants of the \textit{gated linear unit} (GLU) offer improved performance in large language models~\citep{shazeer2020glu}.
    A GLU is the element-wise product of two linear projections and was first proposed in~\citet{dauphin2017language}.
    A commonly used variant is the \textit{SwiGLU} layer, where one of the projections passes through a \textit{swish} nonlinearity.
    In this paper, we consider the simplest GLU variant without any nonlinear activations, which is referred to as a ``bilinear'' layer~\citep{mnih2007three}.
    For matrix parameters $\mathbf{W}_0, \mathbf{W}_1 \in \RR^{\bar{d} \times \bar{d}}$, we define
    \begin{equation}
        \label{equ:bilinear}
        \mathsf{bilin}(Z) = Z + \left(\tbt{\mathbf{W}_0}{\mathbf{0}}{\mathbf{0}}{0} Z\right) \odot \left(\tbt{\mathbf{W}_1}{\mathbf{0}}{\mathbf{0}}{0} Z\right),
    \end{equation}
    where $\odot$ denotes the Hadamard product.
    And for computing this layer, we mask the last row because of the asymmetry that the final label $y_{\mathsf{query}}$ is absent from the prompt.
    For the rest of the paper, we refer to \eqref{equ:bilinear} as the \textit{bilinear feed-forward layer} to avoid any ambiguity.
\end{list}

For a Transformer $\mathsf{TF}$, let $Z^{(\ell)}_{\mathsf{TF}}$ (or simply $Z^{(\ell)}$ when the Transformer is clear from the context) denote the output of its $\ell$th layer.
In this paper, we focus on two types of Transformer models.
\begin{list}{{\tiny $\blacksquare$}}{\leftmargin=1em}
\setlength{\itemsep}{-3pt}
    \item A \textbf{linear Transformer} consists solely of LSA layers.
    A linear Transformer $\mathsf{TF_{lin}}$ with $L$ layers is defined as:
    \[Z^{(0)}_\textsf{lin} = Z, \quad Z^{(\ell+1)}_\textsf{lin} = \mathsf{attn}(Z^{(\ell)}_\textsf{lin}), \quad \ell = 0, \dots, L-1.\]
    \item A \textbf{bilinear Transformer} consists of alternating LSA and bilinear feed-forward layers.
    A bilinear Transformer $\mathsf{TF_{bilin}}$ with $2L$ layers follows the forward rule:
    \begin{align*}
        Z^{(0)}_\textsf{bilin} = Z, \;\; Z^{(2\ell+1)}_\textsf{bilin} = \mathsf{bilin}(Z^{(2\ell)}_\textsf{bilin}),
        Z^{(2\ell+2)}_\textsf{bilin} = \mathsf{attn}(Z^{(2\ell+1)}_\textsf{bilin}), \quad \ell = 0, \dots, L-1.
    \end{align*}
    We note that this differs slightly from the canonical definition of Transformers in that we have swapped the order of the attention and feed-forward layers.
    This choice is to simplify our construction in Section~\ref{sec:bilinear-construction} and does not affect the expressive power of the Transformer.
\end{list}

\subsection{In-context learning objective}
We now define the objective of the in-context learning problem.
Given an $L$-layer Transformer $\mathsf{TF}$, we let its prediction $\hat{y}_{\mathsf{TF}}$ be the $(\bar{d}+1, n+1)$th entry of its final layer, i.e. $\hat{y}_{\mathsf{TF}} = Z_{\mathsf{TF}}^{(L)}[\bar{d}+1, n+1]$.
Then, its \textit{in-context learning loss} is the expected squared error with respect to the distribution of the prompts $Z$:
\begin{equation}
    \mathcal{L}(\mathsf{TF}) = \EE_{Z}[(\hat{y}_{\mathsf{TF}} + y_{\textsf{query}})^2],
\end{equation}
so that the Transformer's prediction $\hat{y}_{\mathsf{TF}}$ approximates $-y_\textsf{query}$.
Our objective is to find Transformers that minimize this loss $\mathcal{L}$.
 
\section{Solving Quadratic In-context Learning}
\label{sec:nonlinear}
Recent literature has explored the ability of linear self-attention (LSA) mechanisms to learn linear function classes in context. 
In this section, we demonstrate that LSA is inherently limited when it comes to nonlinear in-context learning (ICL). 
To address this limitation, inspired by modern Transformer architectures, we study a mechanism that combines LSA with bilinear feed-forward layers, enabling the model to solve quadratic ICL through kernel regression. 
Our approach is quite general, and as we show in Section~\ref{sec:cubic-training}, it can be extended to handle ICL of higher-degree polynomials.

\subsection{Limitations of linear self-attention in nonlinear ICL}
\label{sec:lsa-lower-bound}

Prior works \citep{mahankali2023one,ahn2024transformers,zhang2023trained} have shown that a single layer of linear self-attention (LSA) can learn linear functions in context by implementing one step of gradient descent with respect to a linear least-squares objective.
Formally, the optimal one-layer LSA model that minimizes the linear ICL loss has weights of the following form:
\begin{equation}
\label{equ:linear-attn-weights}
    \mathbf{P} = \tbt{0_{d \times d}}{0}{0}{1}, \mathbf{Q} = \tbt{\Gamma}{0}{0}{0},
\end{equation}
and its prediction given prompt $Z$ can be expressed as
\[\hat{y} = \frac{1}{n} \sum_{i=1}^n (x_\textsf{query}^\top \Gamma x_i) y_i.\]
This is equivalent to the prediction obtained after one step of preconditioned gradient descent with zero initialization and linear regression objective $L(w) = \frac{1}{2n} \sum_i (w^\top x_i + y_i)^2$.

We note that $L$-layers of LSA is a degree $3^L$ polynomial in $Z$. Also, to learn linear functions in context, the optimal attention weights are sparse.
Thus, a natural question arises: can we leverage these unused weights in LSA to learn richer classes of functions in context? 
The following proposition answers this question by showing that, even with multiple layers, a linear Transformer consisting of only LSA layers \textit{cannot} effectively learn a nonlinear target function in context.

\vspace{-0.25em}
\begin{proposition}
\label{thm:lsa-lower-bound}
    Consider any fixed target function $f$ and inputs $x_1, \dots, x_{n+1} \overset{i.i.d.}{\sim} \mathcal{N}(0, \Sigma)$.
    The in-context learning prompts are sampled according to the form as described in \eqref{equ:prompt-form}.
    Then, for any linear Transformer $\mathsf{TF_{lin}}$ (regardless of the number of layers), its expected prediction loss satisfies
    \vspace{-0.25em}
    \[\EE_x[(\hat{y}_{\mathsf{TF}_\mathsf{lin}} + y_\textsf{query})^2] \ge \min_{\alpha, \beta} \EE_x[(\inner{x_\textsf{query}}{\beta} + \alpha + y_\textsf{query})^2].\]
\end{proposition}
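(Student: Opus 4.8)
The plan is to prove that, no matter how many LSA layers we stack, the prediction $\hat y_{\mathsf{TF_{lin}}}$ is always an affine function of $x_\textsf{query}$ whose coefficients do \emph{not} depend on $x_\textsf{query}$; once this is established, the claimed bound follows from the tower property, since the best affine predictor with random but $x_\textsf{query}$-independent coefficients can do no better than the best deterministic one. The starting point is the per-column form of the LSA update. Writing $z_j$ for the $j$th column of $Z$ and noting that the mask gives $(ZM)Z^\top = \sum_{i=1}^n z_i z_i^\top$, one has
\[
\mathsf{attn}(Z)_{\cdot,j} \;=\; z_j \;+\; \frac1n\, \mathbf{P}\Big(\sum_{i=1}^n z_i z_i^\top\Big)\mathbf{Q}\, z_j .
\]
The crucial structural point is that the mask $M$ removes the query column from the sum: the update of column $j$ couples $z_j$ only to $z_1,\dots,z_n$ (never to $z_{n+1}$, except through $z_j$ itself when $j=n+1$), and it is \emph{linear} in $z_j$.

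From this I would run an induction on the layer index $\ell$, maintaining two invariants for $Z^{(\ell)} = Z^{(\ell)}_{\mathsf{lin}}$: (i) each non-query column $z_j^{(\ell)}$, $j\le n$, is a (deterministic) function of $x_1,\dots,x_n$ only, hence independent of $x_\textsf{query} = x_{n+1}$; and (ii) the query column satisfies $z^{(\ell)}_{n+1} = A^{(\ell)} z_{n+1}$ for some matrix $A^{(\ell)}$ that is itself a function of $x_1,\dots,x_n$ only. Both hold at $\ell = 0$ with $A^{(0)} = \mathbf{I}$. For the inductive step, the displayed formula with $j\le n$ involves only $z_1^{(\ell)},\dots,z_n^{(\ell)}$, preserving (i); with $j = n+1$ it gives $z^{(\ell+1)}_{n+1} = \big(\mathbf{I} + \tfrac1n \mathbf{P}(\sum_{i\le n} z_i^{(\ell)} {z_i^{(\ell)}}^\top)\mathbf{Q}\big) A^{(\ell)} z_{n+1}$, and the bracketed matrix depends only on $x_1,\dots,x_n$ by (i), so (ii) is preserved. (This also explains why the degree-$3^L$ growth of $L$ LSA layers in $Z$ as a whole is irrelevant: in the query column the degree stays $1$.) Evaluating the last coordinate of $z^{(L)}_{n+1} = A^{(L)} z_{n+1}$ and using $z_{n+1} = \transpose{(1,\transpose{x_\textsf{query}},\transpose{\mathbf{0}},0)}$, we obtain $\hat y_{\mathsf{TF_{lin}}} = \alpha + \inner{\beta}{x_\textsf{query}}$, where $\alpha := A^{(L)}[\bar{d}+1,1]$ and $\beta$ collects the entries $A^{(L)}[\bar{d}+1,\,2{:}d{+}1]$; by the induction $(\alpha,\beta)$ is a function of $(x_1,\dots,x_n)$ alone, hence independent of $x_\textsf{query}$.

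Finally, I would condition on $(\alpha,\beta)$ (equivalently on $x_1,\dots,x_n$). Since $f$ is fixed and $x_\textsf{query}$ is drawn independently of $x_1,\dots,x_n$, for every realization $(a,b)$ of $(\alpha,\beta)$,
\[
\EE_{x_\textsf{query}}\big[(a + \inner{b}{x_\textsf{query}} + y_\textsf{query})^2\big] \;\ge\; \min_{\alpha_0,\beta_0}\EE_{x_\textsf{query}}\big[(\alpha_0 + \inner{\beta_0}{x_\textsf{query}} + y_\textsf{query})^2\big],
\]
and the right-hand side is a constant not depending on $(a,b)$. Taking the expectation over $(\alpha,\beta)$ on the left recovers $\EE_x[(\hat y_{\mathsf{TF_{lin}}} + y_\textsf{query})^2]$ by the tower property, which gives the proposition. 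I expect the substantive step to be the induction establishing invariants (i)--(ii): it is precisely what isolates the role of the mask $M$ and shows the query column can never become nonlinear in $x_\textsf{query}$; once that is in hand, the conditioning argument is routine.
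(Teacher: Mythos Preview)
Your proof is correct and follows essentially the same approach as the paper: both arguments run an induction showing that the non-query columns remain independent of $x_{\textsf{query}}$ while the query column stays of the form $A^{(\ell)} z_{n+1}$ with $A^{(\ell)}$ depending only on $x_1,\dots,x_n$, and then conclude that $\hat y_{\mathsf{TF_{lin}}}$ is affine in $x_{\textsf{query}}$ with $x_{\textsf{query}}$-independent coefficients. The only notable difference is the final step: the paper explicitly identifies the optimal affine predictor $\beta^\star = -\Lambda^{-1}\EE[uv]$ and uses its first-order optimality to kill a cross term in a square expansion, whereas you condition on $(\alpha,\beta)$ and invoke the tower property directly; your route is slightly more elementary and sidesteps the need for $\Lambda$ to be invertible.
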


This result reveals the inherent limitation: a linear Transformer's prediction performance is bounded by that achieved by solving linear regression, preventing it from effectively learning a nonlinear target function in context.
For a complete proof of this result, please see Appendix~\ref{sec:proof-lsa-lower-bound}.

\subsection{Quadratic ICL with bilinear feed-forward layers}
\label{sec:bilinear-construction}

In order to overcome the limitation of LSA in handling nonlinear functions, drawing inspiration from modern Transformers, we study a mechanism that augments the linear Transformer architecture with bilinear feed-forward layers.
In particular, we consider two layers in a bilinear Transformer and define a \textit{bilinear Transformer block} $\mathsf{BTFB}$ as the composition of a bilinear feed-forward layer followed by a linear self-attention layer, i.e.,
$\mathsf{BTFB}(Z) := \mathsf{attn} \circ \mathsf{bilin}(Z).$

In a $2L$-layer bilinear Transformer $\mathsf{TF_{bilin}}$, there are $L$ bilinear Transformer blocks.
We claim that for sufficiently large embedding dimension $\bar{d}$ and context length $n$, a single bilinear Transformer block can learn quadratic target functions in context.

\vspace{-0.25em}
\begin{theorem}
\label{thm:bilinear-construction}
    Given the quadratic in-context learning problem as described in Section~\ref{sec:in-context} and embedding dimension $\bar{d} = \binom{d+2}{2}$, there exists a bilinear Transformer block $\mathsf{BTFB}$ such that
    \[\mathcal{L}(\mathsf{BTFB}) = \frac{(d+2)(d+1) + d}{2n} \in O(d^2/n).\]
\end{theorem}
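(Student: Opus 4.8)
The plan is to produce explicit weight matrices for the feed‑forward and attention layers and then evaluate the resulting in‑context loss in closed form. Guided by the linear case, the feed‑forward layer should lift the prompt into a feature space containing every degree‑$\le 2$ monomial, and the attention layer should then execute one step of preconditioned gradient descent from zero initialization on the least‑squares objective $\tfrac{1}{2n}\sum_i(\langle w,\phi(x_i)\rangle + y_i)^2$ in that space, i.e.\ one step of polynomial‑kernel regression.

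\textbf{The construction.} In column $i$ the top $\bar d$ entries of $Z$ are $u_i = (1,x_i,\mathbf 0)^\top$, and since $\bar d=\binom{d+2}{2}=1+d+\binom{d+1}{2}$ there are exactly $\binom{d+1}{2}$ zero‑padding rows, one per pair $1\le k\le l\le d$. For the row assigned to $(k,l)$ I set the corresponding rows of $\mathbf W_0$ and $\mathbf W_1$ to the coordinate selectors for $x_{i,k}$ and $x_{i,l}$, so the Hadamard gate outputs $x_{i,k}x_{i,l}$ there; I zero the first $1+d$ rows of $\mathbf W_0$ so the residual connection carries the constant and linear coordinates through unchanged (and the label row is masked). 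Thus $\mathsf{bilin}$ sends column $i$ to $(\phi(x_i);y_i)^\top$ with $\phi(x)$ the vector of all monomials of degree $\le 2$, and $\langle\mathrm{svec}(W),\phi(x)\rangle=f(x)$ identically. For the attention layer I take $\mathbf P=e_{\bar d+1}e_{\bar d+1}^\top$ (it reads and writes only the label row) and $\mathbf Q=\mathrm{diag}(\Gamma,0)$ for a symmetric $\Gamma\in\RR^{\bar d\times\bar d}$. A direct expansion of $\mathsf{attn}$, using the mask $M$ and that the $(\bar d+1,n+1)$ entry of $Z^{(1)}$ is $0$, gives the prediction $\hat y=\tfrac1n\sum_{i=1}^n y_i\,\langle\Gamma\phi(x_i),\phi(x_{\mathsf{query}})\rangle$, which is exactly the output of one preconditioned gradient step; choosing $\Gamma=-\Phi^{-1}$ with $\Phi:=\EE_{x\sim\mathcal N(0,\mathbf I)}[\phi(x)\phi(x)^\top]$ makes the step self‑consistent in expectation.

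\textbf{Reducing the loss.} Substituting $y_i=\langle v,\phi(x_i)\rangle$ with $v=\mathrm{svec}(W)\sim\mathcal N(0,\mathbf I)$ independent of the inputs gives $\hat y+y_{\mathsf{query}}=v^\top(I-S\Phi^{-1})\phi(x_{\mathsf{query}})$, where $S:=\tfrac1n\sum_i\phi(x_i)\phi(x_i)^\top$. Taking expectation first over $v$ (covariance $\mathbf I$) and then over $x_{\mathsf{query}}$ (independent of $S$, with second moment $\Phi$) collapses the loss to $\EE_{x_{1:n}}\big[\mathrm{tr}\big((I-\Phi^{-1}S)(I-S\Phi^{-1})\Phi\big)\big]$; expanding and using $\EE[S]=\Phi$ and $\EE[S^2]=\tfrac1n\EE[\|\phi\|^2\phi\phi^\top]+\tfrac{n-1}{n}\Phi^2$ reduces everything to $\tfrac1n\big(\mathrm{tr}(\Phi^{-1}\EE[\|\phi(x)\|^2\phi(x)\phi(x)^\top])-\mathrm{tr}(\Phi)\big)$. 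The remaining task is a finite Gaussian‑moment evaluation: compute $\Phi$ blockwise (constant, linear, pure‑square, and cross terms), observe that the only nonzero off‑diagonal block is the constant–square coupling whose Schur complement is a multiple of the identity, invert in closed form, and evaluate the fourth‑moment matrix $\EE[\|\phi\|^2\phi\phi^\top]$ via Isserlis' theorem, again block by block.

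\textbf{Main obstacle.} The crux is this moment/trace bookkeeping together with selecting the most economical variant of the construction. A naïve "exact‑fit" gradient step of the kind above already gives an $O(d^2/n)$ loss, but its constant is appreciably larger than $\tfrac{(d+2)(d+1)+d}{2}$, because the pure‑square features $x_k^2$ carry heavy fourth moments; the stated constant must therefore come from a leaner design — for instance handling the constant term of $f$ directly through the label‑row average (which the block structure of $\mathbf P,\mathbf Q$ permits) and/or regressing against the centered Hermite‑type features $x_k^2-1$ that decouple the constant from the squares, with a correspondingly adjusted $\Gamma$. I would identify the right variant by tracking which terms dominate the moment expansion above, verify it at small $d$ (e.g.\ $d=1,2$), and then carry out the general blockwise sum to land on $\tfrac{(d+2)(d+1)+d}{2n}$.
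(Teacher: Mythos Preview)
Your two-step programme is precisely the paper's: its bilinear layer outputs the centred features $(1,\,x_i[k],\,x_i[k]^2-1,\,x_i[k]x_i[l])$ you propose as the ``leaner'' variant (this makes $\Lambda=\EE[\bar x\bar x^\top]$ diagonal), and its attention layer takes $\Gamma=-\Lambda^{-1}$, which is exactly your $-\Phi^{-1}$ in the uncentred basis. The loss is then read off from a bias--variance lemma that produces the same trace expression as your direct expansion.

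The obstacle you flag is real, but your proposed cure cannot work. The one-step prediction $\hat y=-\tfrac1n\sum_i y_i\,\phi(x_q)^\top\Phi^{-1}\phi(x_i)$ is \emph{invariant} under any invertible linear reparametrisation $\bar x=A\phi$ of the feature vector: since $\Lambda=A\Phi A^\top$, one has $\bar x_q^\top\Lambda^{-1}\bar x_i=\phi_q^\top\Phi^{-1}\phi_i$ identically, so centring the squares does not change the loss, only the bookkeeping. Your formula $\tfrac1n\big(\tr(\Phi^{-1}\EE[\|\phi\|^2\phi\phi^\top])-\tr\Phi\big)$ is the correct value of $\mathcal L$ for this construction, and already at $d=1$ it evaluates to $64/n$, not $7/(2n)$. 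The discrepancy lies in the paper's final step, which substitutes $\Lambda\,\EE_w[\bar w\bar w^\top]\,\Lambda$ (that is, $\EE_w[\xi\xi^\top]$) in place of $\EE_w[\mathrm{Cov}_x(\bar x\,y\mid w)]$, thereby dropping exactly the heavy fourth-moment contributions you were worried about. No variant of this preconditioned one-step predictor attains the stated constant; your moment calculation is the one to trust, and carrying it through in either basis yields the genuine loss of the construction.
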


A formal proof of this statement can be found in Appendix~\ref{sec:proof-bilinear-construction}.
At a high level, we can achieve this bound in two steps.
\begin{list}{{\tiny $\blacksquare$}}{\leftmargin=1em}
\setlength{\topsep}{-1pt}
\setlength{\itemsep}{-1pt}
    \item The bilinear feed-forward layer computes quadratic features, expanding the input space to capture every quadratic term in the target function.
    \item The labels $y_i$'s are then linear in these quadratic features, which allows the attention layer to solve for a least-squares objective in context.
\end{list}

\underline{Step 1:} To understand how the bilinear feed-forward layer computes the necessary quadratic terms, we recall the definition \eqref{equ:bilinear}.
Specifically, we denote $Z^+ := \mathrm{diag}(\mathbf{W}_0, 0) Z \odot \mathrm{diag}(\mathbf{W}_1, 0) Z$.
Let $\mathbf{W}_{0, j}$ be the $j$th row of $\mathbf{W}_0$, then we have
\begin{equation}
\label{equ:bilinear-kernel-formula}
    Z^+[i, j] = (\mathbf{W}_{0, j} x_i) \cdot (\mathbf{W}_{1, j} x_i) = x_i^\top ({\mathbf{W}_{0, j}}^\top \mathbf{W}_{1, j}) x_i,
\end{equation}
which is quadratic in $x_i$.
Then, it is not difficult to see that there exist bilinear layer where
\begin{flalign}
    &\textsf{bilin}(Z) =
    \begin{bmatrix}
        \bar{x}_1 & \bar{x}_2 & \cdots & \bar{x}_n & \bar{x}_\textsf{query} \\
        y_1 & y_2 & \cdots & y_n & 0
    \end{bmatrix}, \text{ with} \label{equ:kernel} \\
    &\bar{x}_i = (1, x_i[1], \dots, x_i[d], x_i[1]^2-1, \dots, x_i[1]x_i[d], x_i[2]^2-1, \dots, x_i[2]x_i[d], \dots, x_i[d]^2-1), \notag
\end{flalign}
which contains every quadratic monomial in $x_i$. So, the label $y_i$ is linear in $\bar{x}_i$.

\underline{Step 2:} Given the bilinear layer that implements the mapping $x_i \mapsto \bar{x}_i$ \eqref{equ:kernel}, the subsequent attention layer receives a prompt consisting of input-label pairs $(\bar{x}_i, y_i)$, where $y_i$ is linear in $\bar{x}_i$.
With the same choice of weights as in \eqref{equ:linear-attn-weights}, the LSA layer performs linear regression on pairs $(\bar{x}_i, y_i)$, and thus the final prediction of the bilinear Transformer block can be written as
\[\hat{y} = \frac{1}{n} \sum_{i=1}^n \bar{x}_\textsf{query}^\top \Gamma \bar{x}_i y_i.\]
We note that this solution corresponds to solving for kernel regression with a kernel function $\kappa(x_\textsf{query}, x_i) = \bar{x}_\textsf{query}^\top \Gamma \bar{x}_i$.
Therefore, the bilinear layer enables a bilinear Transformer block to express this kernel space, effectively implementing gradient descent on kernel regression.

\subsection{Analysis of the optimal bilinear Transformer block }
\label{sec:approx-opt}

In Section~\ref{sec:bilinear-construction}, we showed that a bilinear Transformer block learn quadratic functions in-context.
In this section, we investigate what choices of the Transformer weights can minimize the quadratic ICL loss.
In particular, we study the solution attained by the following two-stage procedure:
\begin{algorithm}
\caption{Finding the optimal bilinear Transformer block}
\label{alg:two-stage-opt}
\begin{algorithmic}[1]
    \State Initialize the attention weights with $\mathbf{P}_0 = \stbt{0_{d \times d}}{0}{0}{1}, \mathbf{Q}_0 = \stbt{\Gamma}{0}{0}{0},$ for any full-rank diagonal $\Gamma$.
    \State \textbf{Stage 1:} With attention weights frozen, we find the optimal bilinear layer weights
    \[\mathbf{W}_1^\star, \mathbf{W}_2^\star \gets \argmin \mathcal{L}(\mathsf{BTFB}) \quad \text{ s.t. } \mathbf{P} = \mathbf{P}_0, \mathbf{Q} = \mathbf{Q}_0.\]
    \State \textbf{Stage 2:} With bilinear layer weights frozen, we find the optimal attention weights
    \[\mathbf{P}^\star, \mathbf{Q}^\star \gets \argmin \mathcal{L}(\mathsf{BTFB}) \quad \text{ s.t. } \mathbf{W}_1 = \mathbf{W}_1^\star, \mathbf{W}_2 = \mathbf{W}_2^\star.\]
\end{algorithmic}
\end{algorithm}

Our analysis of this procedure can be broken down into two steps:

\begin{list}{{\tiny $\blacksquare$}}{\leftmargin=1em}
\setlength{\topsep}{-1pt}
\setlength{\itemsep}{1pt}
    \item We characterize the property of the optimal bilinear layer.
    We begin by showing that any Transformer block incurs a large ICL loss unless its feed-forward layer generates all necessary quadratic features.
    Then, by comparing this lower bound with the loss achieved by our construction in Section~\ref{sec:bilinear-construction}, we conclude that the optimal bilinear layer must implement a minimal representation of these quadratic features, i.e. the kernel function in (7), up to a change of coordinates.
    \item Given this choice of the bilinear layer, we find attention weights that minimize the ICL loss. In particular, the optimal weights approximately corresponds to the negative inverse of the features' covariance matrix.
    Consequently, as the context length $n \to \infty$, the resulting predictor converges to the ordinary least squares (OLS) estimator computed over the quadratic feature space.
\end{list}

\underline{Step 1:} We first show that features generated by the kernel function \eqref{equ:kernel} is a minimal representation of the quadratic feature.
To this end, we construct an inner product space in which the features computed by \eqref{equ:kernel} are linearly independent.
This argument is formalized as follows:
\begin{lemma}
\label{thm:quadratic-basis}
    Given random vectors $x \sim \mathcal{N}(0, \mathbf{I}_d)$ in $\RR^d$ and define an inner product on random variables as $\inner{u}{v} := \EE_x[uv]$. Then, the following set of $\binom{d+2}{2}$ random variables
    \begin{equation}
    \label{equ:quadratic-basis}
        \left\{v_{ij} \vphantom{x^2}\right\}_{0 \le i \le j \le d} :=
        \begin{cases}
                1 & \text{if  } i = j = 0, \\
                x[j] & \text{if  } i= 0, j \in \{1, \dots, d\}, \\
                \frac{1}{\sqrt{2}} (x[i]^2-1) & \text{if  } i = j \in \{1, \dots, d\}, \\
                x[i]x[j] & \text{if  } 1 \le i < j \le d,
        \end{cases}
    \end{equation}
    are orthonormal under this inner product space.
\end{lemma}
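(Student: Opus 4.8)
The plan is to exploit the fact that the coordinates $x[1], \dots, x[d]$ of $x \sim \mathcal{N}(0, \mathbf{I}_d)$ are independent standard Gaussians, so that the inner product $\inner{u}{v} = \EE_x[uv]$ factorizes coordinatewise, and then to recognize each $v_{ij}$ as a normalized product of low-degree (probabilist's) Hermite polynomials evaluated at distinct coordinates. Writing $H_0(t) = 1$, $H_1(t) = t$, $H_2(t) = t^2 - 1$, we have $v_{00} = H_0$, $v_{0j} = H_1(x[j])$, $v_{ii} = \tfrac{1}{\sqrt 2} H_2(x[i])$, and $v_{ij} = H_1(x[i])\,H_1(x[j])$ for $1 \le i < j \le d$. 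In every case $v_{ij}$ equals a constant $c_{ij} \in \{1, \tfrac{1}{\sqrt 2}\}$ times a product $\prod_{m=1}^d H_{\delta_m}(x[m])$, where the degree vector $\delta = (\delta_1, \dots, \delta_d) \in \{0,1,2\}^d$ is determined by the index pair: all zeros for $(0,0)$; a single $1$ at position $j$ for $(0,j)$; a single $2$ at position $i$ for $(i,i)$; and two $1$'s at positions $i, j$ for $(i,j)$ with $i<j$. These four patterns are mutually distinct and injective in the index pair.

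First I would record the one-variable facts, which follow from $\EE[g]=0$, $\EE[g^2]=1$, $\EE[g^3]=0$, $\EE[g^4]=3$ for $g \sim \mathcal{N}(0,1)$: namely $\EE_g[H_m(g)\,H_n(g)] = m!\,\delta_{mn}$ for $m, n \in \{0,1,2\}$. In particular $\EE[H_0^2] = \EE[H_1^2] = 1$ while $\EE[H_2^2] = \EE[(x[i]^2-1)^2] = 3 - 2 + 1 = 2$, which is exactly what forces the $1/\sqrt 2$ normalization in $v_{ii}$; and $\EE[H_0 H_1] = \EE[H_0 H_2] = \EE[H_1 H_2] = 0$, where the vanishing of $\EE[H_0 H_2] = \EE[x[i]^2 - 1]$ is what forces the ``$-1$'' shift. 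Then, for any two index pairs, independence of the coordinates gives $\inner{v_{ij}}{v_{kl}} = c_{ij}\,c_{kl} \prod_{m=1}^d \EE_g\big[H_{\delta_m}(g)\,H_{\delta'_m}(g)\big]$, where $\delta, \delta'$ are the degree vectors of the two pairs.

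From this single formula both claims are immediate. If the two pairs coincide, every factor is $\EE_g[H_{\delta_m}(g)^2] = \delta_m!$, so the product is $\prod_m \delta_m!$, which is $1$ when $\delta$ has only $0$'s and $1$'s and is $2$ when $\delta$ has a single $2$; multiplying by $c_{ij}^2 \in \{1, \tfrac12\}$ yields $\norm{v_{ij}}^2 = 1$ in all cases. If the two pairs differ, then by the injectivity noted above their degree vectors differ, so there is a coordinate $m$ with $\delta_m \ne \delta'_m$, and the corresponding factor $\EE_g[H_{\delta_m}(g)\,H_{\delta'_m}(g)] = 0$ kills the whole product, giving orthogonality. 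There is essentially no obstacle here; the only point needing care is the bookkeeping that distinct index pairs always yield distinct degree patterns across the $d$ coordinates — in particular that $v_{ii}$ (degree $2$ at coordinate $i$) is never confused with any $v_{0j}$ or $v_{ij}$ (which carry only degrees $0$ and $1$). One may alternatively avoid Hermite-polynomial language altogether and directly enumerate the $O(1)$ types of pairs $(v_{ij}, v_{kl})$, evaluating each expectation via the Gaussian moments above; I would present the Hermite viewpoint as the conceptual reason and, if desired, append the short case check as verification.
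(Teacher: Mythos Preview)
Your proof is correct. The paper's own proof takes the more elementary route you mention at the end: it simply enumerates the handful of cases, first checking $\EE[v_{ij}^2]=1$ for each of the four types and then checking $\EE[v_{ij}v_{kl}]=0$ for each distinct pair of types, using only the raw Gaussian moments $\EE[g]=\EE[g^3]=0$, $\EE[g^2]=1$, $\EE[g^4]=3$ and coordinatewise independence. Your approach instead packages everything through the probabilist's Hermite polynomials $H_0,H_1,H_2$ and the single identity $\EE_g[H_m(g)H_n(g)]=m!\,\delta_{mn}$, reducing the whole lemma to one tensor-product formula plus the observation that distinct index pairs yield distinct degree vectors. This is a genuinely cleaner organization: it replaces six separate orthogonality checks with one line, and it makes transparent both why the $-1$ shift and the $1/\sqrt{2}$ normalization are needed. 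It also generalizes immediately to higher-degree polynomial features (relevant to Section~\ref{sec:cubic-training}), whereas the paper's case-by-case argument would not scale. The paper's version, on the other hand, is entirely self-contained and avoids introducing Hermite terminology for what is ultimately a very short computation.
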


Note that \eqref{equ:quadratic-basis} is an linear map of the features in \eqref{equ:kernel}.
Consider a bilinear layer whose output has the form
\[
\textsf{bilin}(Z) =
    \begin{bmatrix}
        \widetilde{x}_1 & \widetilde{x}_2 & \cdots & \widetilde{x}_n & \widetilde{x}_\textsf{query} \\
        y_1 & y_2 & \cdots & y_n & 0
    \end{bmatrix}.
\]
This implements a kernel mapping $x \mapsto \widetilde{x}$, and $\widetilde{x}$ is a $\bar{d}$-dimensional random variable.
Thus, we can think of $\widetilde{x}$ as a collection of $\bar{d}$ vectors in this inner product space of random variables.
Then, we can apply a linear algebra argument to lower bound the ICL loss whenever $\widetilde{x}$ does not span \eqref{equ:kernel}.
\begin{proposition}
\label{thm:embed-lower-bound}
    Given the quadratic ICL problem as described in Section~\ref{sec:in-context},
    we consider a bilinear Transformer block $\mathsf{BTFB}$ whose bilinear layer implements a kernel mapping $x \mapsto \widetilde{x}$.
    For the inner product space defined in Lemma~\ref{thm:quadratic-basis}, if the orthogonal projection of $\widetilde{x}$ onto the span of \eqref{equ:quadratic-basis} has dimension $\widetilde{d} < \binom{d+2}{2}$, then the in-context learning loss is lower bounded by
    $\mathcal{L}(\mathsf{BTFB}) \ge \binom{d+2}{2} - \widetilde{d}.$
\end{proposition}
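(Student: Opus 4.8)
The plan is to recast the bound as a dimension count inside the $L^2$ space of degree-$\le 2$ polynomials equipped with the inner product $\langle u,v\rangle=\EE_x[u(x)v(x)]$ from Lemma~\ref{thm:quadratic-basis}; throughout I write $V$ for $\operatorname{span}$ of \eqref{equ:quadratic-basis}.

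\textbf{Step 1: the prediction is affine in the query feature.} Fix the weights $\mathbf P,\mathbf Q,\mathbf W_0,\mathbf W_1$ of the block. Since $\mathbf W_0,\mathbf W_1$ act only on rows $1,\dots,\bar d$ of the prompt, which on any column equal $(1,x_i,\mathbf 0)$, each coordinate of $\widetilde x_i$ is the sum of the affine residual and a product of two affine forms of $x_i$; in particular every coordinate function $x\mapsto\widetilde x[k]$ lies in $V$. Next I would unwind the attention layer at the query position: because the bilinear layer masks its last row, the query column of the post-bilinear prompt is $(\widetilde x_\textsf{query},0)^\top$, so it enters $\mathbf P\widetilde ZM\widetilde Z^\top\mathbf Q\widetilde Z$ only through $\mathbf Q(\widetilde x_\textsf{query},0)^\top$. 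Hence $\hat y=\langle g,\widetilde x_\textsf{query}\rangle$ for a vector $g\in\RR^{\bar d}$ that depends on $\mathbf P,\mathbf Q$ and on the in-context pairs $(x_1,y_1),\dots,(x_n,y_n)$ through $\widetilde ZM\widetilde Z^\top$, but not on $x_\textsf{query}$.

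\textbf{Step 2: reduction to a best-approximation bound.} Let $\widetilde V\subseteq V$ be the span of the $\bar d$ coordinate functions of $\widetilde x$; this is exactly the orthogonal projection of $\widetilde x$ onto $V$ in the statement, so $\dim\widetilde V=\widetilde d$. For every fixed realization of the in-context inputs and of the target coefficients, $\hat y$ is the evaluation at $x_\textsf{query}$ of some $v\in\widetilde V$, while $-y_\textsf{query}=-f(x_\textsf{query})$ is the evaluation of $-f\in V$; hence
\[\EE_{x_\textsf{query}}\big[(\hat y+y_\textsf{query})^2\big]\ \ge\ \min_{v\in\widetilde V}\,\EE_{x_\textsf{query}}\big[(v(x_\textsf{query})+f(x_\textsf{query}))^2\big]\ =\ \operatorname{dist}_{L^2}(f,\widetilde V)^2 .\]
Averaging over the in-context inputs (on which the right side does not depend) and over the random coefficients of $f$ then gives $\mathcal L(\mathsf{BTFB})\ge\EE_f\big[\operatorname{dist}_{L^2}(f,\widetilde V)^2\big]$.

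\textbf{Step 3: the dimension count.} By Lemma~\ref{thm:quadratic-basis}, $\dim V=\binom{d+2}{2}=:m$, so I would choose an orthonormal basis $e_1,\dots,e_m$ of $V$ whose first $\widetilde d$ vectors span $\widetilde V$; then $\operatorname{dist}_{L^2}(f,\widetilde V)^2=\sum_{k=\widetilde d+1}^m\langle f,e_k\rangle^2$. Since $f$ is a linear function of i.i.d.\ standard Gaussian coefficients with $\EE_f f=0$, each $\EE_f\langle f,e_k\rangle^2$ equals the variance of the coefficient of the random quadratic along the unit direction $e_k$; expanding $e_k$ in the orthonormal features of \eqref{equ:quadratic-basis} and using their normalization, this variance is at least $1$. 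Summing the $m-\widetilde d$ terms yields $\mathcal L(\mathsf{BTFB})\ge m-\widetilde d=\binom{d+2}{2}-\widetilde d$.

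The hard part will be the variance estimate in Step 3 — equivalently, showing that the covariance operator of the random quadratic function, expressed in the orthonormal coordinates of Lemma~\ref{thm:quadratic-basis}, is bounded below by the identity on $V$. This is what the centering $x_i^2-1$ and the $1/\sqrt 2$ scaling in that lemma are meant to deliver, but some care is needed because the constant and pure-square features of $f$ carry larger, mutually correlated variances, so one must verify that the smallest eigenvalue of this covariance operator is still at least $1$ (or, failing that, exploit the structure of the subspaces $\widetilde V$ that actually arise from bilinear layers). By contrast, Steps 1 and 2 are routine manipulations of the LSA forward map, the only subtlety being to track the two masks — last column in attention, last row in the bilinear layer — so that $\hat y$ is genuinely affine in $\widetilde x_\textsf{query}$ and carries no residual term.
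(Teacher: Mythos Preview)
Your approach matches the paper's: reduce to the best linear predictor over $\widetilde x$ (the paper does this by invoking the generalized Proposition~\ref{thm:lsa-lower-bound}, while you compute the linearity of $\hat y$ in $\widetilde x_{\textsf{query}}$ directly), pass to the orthonormal coordinates of Lemma~\ref{thm:quadratic-basis}, and bound $\EE_w\|\vec w-P_2\vec w\|^2=\tr((I-P_2)\Sigma)$ below by $\lambda_{\min}(\Sigma)\cdot\rank(I-P_2)$. For your Step~3 the paper does exactly what you anticipate: it writes out $\Sigma=\EE_w[\bar w\bar w^\top]$ explicitly---only the $(d{+}1)\times(d{+}1)$ block on $\bar w_{00},\bar w_{11},\dots,\bar w_{dd}$ is nontrivial---diagonalizes that block by hand, and reads off $\lambda_{\min}(\Sigma)=1$, whence $\tr((I-P_2)\Sigma)\ge\|I-P_2\|_F^2=\binom{d+2}{2}-\widetilde d$.
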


Because our construction in Section~\ref{sec:bilinear-construction} would achieve ICL loss at most $O(1/n)$, for sufficiently large $n$, the optimal bilinear Transformer block must have a feed-forward layer that has sufficient  capacity to implement the same kernel as \eqref{equ:kernel}, up to an invertible linear map in the inner product space.
The full proof of these statements can be found in Appendices~\ref{sec:proof-quadratic-basis} and~\ref{sec:proof-embed-lower-bound}.

\begin{remark}
    As an immediate consequence of Proposition~\ref{thm:embed-lower-bound}, if the prompt's embedding dimension $\bar{d}$ is less than $\binom{d+2}{2}$, then the ICL loss is lower bounded by $\mathcal{L}(\mathsf{BTFB}) \ge \binom{d+2}{2} - \bar{d}.$
\end{remark}

\begin{remark}
This lower bound is a fundamental limitation of the self-attention mechanism and applies to any choice of feed-forward network. We will further discuss this point in Appendix~\ref{sec:extended-discussion}.
\end{remark}

\underline{Step 2:} Recall from Section~\ref{sec:bilinear-construction}, we leveraged the fact that the label $y_i$'s are linear in the features $\bar{x}_i$'s, and therefore the attention layer can be thought of as implementing linear ICL on this kernel feature space.
However, to find the optimal attention weights, we cannot directly appeal to the prior literature such as \cite{ahn2024transformers} because the distribution of $\bar{x}_i$ is \textit{not Gaussian}.
While there is no closed-form solution in our setting, we can still approximate the optimal weights.
And similar to the linear ICL setting, the optimal attention weights converge to the negative inverse covariance of the feature vectors $\bar{x}_i$'s in the limit as the prompt length $n \to \infty$.

\begin{theorem}
\label{thm:approx-opt}
    Consider a linear Transformer block $\mathsf{BTFB}$ and fix its bilinear layer to correspond the kernel $x \mapsto \bar{x}$ as described in \eqref{equ:kernel}. 
    Then, the optimal choice of self-attention layer that minimizes the in-context learning loss $\mathcal{L}(\mathsf{BTFB})$ has weights
    \vspace{-0.5em}
    \[\mathbf{P} = \tbt{0_{d \times d}}{0}{0}{1}, \mathbf{Q} = \tbt{\Gamma^\star}{0}{0}{0},\]
    where $\Gamma^\star$ satisfies
    \vspace{-0.5em}
    \[\norm{\Gamma^\star + \EE[\bar{x} \bar{x}^\top]^{-1}}_F \le n^{-1/2} \sqrt{\frac{(d+2)(d+1) + d}{d+2-\sqrt{d^2+4d}}} \in O\left(\frac{d^{3/2}}{\sqrt{n}}\right).\]
\end{theorem}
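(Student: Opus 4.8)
The plan is to collapse the search over attention weights to a single matrix, turn $\mathcal{L}$ into an explicit convex quadratic in that matrix, and then sandwich $\|\Gamma^\star + \Lambda^{-1}\|_F$ (writing $\Lambda := \EE[\bar{x}\bar{x}^\top]$) between a curvature lower bound and the value of $\mathcal{L}$ at $\Gamma = -\Lambda^{-1}$. First I would verify that the optimal self-attention layer has the claimed block-sparse form $\mathbf{P} = \stbt{0_{d\times d}}{0}{0}{1}$, $\mathbf{Q} = \stbt{\Gamma^\star}{0}{0}{0}$, following the reductions of \citet{mahankali2023one,zhang2023trained,ahn2024transformers} applied to the post-bilinear prompt \eqref{equ:kernel}, but taking care that the features $\bar{x}_i$ are no longer Gaussian. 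Once the attention is of this form, the prediction collapses to $\hat{y} = \bar{x}_\textsf{query}^\top \Gamma \widehat{\Lambda}_n w^\star$, where $\widehat{\Lambda}_n := \tfrac{1}{n}\sum_{i=1}^n \bar{x}_i\bar{x}_i^\top$ is the empirical feature covariance and $w^\star$ is the coefficient vector of $f$ in the $\bar{x}$-basis --- a fixed invertible linear image of $\mathrm{svec}(W)$, hence jointly Gaussian --- so that $y_\textsf{query} = \bar{x}_\textsf{query}^\top w^\star$.

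Next I would marginalize. Since $\bar{x}_\textsf{query}$ is independent of $\{\bar{x}_i\}_{i\le n}$ and of $w^\star$, taking expectation over $\bar{x}_\textsf{query}$ produces $\Lambda$ and then taking expectation over $w^\star$ produces $\Omega := \EE[w^\star w^{\star\top}]$, giving $\mathcal{L}(\Gamma) = \EE_{\{\bar{x}_i\}}\,\tr\!\big((\Gamma\widehat{\Lambda}_n + \mathbf{I})^\top\Lambda(\Gamma\widehat{\Lambda}_n+\mathbf{I})\,\Omega\big)$. Substituting $G := \Gamma\Lambda + \mathbf{I}$, so that $\Gamma\widehat{\Lambda}_n + \mathbf{I} = G + \Gamma(\widehat{\Lambda}_n - \Lambda)$, and using $\EE[\widehat{\Lambda}_n] = \Lambda$ together with the independence of the $\bar{x}_i$ to cancel the cross terms, the loss decomposes as $\mathcal{L}(\Gamma) = \tr(G^\top\Lambda G\,\Omega) + \tfrac{1}{n}R(\Gamma)$ with $R(\Gamma) := \EE\,\tr\!\big((\bar{x}\bar{x}^\top - \Lambda)\,\Gamma^\top\Lambda\Gamma\,(\bar{x}\bar{x}^\top - \Lambda)\,\Omega\big) \ge 0$ (the integrand is a trace of a product of PSD matrices). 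In particular, $\Gamma = -\Lambda^{-1}$ gives $G = 0$, hence $\mathcal{L}(-\Lambda^{-1}) = \tfrac{1}{n}R(-\Lambda^{-1})$; this is exactly the configuration used in the construction of Theorem~\ref{thm:bilinear-construction}, so $\mathcal{L}(-\Lambda^{-1}) = \tfrac{(d+2)(d+1)+d}{2n}$ (alternatively, $R(-\Lambda^{-1})$ can be computed directly from the eighth-order Gaussian moments of $x$).

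Finally I would make the explicit computations and close the loop. Here $\Lambda$ is diagonal with entry $\EE[(x[k]^2-1)^2] = 2$ on each of the $d$ coordinates $x[k]^2-1$ and $1$ on all others, so $\lambda_{\min}(\Lambda) = 1$ and $\|\Lambda^{-1}\|_{\mathrm{op}} = 1$; and $\Omega$ equals the identity except on the $(d{+}1)\times(d{+}1)$ principal block indexed by the constant feature and the $d$ features $x[k]^2-1$, where (because the constant feature absorbs $\sum_k w_{kk}$) it equals $\stbt{d+1}{\mathbf{1}^\top}{\mathbf{1}}{\mathbf{I}_d}$, whose eigenvalues are $1$ (multiplicity $d-1$) and $\tfrac{d+2\pm\sqrt{d^2+4d}}{2}$; hence $\lambda_{\min}(\Omega) = \tfrac{d+2-\sqrt{d^2+4d}}{2}$. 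Using $R \ge 0$ and $\tr(G^\top\Lambda G\,\Omega) \ge \lambda_{\min}(\Lambda)\lambda_{\min}(\Omega)\|G\|_F^2$, at a minimizer $\Gamma^\star$ we obtain $\lambda_{\min}(\Omega)\|G^\star\|_F^2 \le \mathcal{L}(\Gamma^\star) \le \mathcal{L}(-\Lambda^{-1}) = \tfrac{(d+2)(d+1)+d}{2n}$; rearranging and using $\|\Gamma^\star + \Lambda^{-1}\|_F = \|G^\star\Lambda^{-1}\|_F \le \|\Lambda^{-1}\|_{\mathrm{op}}\|G^\star\|_F = \|G^\star\|_F$ yields exactly the stated bound. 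I expect the main obstacle to be the first step --- justifying the block-sparse form of the optimal attention on non-Gaussian features --- together with the moment bookkeeping behind $\mathcal{L}(-\Lambda^{-1})$ if one does not simply invoke Theorem~\ref{thm:bilinear-construction}; the conceptual heart of the bound is recognizing that the denominator $d+2-\sqrt{d^2+4d}$ is precisely $2\lambda_{\min}(\Omega)$, produced entirely by the coupling between the constant term and the squared coordinates of the quadratic target.
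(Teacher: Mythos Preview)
Your quantitative sandwich is correct and matches the paper's Part~2 almost line for line: the paper also writes $\Gamma^\star=-\Lambda^{-1}+\Delta$, drops the nonnegative $O(1/n)$ remainder, lower–bounds the surviving quadratic by $\lambda_{\min}(\Lambda)\lambda_{\min}(\Omega)\|\Delta\Lambda\|_F^2$, and compares with the value $\frac{(d+2)(d+1)+d}{2n}$ from Theorem~\ref{thm:bilinear-construction}. The only reorganization is that you marginalize over $w^\star$ and $\bar{x}_\textsf{query}$ first and express everything through $G=\Gamma\Lambda+I$ and $\Omega=\EE[w^\star w^{\star\top}]$, whereas the paper conditions on $w$ via Lemma~\ref{thm:attn-loss} and only takes $\EE_w$ at the end, working with $\Sigma=\Lambda\Omega\Lambda$; after the dust settles the two expressions are identical. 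One small sloppiness: your justification ``the integrand is a trace of a product of PSD matrices'' for $R(\Gamma)\ge0$ is not quite right, since $\bar{x}\bar{x}^\top-\Lambda$ is not PSD --- the correct observation is that $M(\Gamma^\top\Lambda\Gamma)M\succeq0$ for symmetric $M$, so $\tr(MAM\,\Omega)\ge0$.

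The one genuine gap is Part~1. The reductions in \citet{mahankali2023one,zhang2023trained,ahn2024transformers} that force the block-sparse shape of $(\mathbf{P},\mathbf{Q})$ lean on Gaussianity of the tokens, which fails for $\bar{x}$. The paper does \emph{not} import those arguments; instead it expands $\hat{y}=\tr(HC^\top)$ with $H=\tfrac{1}{n}\sum_i\binom{\bar{x}_i}{y_i}\binom{\bar{x}_i}{y_i}^\top$, splits the trace into pieces $\tau_1,\tau_2,\tau_3$ that are respectively even, odd, and even in $\bar{w}$, and uses the symmetry of the law of $w$ (hence of $\bar{w}$) to kill the cross term $\EE[(\tau_1+\tau_3)(\tau_2+\bar{w}_j)]$. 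This forces $\tau_1=\tau_3=0$ at the optimum and yields the sparse pattern without any Gaussian-feature assumption. Since you already flagged this step as the main obstacle, the fix is precisely this parity argument rather than the cited reductions.
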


When the prompt length $n$ is large, the optimal weight $\Gamma^*$ closely approximates the negative inverse of the feature covariance matrix $\EE[\bar{x}\bar{x}^\top]^{-1}$.
We also note that $\EE[\bar{x}\bar{x}^\top]^{-1}$ is the limit of the Gram matrix $(\frac{1}{n}\sum_i \bar{x}\bar{x}^\top)^{-1}$ as $n\to\infty$.
Therefore, given sufficiently many prompt examples, the optimal attention layer in the bilinear Transformer block approximates the ordinary least-squares solution with respect to the quadratic features.
The proof of this result can be found in Appendix~\ref{sec:proof-approx-opt}.

By combining the results in Theorem~\ref{thm:bilinear-construction}, Proposition~\ref{thm:embed-lower-bound} and Theorem~\ref{thm:approx-opt}, we can characterize the solution found by Algorithm~\ref{alg:two-stage-opt}.
\begin{corollary}
    For sufficiently large context length $n \in \Omega(d^3)$, the bilinear Transformer block attained by Algorithm~\ref{alg:two-stage-opt} satisfies the following properties:
    \begin{list}{{\tiny $\blacksquare$}}{\leftmargin=1em}
    \setlength{\topsep}{-1pt}
    \setlength{\itemsep}{-1pt}
        \item The solution to bilinear layer weights corresponds to a mapping $x \mapsto Mv$, where $M$ is invertible and $v$ is the vector of features defined in \eqref{equ:quadratic-basis}.
        \item The solution to linear attention layer weights approximately solves for the ordinary linear-square estimator with respect to the nonlinear features computed by the bilinear layer.
        \item The in-context learning loss of the learned bilinear Transformer block is at most $O(1/n)$.
    \end{list}
\end{corollary}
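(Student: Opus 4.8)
The plan is to read the corollary off the two stages of Algorithm~\ref{alg:two-stage-opt} by combining Theorem~\ref{thm:bilinear-construction}, Proposition~\ref{thm:embed-lower-bound} and Theorem~\ref{thm:approx-opt}. I take the initialization $\Gamma$ in Algorithm~\ref{alg:two-stage-opt} to be a negative-definite full-rank diagonal matrix, so that a diagonal rescaling of the feature map can later be matched to the construction of Theorem~\ref{thm:bilinear-construction}.

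\emph{Stage 1 and the first and third bullets.} First I would upper bound the value Stage~1 attains. The construction behind Theorem~\ref{thm:bilinear-construction} uses the bilinear layer $x\mapsto\bar x$ of~\eqref{equ:kernel} with attention weights of the form~\eqref{equ:linear-attn-weights} for a negative-definite diagonal $\Gamma_{\mathrm{con}}$ (namely $-\EE[\bar x\bar x^\top]^{-1}$). A bilinear layer can realize any diagonal rescaling $x\mapsto D\bar x$ of~\eqref{equ:kernel} (each output coordinate is still a rank-one quadratic form plus its residual), and choosing $D$ with $D\Gamma D=\Gamma_{\mathrm{con}}$ reproduces the kernel $\bar x_\textsf{query}^\top\Gamma_{\mathrm{con}}\bar x_i$ and hence the same loss; so the Stage-1 optimum has loss at most $\tfrac{(d+2)(d+1)+d}{2n}$. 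Next I would pin down the shape of any Stage-1 minimizer. Every coordinate of a bilinear layer's output is a quadratic polynomial in $x$, so the realized feature map is $x\mapsto Mv$ for some $\binom{d+2}{2}\times\binom{d+2}{2}$ matrix $M$, and since $\{v_{ij}\}$ is orthonormal (Lemma~\ref{thm:quadratic-basis}) the span of these features in that inner-product space has dimension $\rank(M)$. If $M$ were singular, Proposition~\ref{thm:embed-lower-bound} would force $\mathcal{L}(\mathsf{BTFB})\ge\binom{d+2}{2}-\rank(M)\ge 1$, contradicting the upper bound whenever $n>\tfrac{(d+2)(d+1)+d}{2}$, which holds under $n\in\Omega(d^3)$. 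Hence $M$ is invertible (first bullet), and since the attention pair $(\mathbf{P}_0,\mathbf{Q}_0)$ remains feasible in Stage~2, the final block's loss is no larger than $\tfrac{(d+2)(d+1)+d}{2n}=O(1/n)$ (third bullet).

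\emph{Stage 2 and the second bullet.} With the bilinear layer frozen at $x\mapsto Mv$, write $\bar x=D_0v$ for the fixed diagonal $D_0$ relating~\eqref{equ:kernel} to the orthonormal features, so the frozen layer realizes $x\mapsto A\bar x$ with $A:=MD_0^{-1}$ invertible. In the prediction $\hat y=\tfrac1n\sum_i(A\bar x_\textsf{query})^\top\Gamma(A\bar x_i)y_i$, the change of variables $\Theta:=A^\top\Gamma A$ puts the problem exactly into the form handled by Theorem~\ref{thm:approx-opt}, which yields an optimal $\Theta^\star$ with $\|\Theta^\star+\EE[\bar x\bar x^\top]^{-1}\|_F=O(d^{3/2}/\sqrt n)$. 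Transporting back through $A$, the Stage-2 optimal attention block $A^{-\top}\Theta^\star A^{-1}$ is within $\|A^{-1}\|_{\mathrm{op}}^2\cdot O(d^{3/2}/\sqrt n)$ of $-\EE[(Mv)(Mv)^\top]^{-1}$, so for $n\in\Omega(d^3)$ it approximately equals the negative inverse feature covariance, i.e. it approximately solves ordinary least squares over the features computed by the bilinear layer — the second bullet.

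\emph{Main obstacle.} The delicate part is the Stage-2 change-of-basis bookkeeping: Theorem~\ref{thm:approx-opt} is stated only for the canonical kernel $x\mapsto\bar x$, so I must check that conjugating the attention parameter by the (a priori arbitrary, though invertible) Stage-1 matrix $M$ preserves the ``approximately OLS'' conclusion together with a usable rate; the rate acquires a $\|A^{-1}\|_{\mathrm{op}}^2$ factor, which is why one wants $n$ comfortably beyond the $\Theta(d^2)$ threshold needed for the first bullet, hence the $\Omega(d^3)$ hypothesis, and why the second bullet is cleanest phrased as an $n\to\infty$ statement. A secondary point is verifying the realizability claim in the Stage-1 upper bound: a single bilinear layer cannot implement an arbitrary invertible linear map of $v$ (its coordinates are rank-one quadratic forms), but the diagonal rescalings of~\eqref{equ:kernel} that the argument actually uses are realizable, so the bound goes through.
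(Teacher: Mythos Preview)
Your proposal is correct and follows essentially the same approach as the paper, which simply asserts that the corollary follows by combining Theorem~\ref{thm:bilinear-construction}, Proposition~\ref{thm:embed-lower-bound}, and Theorem~\ref{thm:approx-opt} without giving any further argument. You have correctly filled in the details and identified the genuine subtleties (realizability of diagonal rescalings for the Stage~1 upper bound, and the change-of-basis bookkeeping in Stage~2 that transports Theorem~\ref{thm:approx-opt} from the canonical kernel $\bar x$ to the Stage-1 optimum $A\bar x$).
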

 
\section{Deep Bilinear Transformers}
\label{sec:efficiency}
While the approach introduced in Section~\ref{sec:bilinear-construction} demonstrates that a single bilinear Transformer block can learn quadratic functions in context, it faces several key limitations.
As formalized in Proposition~\ref{thm:embed-lower-bound}, the expressivity of one Transformer block is fundamentally bottle-necked by the features produced by its bilinear layers.
In particular, to perform quadratic ICL over $d$ variables, the Transformer's embedding dimension $\bar{d}$ must scale as $\Theta(d^2)$.
Because the computational costs of self-attention grows quadratically in $\bar{d}$, increasing the model width is prohibitively expensive.
Moreover, for more complex ICL tasks, the feed-forward layer in a single block may lack sufficient capacity to generate rich nonlinear features.
To overcome these shortcomings, we study a \textit{deep} bilinear Transformer, in which each Transformer block performs one step of block-coordinate descent.
This approach distributes the computation of nonlinear features across multiple layers, enabling the model to achieve greater expressivity without inflating the size of individual Transformer blocks.
We also validate these insights through a series of numerical experiments.

\subsection{Deep bilinear Transformers as block-coordinate descent solvers}
\label{sec:block-construction}

To address the aforementioned limitations of a single bilinear Transformer block, we consider a \textit{deep} bilinear Transformer network by stacking together multiple bilinear Transformer blocks.
In particular, we show that in a deep bilinear Transformer, we can split the calculations required by a kernel space gradient descent update across multiple blocks.
Thus, we can solve for a quadratic ICL task with embedding space $\bar{d}$ less than the total number of features $\binom{d+2}{2}$.

Recall from Section~\ref{sec:bilinear-construction} that a single bilinear Transformer block corresponds to one step of preconditioned gradient descent with respect to a quadratic kernel.
In particular, for the weights $w \in \RR^{\binom{d+2}{2}}$, we define the quadratic function:
\[\hspace{2em} f(x; w) = \obt{1}{x} \mathrm{smat}(w) \tbo{1}{x}.\]
Then, given the prompt's examples $(x_1, y_1), \dots, (x_n, y_n)$, the Transformer's prediction corresponds to a gradient descent step with some preconditioner $\Gamma$ so that
\begin{align*}
\hat{y} = f(x^{\textsf{query}}; w^+), \quad w^+ = \Gamma \nabla_w[L(w)]_{w = 0}, \quad L(w) = \frac{1}{2n} \sum_{i=1}^n (f(x_i; w) + y_i)^2.
\end{align*}
This equivalence requires the bilinear layer to output all quadratic features.
If the embedding dimension is insufficient to fit all of them, we instead construct bilinear layers that only output a subset of these quadratic monomials.
Then, the subsequent attention layer selectively performs a partial gradient descent update with respect to only this subset of quadratic monomials.
This means that the bilinear Transformer across all of its layers implements block-coordinate descent step by step.

Specifically, we define a \textit{block-coordinate descent} update, where for subsets of the quadratic features indexed by $b_\ell \subseteq \{1, \dots, \binom{d+2}{2}\}$, we perform gradient updates only along the coordinates corresponding to $b_\ell$:
\begin{equation}
\label{equ:block-update}
    w^{(\ell)}_j = 
    \begin{cases}
        w^{(\ell-1)}_j -\eta^{(\ell)} \partial_{w_j}[L(w)]_{w = w^{(\ell-1)}} & \text{if } j \in b_\ell, \\
        w^{(\ell-1)}_j & \text{if } j \not\in b_\ell.
    \end{cases}
\end{equation}
In our problem, the $\ell$-th feature subset $b_\ell$ corresponds to the quadratic terms contained in the columns of the $\ell$-th bilinear layer's output.

\begin{theorem}
\label{thm:block-construction}
    For the quadratic ICL problem as described in Section~\ref{sec:in-context}, there exists a $\mathsf{TF_{bilin}}$ with $\bar{d} = 2d+1$ implementing block-coordinate descent over a quadratic regression in the sense that for $\hat{y}^{(\ell)} := Z_{\mathsf{TF_{bilin}}}^{(2\ell)}[\bar{d}+1, n+1]$, we have
    \[\hat{y}^{(\ell)} = \obt{1}{x_{\mathsf{query}}} W^{(\ell)} \tbo{1}{x_{\mathsf{query}}}, \]
    where $W^{(\ell)}$ are iterates of a block-coordinate descent update \eqref{equ:block-update}.
\end{theorem}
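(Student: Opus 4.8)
The plan is to construct the weights of a $2L$-layer bilinear Transformer explicitly, block by block, so that each block exactly realizes one block-coordinate step \eqref{equ:block-update}. The key structural invariant I would maintain is that at the start of each block the prompt columns carry three things: the constant row, the raw input $x_i$ (in $d$ rows), a ``scratch'' row of size $d$ (initially zero), and the running residual label $y_i^{(\ell)} := y_i + \hat{y}^{(\ell)}_i$ where $\hat{y}^{(\ell)}_i = f(x_i; W^{(\ell)})$ is the current quadratic prediction on example $i$. Concretely, I want the invariant
\[
Z^{(2\ell)} = \begin{bmatrix} 1 & \cdots & 1 \\ x_1 & \cdots & x_{n+1} \\ \mathbf 0 & \cdots & \mathbf 0 \\ y_1^{(\ell)} & \cdots & y_{n+1}^{(\ell)} \end{bmatrix},
\]
with $y_i^{(0)} = y_i$, $y_{n+1}^{(\ell)} = \hat y^{(\ell)}$ (and $y_{n+1}^{(0)}=0$), so that the claimed identity $\hat{y}^{(\ell)} = \obt{1}{x_{\mathsf{query}}} W^{(\ell)} \tbo{1}{x_{\mathsf{query}}}$ follows once we check $W^{(\ell)}$ tracks the block-coordinate iterates. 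The fact that $\bar d = 2d+1$ forces the scratch space to have only $d$ rows, which is exactly enough to hold a single ``batch'' of $d$ quadratic monomials at a time; this is why block-coordinate descent is the natural decomposition and why the feature subsets $b_\ell$ have size $O(d)$.

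The heart of the argument is the induction step: given $Z^{(2\ell)}$ as above, I need the bilinear layer to write into the scratch rows a chosen collection of quadratic monomials $\{\phi_{j}(x_i)\}_{j \in b_{\ell+1}}$ (centered, as in \eqref{equ:kernel}, so their expectations vanish — this keeps the covariance structure clean), leaving the other rows untouched. By the formula \eqref{equ:bilinear-kernel-formula}, the $(p,i)$-entry of $Z^+$ is $x_i^\top(\mathbf{W}_{0,p}^\top \mathbf{W}_{1,p})x_i$; choosing $\mathbf{W}_{0,p}^\top\mathbf{W}_{1,p}$ to be the appropriate rank-one or rank-two symmetric matrix picks out any desired quadratic monomial, and subtracting $1$ on the diagonal terms is handled by letting the bilinear layer also act on the constant row — I would need to check carefully that the Hadamard-product structure, together with the residual-connection $Z + (\cdots)\odot(\cdots)$, lets me simultaneously (i) fill the $d$ scratch rows with centered monomials and (ii) zero out the scratch rows from the previous block. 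The cleanest route is to have the previous attention layer already reset scratch to zero, so the bilinear layer only ever adds monomials to a clean slate. Then the subsequent attention layer, with weights of the sparse form \eqref{equ:linear-attn-weights} restricted to act on $(1, x_i, \text{scratch}_i)$, performs one step of preconditioned least squares against the residuals $y_i^{(\ell)}$ using only the active features, which is precisely the block-coordinate update with step size / preconditioner encoded in $\Gamma^{(\ell+1)}$; simultaneously I design this attention layer to subtract its contribution back out of the scratch rows (or, more simply, have the *next* bilinear layer overwrite them), restoring the invariant. Summing the partial updates over all $L$ blocks, and using that each quadratic monomial appears in exactly one $b_\ell$ (so the subsets partition the $\binom{d+2}{2}$ coordinates — this needs $L \ge \lceil \binom{d+2}{2}/d \rceil$), gives that $W^{(L)}$ is the $L$-th iterate of \eqref{equ:block-update}, and reading off column $n+1$ gives the stated prediction.

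The main obstacle I expect is bookkeeping the residual connections and the scratch-space lifecycle consistently across the alternating layers: each of $\mathsf{attn}$ and $\mathsf{bilin}$ has an additive skip, so I cannot simply ``overwrite'' a block of rows — I must cancel old content by subtracting it. Getting the attention layer to both (a) compute the gradient-descent prediction update in the label row and (b) clear the scratch rows it read from, using a single pair $(\mathbf P, \mathbf Q)$, requires a slightly larger block-structured choice of weights than \eqref{equ:linear-attn-weights} and a careful check that the two actions do not interfere; alternatively, delegating the clearing to the next bilinear layer's $\mathbf W_0, \mathbf W_1$ (by including a $-\mathbf I$ block that hits the old scratch rows against the constant row) keeps each layer's job atomic at the cost of a more intricate induction hypothesis. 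I would pick whichever makes the invariant statement shortest. The rest — verifying via \eqref{equ:bilinear-kernel-formula} that arbitrary centered quadratic monomials are expressible, and that the attention step with $\Gamma^{(\ell)}$ matches \eqref{equ:block-update} — is a direct computation analogous to Step 2 of Section~\ref{sec:bilinear-construction}, now carried out on the sub-prompt of active features.
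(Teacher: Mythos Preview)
Your overall skeleton matches the paper's: explicit weights, an invariant on $Z^{(2\ell)}$ carrying $(1,x_i,\text{scratch}_i,y_i^{(\ell)})$, the bilinear layer writing quadratic features into the scratch rows, and the attention layer (with the sparse $\mathbf P,\mathbf Q$) performing one block-coordinate step on the residuals. The induction argument that the label row tracks $y_i + f(x_i;W^{(\ell)})$ is exactly how the paper closes the proof.

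The gap is in the scratch lifecycle. Neither of your two proposed mechanisms actually works with $\bar d = 2d+1$. First, the attention layer cannot clear the scratch: with any $\mathbf P,\mathbf Q$ the correction $\frac{1}{n}\mathbf P Z M (Z^\top\mathbf Q Z)$ is cubic in $Z$, so it cannot equal $-Z_{p,i}$ identically on the scratch rows. Second, your bilinear-layer idea of ``a $-\mathbf I$ block that hits the old scratch rows against the constant row'' does clear row $p$ (the product is $-s_p\cdot 1$, cancelling the residual), but then that row's single Hadamard factor pair is spent; there is no way to \emph{also} deposit the new monomial $\phi_p(x)$ in the same row with the same layer. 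With only $d$ scratch rows and one bilinear layer per block, clear-then-write cannot be done for an arbitrary choice of feature subsets.

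The paper's resolution is to avoid clearing altogether by choosing the blocks so that the \emph{difference} between consecutive scratch contents factors as a product of two linear forms in $x$. Concretely, the $\ell$-th scratch is taken to be $x_i[\ell\,\%\,d]\cdot x_i$; then the $(\ell{+}1)$-th bilinear layer only needs to add $(x_i[(\ell{+}1)\,\%\,d]-x_i[\ell\,\%\,d])\cdot x_i[p]$ to row $p$, which is achieved with $\widetilde{\mathbf W}_0^{(\ell+1)}=I_d$ and $\widetilde{\mathbf W}_1^{(\ell+1)}=E_{(\ell+1)\%d}-E_{\ell\%d}$. The invariant at $Z^{(2\ell)}$ therefore keeps the old features in the scratch (not zeros), and the attention weights stay in the simple form \eqref{equ:linear-attn-weights}. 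A minor side point: the resulting feature subsets $b_\ell = \{1,x[1],\dots,x[d],x[\ell]x[1],\dots,x[\ell]x[d]\}$ overlap (the linear features recur in every block, and $x[i]x[j]$ appears in both $b_i$ and $b_j$), so your insistence on a partition and the layer count $L\ge\lceil\binom{d+2}{2}/d\rceil$ is an unnecessary restriction; block-coordinate descent \eqref{equ:block-update} is well-defined for overlapping blocks and that is what the construction realizes.
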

Note that there are many possible choices for the feature subsets, as long as every index is visited, i.e., $\bigcup_{\ell} b_\ell = \{1, \dots, \binom{d+2}{2}\}$.
In the proof of this Theorem in Appendix~\ref{sec:app-block-construction}, we choose the subsets $b_\ell$ to be set of indices in $w$ corresponding to the following quadratic features:
\begin{equation}
\label{equ:blocks}
    (1, x_i[1], \dots, x_i[d], x_i[1] x_i[\ell], \dots, x_i[d] x_i[\ell]),
\end{equation}
for $\ell=1, \dots, d$, and repeat this cycle thereafter.

To illustrate the block-coordinate descent update dynamics that we proposed in Section~\ref{sec:block-construction}, we conduct a numerical experiment comparing the quadratic ICL performance of linear Transformers without feed-forward layers and bilinear Transformers of various depths.
We set the problem dimensions to be $d \in \{3,4\}$, and $\bar{d}=12$.
Note that when $d=4$, there are $\binom{6}{2} = 15$ distinct monomials in the quadratic function, which is larger than our embedding space $\bar{d}$.
But when $d=3$, there are $\binom{5}{2} = 10$ distinct monomials, which is smaller than $\bar{d}$.
The training prompts have a fixed length of $n=200$ examples plus one query, and they are drawn from a distribution of quadratic function as described in Section~\ref{sec:in-context}.

\begin{figure}[!tb]
    \centering
    \begin{subfigure}{0.32\textwidth}
        \includegraphics[width=\textwidth]{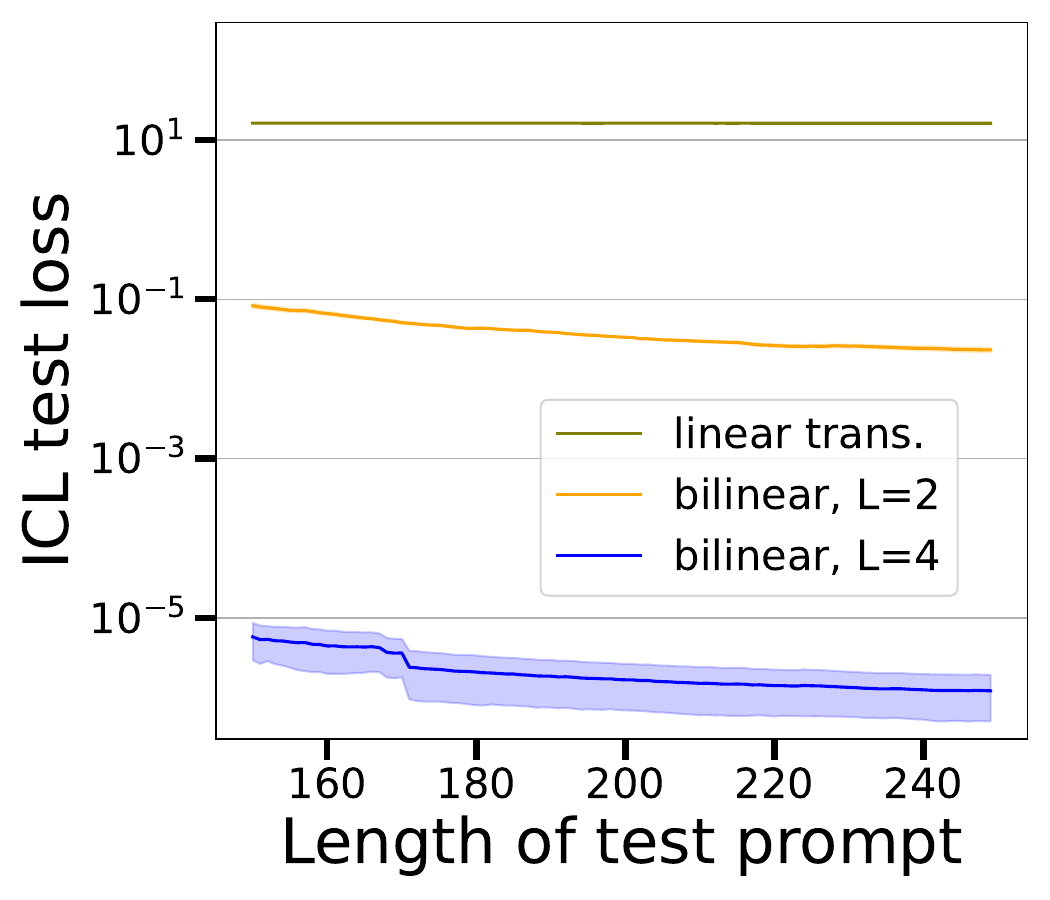}
        \caption{$d=3$}
    \end{subfigure}
    \begin{subfigure}{0.32\textwidth}
        \includegraphics[width=\textwidth]{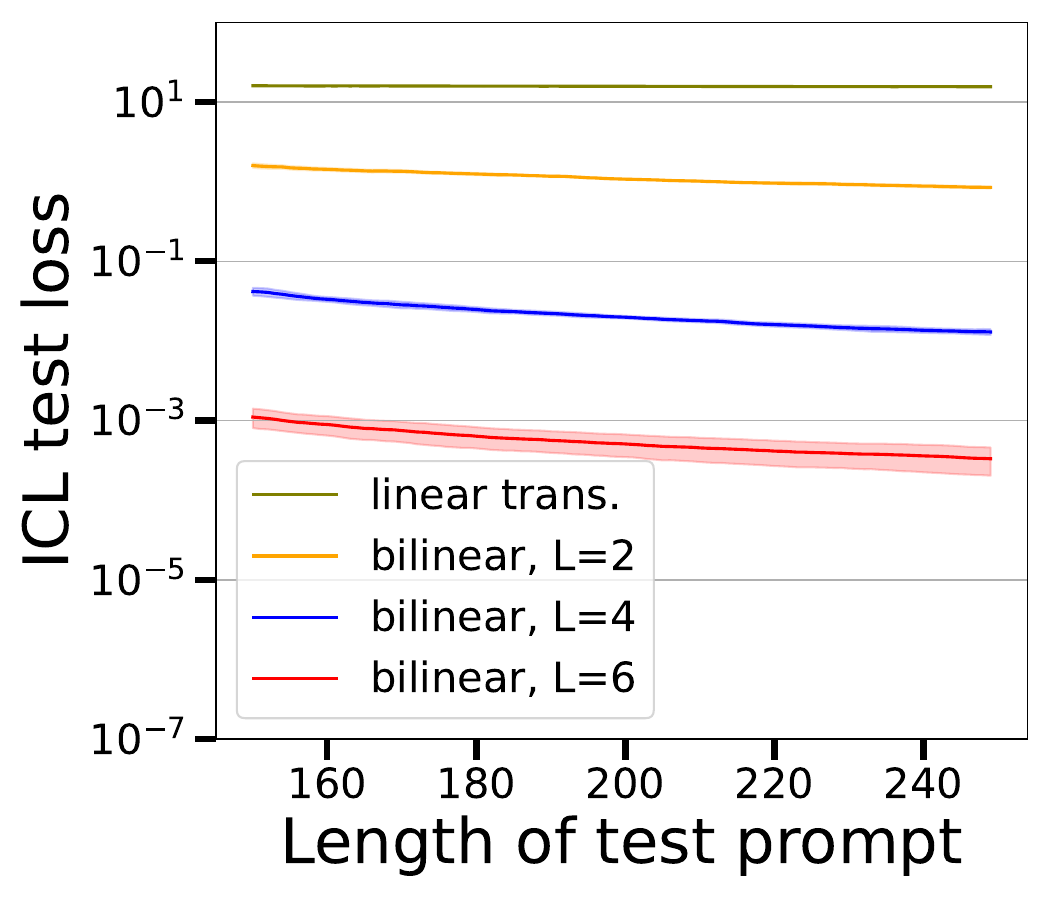}
        \caption{$d=4$}
    \end{subfigure}
    \begin{subfigure}{0.32\textwidth}
        \includegraphics[width=\textwidth]{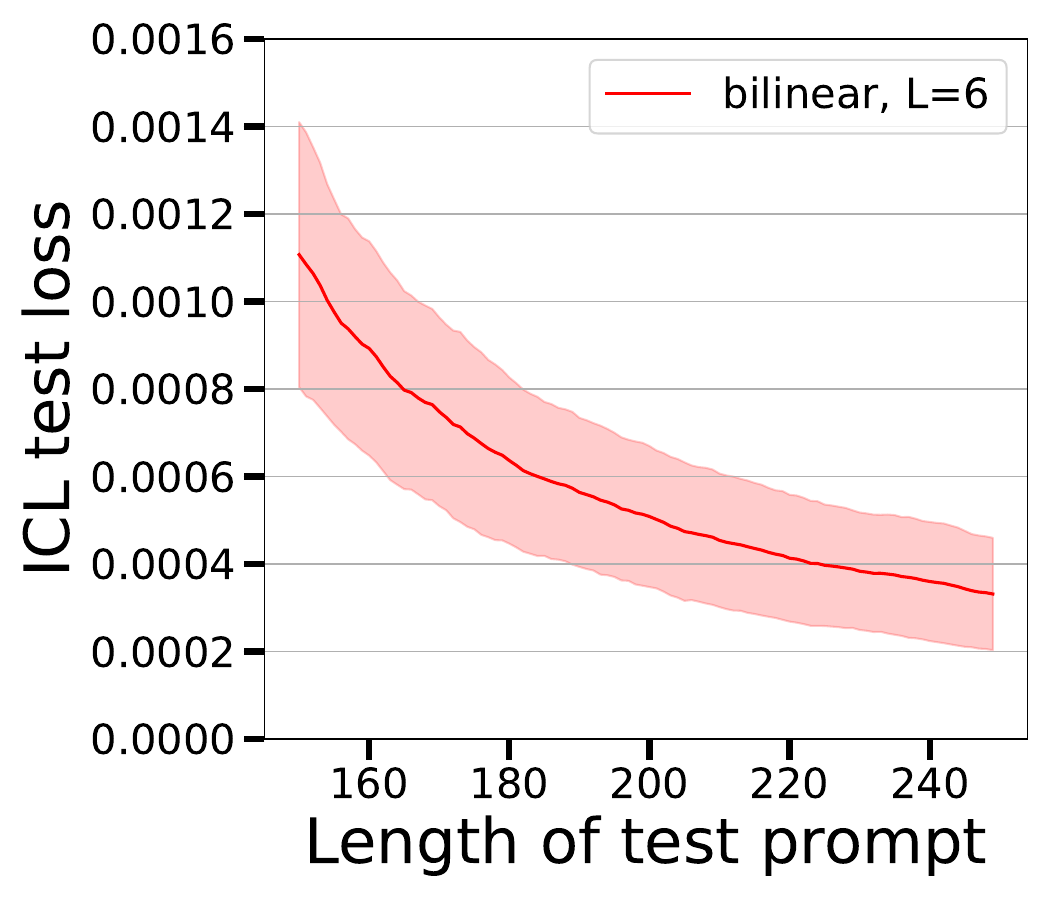}
        \caption{$d=4$, zoomed in}
    \end{subfigure}
    \caption{
        Quadratic ICL test loss (over 5 trials) of both linear and bilinear Transformer models given different test prompt lengths.
        The linear Transformer has 6 layers, and the bilinear Transformers have $2L$ layers, which implement $L$ steps of block-coordinate descent.
        In Figure (a), the quadratic function has $d=3$ variables, and Figures (b-c) have $d=4$.
        Deep bilinear models significantly outperform the linear baseline, show improved performance with increased depth ($L$), and can learn the quadratic function even with an embedding dimension ($\bar{d}=12$) insufficient to fit the number of quadratic features (15 for $d=4$).
    }
    \label{fig:block-quadratic}
\end{figure}

In Figure~\ref{fig:block-quadratic}, we plot the quadratic ICL test loss for various learned Transformer models.
The test prompts may have different lengths than the training data, but their input-label pairs $(x_i, y_i)$ are drawn i.i.d. from the same distribution.
First, in Figures~\ref{fig:block-quadratic} (a-b), the performance of the linear Transformer is significantly worse than any of the bilinear models, which is consistent with Theorem~\ref{thm:lsa-lower-bound} that linear self-attention mechanism alone cannot perform nonlinear ICL.
In Figures~\ref{fig:block-quadratic} (a-b), we also see that the performance of bilinear Transformer improves significantly with a greater number of layers.
And this trend still holds for $d=3$, when the prompt's embedding dimension is wide enough to contain all quadratic features, thus illustrating the benefits of implementing multiple gradient descent updates.
Furthermore, when $d=4$, the deep bilinear Transformers are able to overcome the limitations implied by Proposition~\ref{thm:embed-lower-bound} and successfully learn the quadratic function in-context.
Finally, in Figure~\ref{fig:block-quadratic} (c), we note that the ICL loss increases for shorter prompts, which aligns with our results in Theorem~\ref{thm:bilinear-construction}.
For additional results from these experiments, please see Appendix~\ref{sec:additional-experiment}.

\subsection{Learning higher-order polynomials in-context}
\label{sec:cubic-training}

\begin{wrapfigure}{r}{0.48\textwidth}
    \centering
    \includegraphics[width=0.4\textwidth]{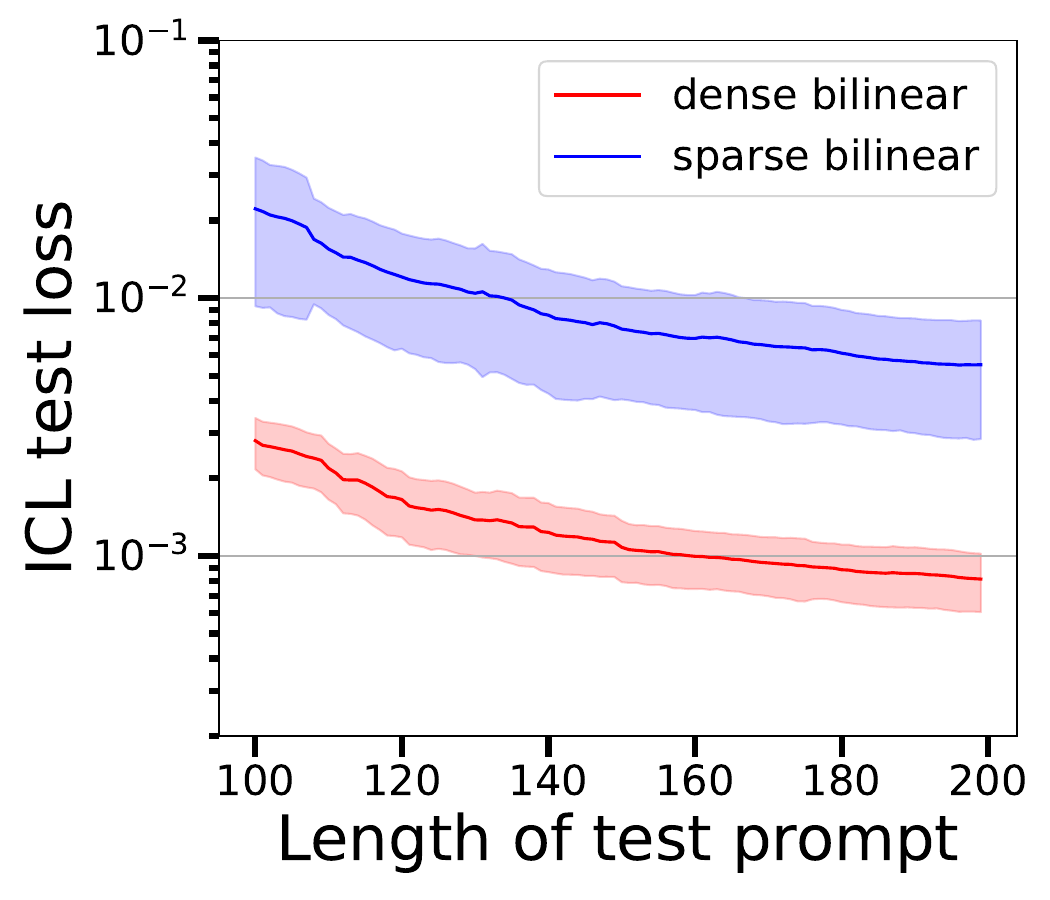}
    \caption{
        Cubic ICL test loss (over 5 trials) for bilinear Transformers with both dense \eqref{equ:dense-bilinear} and sparse \eqref{equ:sparse-bilinear} bilinear layer weights.
        The differences between the two models confirm our proposed mechanism through which bilinear Transformers can learn higher-order features in context.
    }
    \label{fig:block-cubic}
\end{wrapfigure}

In the previous section, we showed that a deep bilinear Transformer can split up the calculations of the full kernel function over multiple layers.
We extend this intuition and highlight that the same design can be used to compute a kernel generating higher-order features.
Specifically, we consider the class of degree-$p$ polynomial functions with $d$ variables:

\[ f(x; w) = \sum_{p_1 + \cdots +p_d \le p}  w_{p_1, \dots, p_d} x[1]^{p_1} \cdot x[d]^{p_d},\]
where coefficients $w$’s are i.i.d. standard normal.

Let $\bar{x}_i$ be the feature vectors generated by the first bilinear feed-forward layer.
Then, by the same argument as \eqref{equ:bilinear-kernel-formula}, the second bilinear feed-forward layer with weights $\mathbf{W}^{(2)}_0, \mathbf{W}^{(2)}_1$ computes higher-order features for each row  $j = 1, \dots, \bar{d}$:
\[    (\mathbf{W}^{(2)}_{0, j} \bar{x}_i) \cdot (\mathbf{W}^{(2)}_{1, j} \bar{x}_i) = \bar{x}_i^\top \left({{\mathbf{W}^{(2)}_{0, j}}}^\top \mathbf{W}^{(2)}_{1, j}\right) \bar{x}_i.\]
Since $\bar{x}_i$'s are quadratic in the original input $x_i$'s, the output of the second bilinear feed-forward layer can realize polynomial features up to degree 4.
Thus, a deep bilinear Transformer is capable of learning higher-order polynomials in context.
By extending this argument to Theorem~\ref{thm:block-construction}, we can derive the following result:
\begin{corollary}
    \label{thm:poly-block-construction}
    For the task of polynomial ICL, there exists a $\mathsf{TF_{bilin}}$ with $\bar{d} = 2d+1$ implementing block-coordinate descent over polynomial regression, so that for $\hat{y}^{(\ell)} := Z_{\mathsf{TF_{bilin}}}^{(2\ell)}[\bar{d}+1, n+1]$, we have $\hat{y}^{(\ell)} = f(x_{\mathsf{query}}; w^{(\ell)}),$
    where $w^{(\ell)}$ are iterates of a block-coordinate descent update \eqref{equ:block-update}.
\end{corollary}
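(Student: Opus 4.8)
The plan is to mirror the construction in Theorem~\ref{thm:block-construction} but to iterate the feature-generating step across consecutive bilinear layers so that higher-degree monomials are built up compositionally. First, I would fix the target degree $p$ and observe that every monomial $x[1]^{p_1}\cdots x[d]^{p_d}$ with $p_1 + \cdots + p_d \le p$ can be obtained by at most $\lceil \log_2 p\rceil$ rounds of pairwise (Hadamard) products starting from the degree-$\le 1$ coordinates $(1, x[1], \dots, x[d])$, using the identity in \eqref{equ:bilinear-kernel-formula}: a bilinear layer with weights $\mathbf{W}_0, \mathbf{W}_1$ restricted to appropriate rows produces, in each coordinate $j$, an expression $\bar{x}^\top(\mathbf{W}_{0,j}^\top\mathbf{W}_{1,j})\bar{x}$, i.e. an arbitrary quadratic form in the current feature vector. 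Since $\bar{d} = 2d+1$, each bilinear layer can hold the constant, the $d$ linear coordinates $x[1], \dots, x[d]$, and $d$ additional ``working'' coordinates into which a chosen block of new monomials is written; this is exactly the structure of \eqref{equ:blocks}, except that now the $\ell$-th working block consists of products of already-available features (which may themselves be high-degree) rather than only degree-2 terms.

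Next I would set up the bookkeeping for the block-coordinate descent claim. Enumerate the coefficients $w \in \RR^{N_{d,p}}$ with $N_{d,p} = \binom{d+p}{p}$, and fix a schedule of feature subsets $b_1, b_2, \dots$ with $\bigcup_\ell b_\ell = \{1,\dots,N_{d,p}\}$; concretely, I would cycle through blocks of the form $(1, x[1],\dots,x[d], m_1, \dots, m_{d})$ where $m_1,\dots,m_d$ range over a partition of the remaining monomials into groups of size $\le d$. For each block $\ell$, I would specify: (i) the bilinear layer $\mathsf{bilin}$ that writes the monomials of $b_\ell$ into the working rows — using, when needed, the outputs of earlier bilinear layers that produced intermediate-degree factors (the argument in Section~\ref{sec:cubic-training} showing degree doubling per layer); and (ii) the subsequent attention layer $\mathsf{attn}$ with weights of the form $\mathbf{P} = \stbt{0_{d\times d}}{0}{0}{1}$, $\mathbf{Q} = \stbt{\Gamma_\ell}{0}{0}{0}$ which, by the same computation as in Step 2 of Section~\ref{sec:bilinear-construction}, performs one preconditioned gradient step of $L(w) = \frac{1}{2n}\sum_i (f(x_i;w)+y_i)^2$ restricted to the coordinates in $b_\ell$, with stepsize $\eta^{(\ell)}$ encoded in $\Gamma_\ell$. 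Accumulating the partial prediction into the label row (rather than overwriting it) then gives $\hat{y}^{(\ell)} = f(x_{\mathsf{query}}; w^{(\ell)})$ with $w^{(\ell)}$ satisfying \eqref{equ:block-update}, by induction on $\ell$.

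The main obstacle, and the place where the proof is genuinely more delicate than Theorem~\ref{thm:block-construction}, is the interplay between \emph{feature generation} and \emph{residual tracking} within a fixed width $\bar{d} = 2d+1$: once a monomial leaves the working rows to make room for the next block, the attention layer can no longer re-derive it, so the construction must ensure that every high-degree monomial needed as a \emph{factor} for a still-higher-degree monomial is scheduled before it is evicted, or is cheap enough to recompute from the retained linear coordinates within the degree-doubling budget. I would handle this by choosing the schedule so that each block's monomials are recomputed from scratch from $(1,x[1],\dots,x[d])$ using $O(\log p)$ auxiliary bilinear layers inserted between the ``descent'' layers — this inflates the depth to $O(N_{d,p}\log p)$ blocks but keeps $\bar{d}=2d+1$ — and then verify that these auxiliary layers act as the identity on the label row and on the retained linear/constant rows, so they do not perturb the block-coordinate descent iterates. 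The remaining verifications (that $\mathsf{bilin}$ preserves the constant row and linear rows, that the masks $M$ and the last-row masking behave as in \eqref{equ:prompt-form}, and that the stepsize can be absorbed into $\Gamma_\ell$) are routine extensions of the corresponding checks in Appendix~\ref{sec:proof-bilinear-construction} and~\ref{sec:app-block-construction}.
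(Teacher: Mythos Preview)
Your proposal is correct and follows the same overall approach as the paper: reuse the attention-as-block-coordinate-descent machinery from Theorem~\ref{thm:block-construction} unchanged, and redesign only the feed-forward schedule so that the $d$ working rows cycle through all degree-$\le p$ monomials while the constant, linear, and label rows are preserved by the residual connection and the last-row mask.

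The one substantive difference is in how each feature block is built. The paper fixes the block structure to $\{x[i]\,g\}_{i=1}^d$ for each degree-$(p{-}1)$ monomial $g=x[j_1]\cdots x[j_{p-1}]$ and uses a purely linear scheme: one layer to clear the working rows, one layer to write $x[i]x[j_1]$, then $p-2$ layers that each multiply all working rows by a single original variable $x[j_2],\dots,x[j_{p-1}]$---so $p$ bilinear layers per block, and every multiplication uses an always-available linear coordinate, which means there is never any slot contention to manage. Your degree-doubling route aims for $O(\log p)$ auxiliary layers per block, which would yield a shallower network, but the claim that an \emph{arbitrary} batch of $d$ target monomials can be recomputed from $(1,x[1],\dots,x[d])$ in $O(\log p)$ bilinear layers using only $d$ scratch slots is not justified as stated: the intermediate factors needed for $d$ unrelated targets need not fit simultaneously. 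This does not affect the corollary, which is silent on depth; if you either adopt the paper's $\{x[i]\,g\}$ block structure with one-variable-at-a-time multiplication, or relax your depth claim to $O(p)$ per block, the scheduling obstacle you correctly identified disappears entirely.
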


The key distinction from Theorem~\ref{thm:block-construction} is the design of a different sequence of feature subsets that cycles through all polynomial features, which is explicitly constructed in Appendix~\ref{sec:proof-poly-block-construction}.
Furthermore, the same principle readily extends to other feed-forward layer architectures, which is discussed in Appendix~\ref{sec:extended-discussion}.

To validate this mechanism of composing features over multiple layers, we recall that the inputs $x_i$ are embedded into the prompt in the form of $[1, x_i, \mathbf{0}] \in \RR^{\bar{d}}$, where the first $d+1$ coordinates contain the original inputs and the next $\bar{d}-d-1$ coordinates can be used to store the additional nonlinear features.
If we partition the weights of a bilinear layer as
\begin{equation}
\label{equ:dense-bilinear}
\hspace{-0.5em} \mathbf{W}_i = \tbt{\mathbf{A}_i}{\mathbf{B}_i}{\mathbf{\widetilde{W}}_i}{\mathbf{C}_i}, \mathbf{\widetilde{W}_i} \in \RR^{(\bar{d}-d-1) \times (d+1)},
\end{equation}
then $\mathbf{A}_i, \mathbf{B}_i$ overwrite the space intended for the original inputs, and $\mathbf{B}_i, \mathbf{C}_i$ utilize the additional features from the previous bilinear layers to compute new features.
If we set $\mathbf{A}_i, \mathbf{B}_i, \mathbf{C}_i$ all to be zero, so that
\begin{equation}
\label{equ:sparse-bilinear}
\hspace{-0.5em} \mathbf{W}_i = \tbt{0}{0}{\mathbf{\widetilde{W}}_i}{0}, \mathbf{\widetilde{W}_i} \in \RR^{(\bar{d}-d-1) \times (d+1)},
\end{equation}
then successive bilinear layers can only use the original inputs to calculate the new features.
Therefore, if our construction accurately reflects how the bilinear Transformers perform ICL, then this sparse version of bilinear feed-forward layers would prevent us from effectively learning higher-order polynomials in-context.

To demonstrate this idea, we conduct an experiment where the target function is a cubic polynomial over 4 variables:
\begin{align*}
f(x; w) = w_0 &+ \sum_{i=1}^4 w_i x[i] + w_{12}x[1]x[2] + w_{23}x[2]x[3] \\
 &+ w_{34}x[3]x[4] + w_{123}x[1]x[2]x[3] + w_{234}x[2]x[3]x[4],
\end{align*}
where all of the coefficient $w$'s are drawn i.i.d. from $\mathcal{N}(0, 1)$.
We consider 10-layer bilinear Transformers with either the dense \eqref{equ:dense-bilinear} or the sparse \eqref{equ:sparse-bilinear} versions of the bilinear feed-forward layers.
We set the parameters to be $n=150$, $d=4$, and $\bar{d}=15$.
We note that there are 35 distinct monomials in the cubic function class of 4 variables, but we consider a cubic with just 10 terms.

As shown in Figure~\ref{fig:block-cubic}, a bilinear Transformer with dense \eqref{equ:dense-bilinear} bilinear layer weights significantly outperforms its counterpart with only sparse \eqref{equ:sparse-bilinear} bilinear layer weights.
Therefore, the bilinear Transformer with sparse bilinear layers fails to effectively learn the cubic target function in context because the sparse design inhibits the composition of higher-order features.
This observation matches up with our proposed role of the bilinear feed-forward layers in higher-order ICL.
Furthermore, the sparse bilinear Transformer can still learn quadratic features in context, which is why its test loss decreases with more in-context example.

\section{Conclusion and Future Work}
In this paper, we elucidate the essential role of feed-forward layers in enabling Transformers to perform in-context learning (ICL) of nonlinear functions.
We first establish a fundamental limitation of purely linear self-attention (LSA) models for nonlinear ICL tasks.
Using quadratic ICL as an illustrative case, we show that adding feed-forward layers enables the learning of nonlinear features  by implementing gradient-based updates in a kernel space.
However, we prove that the expressivity of any single Transformer block is fundamentally constrained by its embedding dimension and feed-forward layer size, limiting the complexity of learnable nonlinearities.
We show that deep Transformers overcome this bottleneck by distributing kernel computation across multiple feed-forward layers to learn substantially richer function classes.
Our results highlight feed-forward layers as a crucial and scalable source of nonlinear expressivity in ICL.

This work opens several promising directions for future research.
For instance, in our construction of the deep bilinear Transformer in Section~\ref{sec:efficiency}, the kernel computation can be distributed across the feed-forward layers in many ways.
It remains an open question whether there is a ``preferred'' ordering or allocation of features across the layers that optimizes learning efficiency or generalization.

\section*{Acknowledgment}
This work was supported in part by MathWorks, the MIT-IBM Watson AI Lab, the MIT-Amazon Science Hub, the MIT-Google Program for Computing Innovation, and the ONR grant \#N00014-23-1-2299.
The authors thank Amirhossein Reisizadeh, Mehrdad Ghadiri and Juan Cervino for their insightful discussions and valuable feedback.

\bibliographystyle{plainnat}

\appendix

\section{Extension to general nonlinear ICL tasks}
\label{sec:extended-discussion}
In this section, we extend our main results to arbitrary nonlinear in-context learning (ICL) tasks and to any choice of feed-forward layer architecture.
To this end, we consider $d$ variables $x_1, \dots, x_d$ drawn i.i.d. from some distribution $\mathcal{D}_X$ and let $g_1(x), \dots, g_m(x)$ be a collection of nonlinear features that are orthonormal with respect to the inner product $\inner{u}{v} = \EE_x[uv].$

We consider the general nonlinear ICL task of learning the function class
\[f(x; w) = \sum_{i=1}^m w_i g_i(x),\]
where the coefficients $w_i$'s are drawn i.i.d. from $\mathcal{N}(0, 1)$.

We first note that the lower bound established in Proposition~\ref{thm:embed-lower-bound} does not depend on the specific architecture of the feed-forward layer. Consequently, it applies directly to this generalized setting:
\begin{corollary}
    Given the general nonlinear ICL problem above,
    and let $\mathsf{TFB}$ be a Transformer block whose feed-forward layer implements a kernel mapping $x \mapsto \widetilde{x}$.
    Let $\widetilde{d}$ be the dimension of the orthogonal projection of $\widetilde{x}$ onto the span of $\{g_1(x), \dots, g_m(x)\}$.
    If $\widetilde{d} < m$, then the in-context learning loss satisfies the lower bound
    \[\mathcal{L}(\mathsf{TFB}) \ge m - \widetilde{d}.\]
\end{corollary}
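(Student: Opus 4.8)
The plan is to observe that the argument behind Proposition~\ref{thm:embed-lower-bound} never used anything specific to the quadratic feature set beyond two structural facts: (i) the target can be written $f(x;w) = \inner{w}{g(x)}$, where $g(x) = (g_1(x),\dots,g_m(x))$ is an orthonormal tuple in the random-variable inner product space $\inner{u}{v} = \EE_x[uv]$ and $w \sim \mathcal{N}(0,\mathbf{I}_m)$; and (ii) after the feed-forward layer, the $i$-th prompt column stacks a vector $\widetilde{x}_i := \phi(x_i)$, depending on $x_i$ alone, on top of the (unchanged) label $y_i$. Both hold in the present setting regardless of the feed-forward architecture, so the proof transfers with $\binom{d+2}{2}$ replaced by $m$ and the quadratic basis \eqref{equ:quadratic-basis} replaced by $\{g_1,\dots,g_m\}$; I will simply re-trace the steps.

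First I would expand \eqref{equ:attention} on the post-feed-forward prompt. Because the attention layer is LSA, the block's prediction takes the form $\hat{y} = c_0 + \sum_{i=1}^n c_i y_i + \sum_{i=1}^n d_i y_i^2$, where the coefficients are deterministic functions of the $\widetilde{x}_j$'s and of $\mathbf{P},\mathbf{Q}$, and in particular $c_i = \transpose{\widetilde{x}_i} B\, \widetilde{x}_{\textsf{query}}$ for a matrix $B$ depending only on $\mathbf{P},\mathbf{Q}$. Next, condition on $x_1,\dots,x_{n+1}$, leaving only the randomness of $w$, and split $\hat{y} = L(w) + N(w)$ with $L(w) := \inner{w}{h}$, $h := \sum_i c_i g(x_i)$, odd in $w$, and $N(w) := c_0 + \sum_i d_i \inner{w}{g(x_i)}^2$ even in $w$; since $y_{\textsf{query}} = \inner{w}{g(x_{\textsf{query}})}$ is also odd, the odd-times-even cross terms vanish under the symmetry $w \overset{d}{=} -w$, giving
\[ \EE_w\big[(\hat{y} + y_{\textsf{query}})^2\big] = \EE_w\big[(L(w) + y_{\textsf{query}})^2\big] + \EE_w\big[N(w)^2\big] \ \ge\ \norm{h + g(x_{\textsf{query}})}^2 . \]

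It then remains to take expectations over the inputs and carry out the linear-algebra step. Decompose each coordinate $\phi_k$ of $\phi$ into its projection onto $V := \mathrm{span}\{g_1,\dots,g_m\}$ plus an orthogonal remainder, i.e.\ $\phi(x) = A\,g(x) + r(x)$ with $A_{kj} := \inner{\phi_k}{g_j}$ and $\inner{r_k}{g_j} = 0$ for all $k,j$; the hypothesis says precisely that $\rank(A) = \widetilde{d} < m$, so $U := \ker(A)$ has $\dim U = m - \widetilde{d}$ and $\mathrm{range}(\transpose{A}) = U^{\perp}$. Projecting onto $U$ and taking the expectation over $x_1,\dots,x_{n+1}$,
\[ \mathcal{L}(\mathsf{TFB}) \ \ge\ \EE\big[\norm{\Pi_U h + \Pi_U g(x_{\textsf{query}})}^2\big] = \EE\big[\norm{\Pi_U g(x_{\textsf{query}})}^2\big] + 2\,\EE\big[\inner{\Pi_U h}{\Pi_U g(x_{\textsf{query}})}\big] + \EE\big[\norm{\Pi_U h}^2\big] . \]
The last term is nonnegative; the first equals $\tr\!\big(\Pi_U\, \EE[g(x)\transpose{g(x)}]\big) = \tr(\Pi_U) = m - \widetilde{d}$ using $\EE[g(x)\transpose{g(x)}] = \mathbf{I}_m$; and the middle term is zero, since for each $i$ one writes $c_i = \inner{g(x_i)}{v} + \inner{r(x_i)}{s}$ with $v := \transpose{A} B\, \widetilde{x}_{\textsf{query}} \in \mathrm{range}(\transpose{A}) = U^{\perp}$, so that $\EE_{x_i}[c_i\, g(x_i)] = v$ (using $\EE[g(x_i)\transpose{g(x_i)}] = \mathbf{I}_m$ and $\EE[r(x_i)\transpose{g(x_i)}] = 0$) and hence $\EE_{x_i}\big[c_i\, \transpose{g(x_i)}\Pi_U g(x_{\textsf{query}})\big] = \transpose{v}\,\Pi_U g(x_{\textsf{query}}) = 0$ because $v \in U^{\perp}$. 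Altogether $\mathcal{L}(\mathsf{TFB}) \ge m - \widetilde{d}$.

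The step that warrants the most care is this final cancellation of the middle term: it is exactly where both the particular bilinear form of the LSA coefficients $c_i$ and the orthogonality $r \perp V$ built into the decomposition $\phi = A g + r$ are used, and it is what makes the bound a constant independent of the context length $n$ rather than a quantity decaying as $n \to \infty$. By contrast, ``arbitrary feed-forward architecture'' is not an obstacle at all: the argument only ever uses that $\widetilde{x} = \phi(x)$ is some fixed function of $x$ and never the internal structure of $\phi$, which is precisely why the bound of Proposition~\ref{thm:embed-lower-bound} transfers unchanged to this corollary.
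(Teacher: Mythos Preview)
Your argument is correct, but it follows a genuinely different route from the paper's. The paper's proof of Proposition~\ref{thm:embed-lower-bound} (on which this corollary rests) first invokes Proposition~\ref{thm:lsa-lower-bound-gen} to reduce, for each fixed $w$, to the best \emph{linear} predictor over the post--feed-forward features: $\EE_x[(\hat y+y_{n+1})^2]\ge\min_\beta\EE_x[(\inner{\beta}{\widetilde{x}_{n+1}}+y_{n+1})^2]$. It then projects onto $\mathrm{span}\{g_j\}$, minimizes over $\beta$ to obtain $\|w-P_2 w\|^2$ with $P_2$ the orthogonal projection onto $\mathrm{range}(A^\top)$, and finally takes $\EE_w$; since here the $g_j$ are already orthonormal and $w\sim\mathcal{N}(0,\mathbf{I}_m)$, this gives $\tr(\mathbf{I}-P_2)=m-\widetilde d$ immediately. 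By contrast, you bypass Proposition~\ref{thm:lsa-lower-bound-gen} entirely: you expand the single LSA layer explicitly, use the parity $w\mapsto -w$ to strip off the even-in-$w$ part of $\hat y$, and then exploit the bilinear form $c_i=\widetilde{x}_i^\top B\,\widetilde{x}_{\textsf{query}}$ of the LSA coefficients to kill the cross term after projecting onto $\ker A$. The paper's route is more modular---it works unchanged if several LSA layers followed the feed-forward layer, and it never touches the explicit structure of the attention output; your route is self-contained and avoids the auxiliary lemma, but the cross-term cancellation is tied to a single LSA layer. So despite your opening claim of ``re-tracing the steps,'' what you wrote is an independent (and valid) argument, not the paper's.
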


This result implies that a single Transformer block --- comprising a feed-forward layer and a linear attention mechanism --- can only solve a nonlinear ICL task if both its feed-forward layer and embedding dimension are sufficiently expressive to capture all $m$ nonlinear features.

As in Section~\ref{sec:efficiency}, we can overcome this limitation by employing a deep Transformer architecture, in which the computation of nonlinear features is distributed across multiple layers.
Building on the constructions in Theorem~\ref{thm:block-construction} and Corollary~\ref{thm:poly-block-construction}, we obtain the following:
\begin{corollary}
    For the task of general nonlinear ICL above, there exists a deep Transformer $\mathsf{TF}$ with $\bar{d} = d+h+1$ implementing block-coordinate descent over kernel regression.
    Specifically, for $\ell$th layer prediction $\hat{y}^{(\ell)} := Z_{\mathsf{TF}}^{(2\ell)}[\bar{d}+1, n+1]$, we have $\hat{y}^{(\ell)} = f(x_{\mathsf{query}}; w^{(\ell)}),$
    where $w^{(\ell)}$ are iterates of a block-coordinate descent update \eqref{equ:block-update}.

    Furthermore, the sequence of feature subsets used in this block-coordinate descent update corresponds to the intermediate activations of a neural network with hidden dimension $\bar{d}$, which is obtained by stacking the feed-forward layers of $\mathsf{TF}$.
\end{corollary}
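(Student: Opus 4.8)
The plan is to lift the construction behind Theorem~\ref{thm:block-construction} --- and its polynomial refinement, Corollary~\ref{thm:poly-block-construction} --- to an arbitrary orthonormal feature family $g_1,\dots,g_m$. As there, I keep the prompt in the layout \eqref{equ:prompt-form} but with only $\bar d = d+h+1$ feature rows: one for the constant, $d$ for the original coordinates of $x$ (which are never overwritten), and $h$ ``scratch'' rows that at block $\ell$ hold a freshly computed tuple of at most $h$ nonlinear features. The deep Transformer alternates two operations. At block $\ell$, the feed-forward layer $\mathsf{ff}^{(\ell)}$ reads the currently stored quantities (constant, original $x$, and the previous scratch block) and writes into the scratch rows a new feature tuple $\phi^{(\ell)}(x)$; the following LSA layer then performs one preconditioned gradient-descent step on $L(w)=\frac{1}{2n}\sum_i (f(x_i;w)+y_i)^2$ restricted to the coordinates $b_\ell\subseteq\{1,\dots,m\}$ that index the constant, the linear terms, and the features currently in the scratch rows. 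Choosing the sequence $(\phi^{(\ell)})_\ell$ so that $\bigcup_\ell b_\ell=\{1,\dots,m\}$ then makes the read-outs $\hat y^{(\ell)}=Z^{(2\ell)}[\bar d+1,n+1]$ exactly the iterates of the block-coordinate update \eqref{equ:block-update}, so that $\hat y^{(\ell)}=f(x_{\mathsf{query}};w^{(\ell)})$.

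For the correctness of the LSA step I reuse the Section~\ref{sec:bilinear-construction} computation: with attention weights of the form \eqref{equ:linear-attn-weights} in which $\Gamma=\Gamma_\ell$ is supported on the rows indexing the active block $b_\ell$, the LSA layer sends the label-row entry at column $k$ to $f(x_k;w^{(\ell-1)})+y_k+\frac{1}{n}\sum_{i\le n}\bigl(f(x_i;w^{(\ell-1)})+y_i\bigr)\inner{\bar x^{(\ell)}_i}{\Gamma_\ell\bar x^{(\ell)}_k}$, where $\bar x^{(\ell)}$ collects the $b_\ell$-indexed features. This is precisely one preconditioned gradient step of $L$ along $b_\ell$, and taking $\Gamma_\ell$ to invert (a negative multiple of) the block-restricted feature Gram matrix --- which is nonsingular because the $g_j$ are orthonormal, by the linear-independence argument underlying Lemma~\ref{thm:quadratic-basis} --- turns it into the projected least-squares update of \eqref{equ:block-update}. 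I would then argue by induction on $\ell$, with the invariant that after block $\ell$ the label row holds $f(x_i;w^{(\ell)})+y_i$ for each example $i\le n$ and $\hat y^{(\ell)}=f(x_{\mathsf{query}};w^{(\ell)})$ for the query.

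For the ``Furthermore'' clause, I would observe that concatenating the maps $\mathsf{ff}^{(1)},\dots,\mathsf{ff}^{(L)}$, each restricted to the constant-plus-$x$-plus-scratch block, defines a depth-$L$ feed-forward network $N$ of hidden width $\bar d$ built from the given feed-forward primitive, whose layerwise activations are exactly the tuples $\phi^{(1)}(x),\dots,\phi^{(L)}(x)$. Requiring every $g_i$ to appear among these activations is precisely the condition that $N$ computes all $m$ features, and conversely from any such network one reads off the feed-forward weights directly --- which establishes the claimed correspondence. This realizability is automatic in the polynomial case because composing two bilinear layers raises the degree (as noted around \eqref{equ:sparse-bilinear}), and in the general setting it is the defining hypothesis on the feature family $\{g_i\}$.

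The main obstacle is the layout bookkeeping: at each block the feed-forward layer must write into the scratch rows \emph{exactly} the features of $b_\ell$ and preserve the constant and linear rows, using only what is presently stored, since once a later cycle revisits a feature its value must be recomputable within a single feed-forward layer (trivial for the quadratic monomials of \eqref{equ:blocks}, but requiring care for a general $N$). One must also check that the running residual $f(x_i;w^{(\ell-1)})+y_i$ carried in the label row suffices to form the next gradient --- which it does, precisely because the contributions of the non-active coordinates are already absorbed into that scalar --- and that every coordinate of $w$ is visited often enough for the block-coordinate descent to converge; these verifications are where most of the argument adapts the explicit construction of Appendix~\ref{sec:proof-poly-block-construction}.
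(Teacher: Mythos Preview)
Your proposal is correct and follows essentially the same approach as the paper, which does not give a standalone proof but presents this corollary as a direct extension of Theorem~\ref{thm:block-construction} and Corollary~\ref{thm:poly-block-construction}; your lift of the scratch-row construction, the LSA-as-block-gradient-step computation, and the induction invariant on the label row all match the argument in Appendix~\ref{sec:app-block-construction}. One small imprecision: the update \eqref{equ:block-update} is a preconditioned gradient step on the active block (with $\eta^{(\ell)}=-{\Gamma^{(\ell)}}^\top$ as in the paper's proof), not a projected least-squares update, so you need not require $\Gamma_\ell$ to invert the block Gram matrix---any choice realizes \eqref{equ:block-update} for the corresponding $\eta^{(\ell)}$, and orthonormality of the $g_j$ is not used at that step.
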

 
\section{Proof of Proposition~\ref{thm:lsa-lower-bound}}
\label{sec:proof-lsa-lower-bound}
We shall prove a more general statement.
Recall that, in \eqref{equ:prompt-form}, the prompts are written as a matrix $Z \in \RR^{(\bar{d}+1) \times (n+1)}$:
\begin{equation*}
    Z =
    \begin{bmatrix}
        1 & 1 & \cdots & 1 & 1 \\
        x_1 & x_2 & \cdots & x_n & x_{n+1} := x_{\textsf{query}} \\
        \mathbf{0} & \mathbf{0} & \cdots & \mathbf{0} & \mathbf{0} \\
        y_1 & y_2 & \cdots & y_n & 0
    \end{bmatrix},
\end{equation*}
where, for the purpose of this proposition, $x_1, \dots, x_{n+1} \overset{i.i.d.}{\sim} \mathcal{N}(0, \Sigma)$ and $y_i = f(x_i)$ for some fixed target function $f : \RR^d \to \RR$.

We note that, by defining
\begin{align*}
    u_i &= [1 \;\; x_i \;\; \mathbf{0}], \\
    g(u_i) &= g(1, x_i, \mathbf{0}) = f(x_i),
\end{align*}
the prompt \eqref{equ:prompt-form} is a special case of the form
\begin{equation}
\label{equ:prompt-form-gen}
    \zeta =
    \begin{bmatrix}
        u_1 & u_2 & \cdots & u_n & u_{n+1} := u_{\mathsf{query}}\\
        v_1 & v_2 & \cdots & v_n & 0
    \end{bmatrix},
\end{equation}
where, $u_1, \dots, u_{n+1} \overset{i.i.d.}{\sim} \mathcal{D_U}$ for a distribution $\mathcal{D_U}$ over $\RR^{\bar{d}}$ and $v_i = g(u_i)$ for some fixed target function $g : \RR^{\bar{d}} \to \RR$.

Let $\hat{v}_{\mathsf{TF}}$ be the prediction of a Transformer $\mathsf{TF}$ with prompt $\zeta$.
Then, Proposition~\ref{thm:lsa-lower-bound} is simply a special case of showing that $\hat{v}_{\mathsf{TF}}$ is no better than the best linear model over $u$.

\begin{proposition}
\label{thm:lsa-lower-bound-gen}
    Consider inputs $u_1, \dots, u_{n+1} \overset{i.i.d.}{\sim} \mathcal{D_U}$ and any fixed target function $v_i = g(u_i)$.
    The in-context learning prompts are sampled according to the form as described in \eqref{equ:prompt-form-gen}.
    Then, for any linear Transformer $\mathsf{TF_{lin}}$ (of any number of layers), its expected prediction loss satisfies:
    \[\EE_{\mathcal{D_U}}[(\hat{v}_{\mathsf{TF}_\mathsf{lin}} + v_\textsf{query})^2] \ge \min_{\beta} \EE_{\mathcal{D_U}}[(\inner{u_\textsf{query}}{\beta} + v_\textsf{query})^2].\]

\end{proposition}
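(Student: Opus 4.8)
The plan is to show by induction on the number of layers $L$ that the prediction $\hat{v}_{\mathsf{TF_{lin}}}$ of any $L$-layer linear Transformer, when applied to a prompt of the form \eqref{equ:prompt-form-gen}, is a linear function of the query input $u_{\mathsf{query}}$ with coefficients that depend only on the in-context examples $\{(u_i, v_i)\}_{i=1}^n$ (and not on $u_{\mathsf{query}}$ in any nonlinear way). Once this structural claim is established, the lower bound follows immediately: conditioning on the in-context examples, the prediction $\hat{v}_{\mathsf{TF_{lin}}} = \inner{u_{\mathsf{query}}}{\beta(\{(u_i,v_i)\})}$ for some random vector $\beta$, and so
\[
\EE[(\hat{v}_{\mathsf{TF_{lin}}} + v_{\mathsf{query}})^2] = \EE\big[\EE[(\inner{u_{\mathsf{query}}}{\beta} + v_{\mathsf{query}})^2 \mid \{(u_i,v_i)\}]\big] \ge \EE\big[\min_{\beta'} \EE[(\inner{u_{\mathsf{query}}}{\beta'} + v_{\mathsf{query}})^2 \mid \{(u_i,v_i)\}]\big],
\]
and since the inner minimization is over a fixed (example-independent) quantity — the optimal linear predictor of $v_{\mathsf{query}}$ from $u_{\mathsf{query}}$ does not depend on the realized examples because $(u_{\mathsf{query}}, v_{\mathsf{query}})$ is independent of them — the right-hand side equals $\min_\beta \EE[(\inner{u_{\mathsf{query}}}{\beta} + v_{\mathsf{query}})^2]$.

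To carry out the induction, I would track the exact form of the last column of $Z^{(\ell)}$ (the query column), since only its bottom entry matters for the prediction. Write $Z^{(\ell)} = \begin{bmatrix} U^{(\ell)} \\ r^{(\ell)} \end{bmatrix}$ schematically, with the query column $(\text{stuff}^{(\ell)}_{\mathsf{query}}, \rho^{(\ell)})$. The key observation is about the masking matrix $M = \mathrm{diag}(\mathbf{I}_n, 0)$: in $\mathsf{attn}(Z) = Z + \frac{1}{n}\mathbf{P} Z M (Z^\top \mathbf{Q} Z)$, the factor $ZM$ zeroes out the query column \emph{before} it enters the sum, so the update to the query column is $\frac{1}{n}\mathbf{P}(\sum_{i=1}^n Z_{:,i})(Z_{:,i}^\top \mathbf{Q} Z_{:,\mathsf{query}})$ — a sum over in-context columns, each weighted by an inner product \emph{linear} in the query column. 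Thus if every non-query column of $Z^{(\ell)}$ is a fixed function of the examples (which is also maintained inductively, since the non-query columns never see the query column either, again by the mask), then the new query column is an \emph{affine} function of the old query column with example-dependent coefficients. Since $u_{\mathsf{query}}$ enters $Z^{(0)}$ linearly (with a constant $1$ in the top row absorbing the affine part), composing $L$ such affine-in-query maps keeps the query column affine in $u_{\mathsf{query}}$, hence $\hat{v}_{\mathsf{TF_{lin}}} = \rho^{(L)}$ is affine in $u_{\mathsf{query}}$. The affine constant is absorbed by noting $u$ includes a constant-$1$ coordinate, or alternatively one includes an intercept $\alpha$ and recovers the $\min_{\alpha,\beta}$ form of Proposition~\ref{thm:lsa-lower-bound}.

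The main subtlety — and the step I would be most careful about — is verifying that the non-query columns of $Z^{(\ell)}$ truly depend only on the examples, independently of $u_{\mathsf{query}}$, at every layer. This needs the mask $M$ to act correctly: when we update column $i \le n$ of $Z$, the term is $\frac{1}{n}\mathbf{P}(ZM)(Z^\top \mathbf{Q} Z)$ evaluated at column $i$, i.e. $\frac{1}{n}\mathbf{P}\big(\sum_{k=1}^n Z_{:,k}\big)\big(Z_{:,k}^\top$ — wait, more precisely $\frac{1}{n}\mathbf{P}\sum_{k=1}^n Z_{:,k}(Z_{:,k}^\top \mathbf{Q} Z_{:,i})$ — so the update to in-context column $i$ involves only in-context columns $k=1,\dots,n$ and column $i$ itself, never the query column. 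So the non-query block of $Z^{(\ell+1)}$ is determined entirely by the non-query block of $Z^{(\ell)}$, closing that half of the induction. The bilinear/feed-forward layers are not present here (this is $\mathsf{TF_{lin}}$), so the only layer type to analyze is $\mathsf{attn}$. One final point to handle cleanly: the bottom entry $\rho^{(\ell)}$ of the query column starts at $0$ and accumulates affine-in-$u_{\mathsf{query}}$ contributions, and one should confirm no layer introduces a quadratic-in-$u_{\mathsf{query}}$ term into it — but this is exactly the content of the linear-in-query-column observation above, since $\mathbf{Q}$ contracts one query-column factor against an example column, leaving the result linear (not quadratic) in $Z_{:,\mathsf{query}}$, and $\mathbf{P}$ acts on the example-side sum. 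I would present this as a clean inductive lemma on the query column's form and then deduce the bound via the conditional-expectation argument above.
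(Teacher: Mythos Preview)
Your proposal is correct and follows essentially the same approach as the paper: both establish by induction (exploiting the mask $M$) that the non-query columns of $Z^{(\ell)}$ depend only on the in-context examples while the query column remains linear in $u_{\mathsf{query}}$ with example-dependent coefficients. The only difference is in the final inequality: the paper computes $\beta^\star = -\Lambda^{-1}\EE[uv]$ explicitly and kills the cross term via the normal equations, whereas your conditioning-then-minimizing argument is slightly cleaner and avoids any invertibility assumption on $\Lambda = \EE[uu^\top]$.
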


\begin{proof}[Proof of Proposition~\ref{thm:lsa-lower-bound-gen}]
    Let the weights of the $\ell$th layer attention be
    \[\mathbf{Q}^{(\ell)} = \tbt{Q^{(\ell)}_{11}}{Q^{(\ell)}_{12}}{Q^{(\ell)}_{21}}{Q^{(\ell)}_{22}}, \mathbf{P}^{(\ell)} = \tbt{P^{(\ell)}_{11}}{P^{(\ell)}_{12}}{P^{(\ell)}_{21}}{P^{(\ell)}_{22}},\]
    where $Q_{11}^{(\ell)} \in \RR^{\bar{d} \times \bar{d}}, Q_{12}^{\ell} \in \RR^{\bar{d}}, Q_{21}^{(\ell)} \in \RR^{1 \times \bar{d}}$ and $Q_{22}^{(\ell)} \in \RR$, and similarly for $\mathbf{P}^{(\ell)}$.

    Consider an input matrix
    \[ \zeta^{(0)} =
    \begin{bmatrix}
        u_1 & u_2 & \cdots & u_n & u_{n+1} := u_{\mathsf{query}}\\
        v_1 & v_2 & \cdots & v_n & 0
    \end{bmatrix}
    := \tbt{U^{(0)}}{u_q^{(0)}}{{V^{(0)}}}{v_q^{(0)}},
    \]
    where the $U^{(0)} \in \RR^{\bar{d} \times n}, V^{(0)} \in \RR^{1 \times n}, u_q^{(0)} \in \RR^{\bar{d}}$ and $v_q^{(0)} \in \RR$.
    And let $\zeta^{(\ell)} = \stbt{U^{(\ell)}}{u_q^{(\ell)}}{{V^{(\ell)}}}{v_q^{(\ell)}}$ be the output of the $\ell$th layer.

    We shall use induction to show that:
    \begin{list}{{\tiny $\blacksquare$}}{\leftmargin=1em}
    \setlength{\itemsep}{-1pt}
        \item $U^{(\ell)}$ and $Y^{(\ell)}$ are independent of $u_{n+1}$.
        \item $u_q^{(\ell)}$ can be written as $A^{(\ell)} u_{n+1}$, where the matrix $A^{(\ell)}$ is independent of $u_{n+1}$.
        \item $v_q^{(\ell)}$ can be written as $\inner{u_{n+1}}{b^{(\ell)}}$, where $b^{(\ell)}$ is independent of $u_{n+1}$.
    \end{list}

    The base case when $\ell=0$ holds due to the independence of the columns.
    As for the inductive step, we compute the attention update directly. And for convenience, we shall drop superscripts $(\ell)$ and $(\ell-1)$ when they are clear from the context.
    \begin{align*}
        \zeta^{(\ell)}
        &= \zeta^{(\ell-1)} + \frac{1}{n} (\mathbf{P}^{(\ell)} \zeta^{(\ell-1)}) \tbt{I_{n}}{0}{0}{0} ({\zeta^{(\ell-1)}}^\top \mathbf{Q}^{(\ell)} {\zeta^{(\ell-1)}})   \\
        &= \zeta^{(\ell-1)} + \frac{1}{n} \left( \tbt{P_{11}^{(\ell)}}{P_{12}^{(\ell)}}{P_{21}^{(\ell)}}{P_{22}^{(\ell)}} \tbt{U^{(\ell-1)}}{u_q^{(\ell-1)}}{V^{(\ell-1)}}{v_q^{(\ell-1)}}\right) \tbt{I_{n}}{0}{0}{0} \\
        &\hspace{10em} \left(\tbt{{U^{(\ell-1)}}^\top}{{V^{(\ell-1)}}^\top}{{u_q^{(\ell-1)}}^\top}{v_q^{(\ell-1)}} \tbt{Q_{11}^{(\ell)}}{Q_{12}^{(\ell)}}{Q_{21}^{(\ell)}}{Q_{22}^{(\ell)}} \tbt{U^{(\ell-1)}}{u_q^{(\ell-1)}}{V^{(\ell-1)}}{v_q^{(\ell-1)}}\right)   \\
        &= \zeta + \frac{1}{n} \tbt{P_{11} U + P_{21} V}{P_{11} u_q + P_{12} v_q}{P_{21} U + P_{22} V}{P_{21} u_q + P_{22} y_q} \tbt{I_{n}}{0}{0}{0} \\
        &\hspace{10em} \left(\tbt{U^\top Q_{11} + V^\top Q_{21}}{U^\top Q_{12} + V^\top Q_{22}}{u_q^\top Q_{11} + v_q Q_{21}}{u_q^\top Q_{12} + v_q Q_{22}} \tbt{U}{u_q}{V}{v_q}\right)   \\
        &= \zeta + \frac{1}{n} \tbt{P_{11} U + P_{21} V}{P_{11} u_q + P_{12} v_q}{P_{21} U + P_{22} V}{P_{21} u_q + P_{22} y_q} \tbt{c_1}{c_2}{0}{0}, 
    \end{align*}
    where
    \begin{align*}
        c_1 &= U^\top Q_{11}U + V^\top Q_{21}U + U^\top Q_{12} V + V^\top Q_{22}V \\
        c_2 &= U^\top Q_{11} u_q + V^\top Q_{21} u_q + U^\top Q_{12} v_q + V^\top Q_{22} v_q.
    \end{align*}
    Therefore, we have
    \begin{align*}
        U^{(\ell)} &= U + \frac{1}{n} \left((P_{11}U + P_{21}V) (U^\top Q_{11}U + V^\top Q_{21}U + U^\top Q_{12} V + V^\top Q_{22}V)\right), \\
        Y^{(\ell)} &= Y + \frac{1}{n} \left((P_{21} U + P_{22} V) (U^\top Q_{11}U + V^\top Q_{21}U + U^\top Q_{12} V + V^\top Q_{22}V)\right), \\
        u_q^{(\ell)} &= u_q + \frac{1}{n} \left((P_{11}U + P_{21}V) (U^\top Q_{11} u_q + V^\top Q_{21} u_q + U^\top Q_{12} v_q + V^\top Q_{22} v_q)\right) \\
        &= A^{(\ell-1)} u_{n+1} + \frac{1}{n} \left((P_{11}U + P_{21}V) C_3\right) u_{n+1}, \\
        v_q^{(\ell)} &= v_q + \frac{1}{n} \left((P_{21} U + P_{22} V) (U^\top Q_{11} u_q + V^\top Q_{21} u_q + U^\top Q_{12} v_q + V^\top Q_{22} v_q)\right) \\
        &= {b^{(\ell-1)}}^\top u_{n+1} + \frac{1}{n} \left((P_{21} U + P_{22} V) C_3\right) u_{n+1},
    \end{align*}
    where
    \begin{align*}
        C_3 &= U^\top Q_{11} A^{(\ell-1)} + V^\top Q_{21} A^{(\ell-1)} + U^\top Q_{12} {b^{(\ell-1)}}^\top + V^\top Q_{22} {b^{(\ell-1)}}^\top.
    \end{align*}
    From here, it is not difficult to observe that our claim holds due to induction.

    We define $\Lambda = \EE_{\mathcal{D_U}}[u u^\top]$. Then, we compute the best linear fit as
    \begin{equation}
    \label{equ:compute-best-linear-fit}
    \begin{aligned}
        \beta^\star &:= \argmin_\beta \EE\left(\inner{\beta}{u_{n+1}} + v_{n+1}\right)^2 \\
        &= \argmin_\beta \left\{\beta^\top \Lambda \beta + 2 \beta^\top \EE_{\mathcal{D_U}}[uv] + \EE_{\mathcal{D_U}}[v^2]\right\} \\
        &= \argmin_\beta \norm{\Lambda^{1/2}\beta + \Lambda^{-1/2}\EE_{\mathcal{D_U}}[uv]}_2^2 \\
        &= -\Lambda^{-1} \EE_{\mathcal{D_U}}[uv].
    \end{aligned}
    \end{equation}
    
    Finally, we rewrite the expected error of the prediction $\hat{v}_{\mathsf{TF_{lin}}}$ as
    \begin{align*}
        \EE_{\mathcal{D_U}}\left[(\hat{v} + v_{n+1})^2\right]
        &= \EE_{\mathcal{D_U}}\left(\left(v_{n+1} + \inner{\beta^\star}{u_{n+1}}\right) - \left(\inner{\beta^\star}{u_{n+1}} - \hat{v}\right)\right)^2 \\
        &\ge \EE_{\mathcal{D_U}}\left(v_{n+1} + \inner{\beta^\star}{u_{n+1}}\right)^2 - 2 \; \EE_{\mathcal{D_U}} \left(v_{n+1} + \inner{\beta^\star}{u_{n+1}}\right)\left(\inner{\beta^\star}{u_{n+1}} - \hat{v}\right) \\
        &= \EE_{\mathcal{D_U}}\left(v_{n+1} + \inner{\beta^\star}{u_{n+1}}\right)^2 - 2 \; \EE_{\mathcal{D_U}} \, \tr \left(u_{n+1} \left(v_{n+1} + \inner{\beta^\star}{u_{n+1}}\right)\left(\beta^\star - b^{(L)}\right)^\top \right) \\
        &= \EE_{\mathcal{D_U}}\left(v_{n+1} + \inner{\beta^\star}{u_{n+1}}\right)^2 - 2 \;\tr \left(\left(\EE_{\mathcal{D_U}}[uv] + \Lambda \beta^\star\right) \left(\beta^\star - \EE_{\mathcal{D_U}} [b^{(L)}]\right)^\top \right) \\\
        &= \EE_{\mathcal{D_U}}\left(v_{n+1} + \inner{\beta^\star}{u_{n+1}}\right)^2,
    \end{align*}
    where we used our claim above on the 3rd and 4th line, and we used that fact that $\beta^\star = -\Lambda^{-1} \EE[uv]$ on the final line.
    Therefore, the population loss of the attention network's prediction cannot be better than the best linear model over the original prompt.
\end{proof}
 
\section{Proof of Theorem~\ref{thm:bilinear-construction}}
\label{sec:proof-bilinear-construction}

As we discussed in Section~\ref{sec:bilinear-construction}, a bilinear transformer block $\textsf{BTFB}$ can learn a quadratic functions in context over two steps:
\begin{list}{{\tiny $\blacksquare$}}{\leftmargin=1em}
\setlength{\itemsep}{-1pt}
    \item The bilinear feed-forward layer implements a kernel function that produces every possible quadratic monomials.
    \item The linear self-attention layer solves for a linear least-square objective on this kernel space.
\end{list}

Recall that, as described in Section~\ref{sec:in-context}, the prompt $Z \in \RR^{(\bar{d}+1) \times (n+1)}$ is specified by \eqref{equ:prompt-form}:
\begin{equation*}
    Z =
    \begin{bmatrix}
        1 & 1 & \cdots & 1 & 1 \\
        x_1 & x_2 & \cdots & x_n & x_{n+1} := x_{\textsf{query}} \\
        \mathbf{0} & \mathbf{0} & \cdots & \mathbf{0} & \mathbf{0} \\
        y_1 & y_2 & \cdots & y_n & 0
    \end{bmatrix},
\end{equation*}
where $\overline{d} = \binom{d+2}{2}$, $x_i \overset{i.i.d.}{\sim} \mathcal{N}(0, \mathbf{I}_d)$ and for coefficients $w \sim \mathcal{N}(0, \mathbf{I}_{\bar{d}})$, $y_i$'s can be written written as
\begin{equation*}
\begin{aligned}
    y_{i} 
    &= w_{00} + \sum_{j} w_{0j} x_{n+1}[j] + \sum_j w_{jj} x_{i}[j]^2 + \sum_{1 \le j < k \le d} w_{jk} x_{i}[j]  x_i[k].
\end{aligned}
\end{equation*}

\paragraph{Step 1: }
We want to find a bilinear layer whose output has the form
\begin{equation*}
    \textsf{bilin}(Z) =
    \begin{bmatrix}
        1 & 1 & \cdots & 1 & 1 \\
        \bar{x}_1 & \bar{x}_2 & \cdots & \bar{x}_n & \bar{x}_{n+1} \\
        y_1 & y_2 & \cdots & y_n & 0
    \end{bmatrix},
\end{equation*}
so that $\bar{x}$ contains every monomial in the quadratic function on $x$.

Recall that the bilinear layer is defined as
\[ \mathsf{bilin}(Z) = Z + \left(\tbt{\mathbf{W}_0}{\mathbf{0}}{\mathbf{0}}{0} Z\right) \odot \left(\tbt{\mathbf{W}_1}{\mathbf{0}}{\mathbf{0}}{0} Z\right), \]
We in fact only need parts of the weight matrices and let
\[\mathbf{W}_0 = \tbt{\mathbf{0}_{(d+1)\times(d+1)}}{\mathbf{0}}{\widetilde{\mathbf{W}}_0}{\mathbf{0}},\]
where $\widetilde{\mathbf{W}}_0$ is a $(\overline{d}-d-1) \times (d+1)$ matrix.
And similarly we define $\widetilde{\mathbf{W}}_1$.

We index rows $d+2$ to $\bar{d}$ by $\{(j, k) : 1 \le j \le k\}$.
And we define the weights by
\[ \begin{cases}
    \widetilde{\mathbf{W}}_0[(j, j), j+1] = 1, \widetilde{\mathbf{W}}_0[(j, j), 1] = -1, \widetilde{\mathbf{W}}_1[(j, j), j+1] = 1, \widetilde{\mathbf{W}}_1[(j, j), 1] = 1 & \text{if } j = k, \\
    \widetilde{\mathbf{W}}_0[(j, k), j+1] = 1, \widetilde{\mathbf{W}}_1[(j, k), k+1] = 1 & \text{otherwise}.
\end{cases} \]
Then we have
\[\begin{cases}
    \textsf{bilin}(Z)[(j, j), i] = (x_i[j] - 1)(x_i[j] + 1) = x_i[j]^2 - 1 & \text{if } j = k, \\
    \textsf{bilin}(Z)[(j, k), i] = x_i[j]x_i[k] & \text{otherwise}.
\end{cases}\]
Combining with the residual connection, we have
\[\bar{x}_i = (1, x_i[1], \dots, x_i[d], x_i[1]^2-1, \dots, x_i[1]x_i[d], x_i[2]^2-1, \dots, x_i[2]x_i[d], \dots, x_i[d]^2-1),\]
which matches the desirable kernel in \eqref{equ:kernel}.

\paragraph{Step 2:}
We note that in the output of the bilinear layer, we can write $y_i$'s as a linear function in $\bar{x}_i$'s:
\begin{equation}
\label{equ:restruct-quadratic-2}
\begin{aligned}
    y_{i} 
    &= w_{00} + \sum_{j} w_{0j} x_{n+1}[j] + \sum_j w_{jj} x_{i}[j]^2 + \sum_{1 \le j < k \le d} w_{jk} x_{i}[j]  x_i[k] \\
    &= \left(w_{00} + \sum_{j=1}^d w_{jj}\right) + \sum_{j} w_{0i} x_{n+1}[j] + \sum_j w_{jj} (x_{i}[j]^2 - 1)^2 + \sum_{1 \le j < k \le d} w_{jk} x_{i}[j]  x_i[k] \\
    &=: \sum_{0 \le j \le k \le d} \bar{w}_{jk} \bar{x}_i[j, k].\\
\end{aligned}
\end{equation}

Therefore, we can use the linear self-attention layer to implement a linear least-square estimator.
To compute the in-context learning loss, we leverage the following lemma.

\begin{lemma}
\label{thm:attn-loss}
    Consider an ICL prompt
    \begin{equation}
    \label{equ:prompt-form-gen-2}
        \zeta =
        \begin{bmatrix}
            u_1 & u_2 & \cdots & u_n & u_{n+1} := u_{\mathsf{query}}\\
            v_1 & v_2 & \cdots & v_n & 0
        \end{bmatrix}
        =: \tbt{U}{u_q}{V}{0},
    \end{equation}
    where, $(u_1, v_1), \dots, (u_{n+1}, v_{n+1}) \overset{i.i.d.}{\sim} \mathcal{D}$ for a distribution $\mathcal{D}$ over $\RR^{\bar{d}} \times \RR$.
    Let 
    \[ \Lambda = \EE_{\mathcal{D}}[u u^\top], \Phi = \mathrm{Cov}_{\mathcal{D}}[uv], \xi = \EE_{\mathcal{D}}[uv]. \]
    Then, for a single-layer linear self-attention with weights
    \[\mathbf{P} = \tbt{0_{d \times d}}{0}{0}{1}, \mathbf{Q} = \tbt{\Gamma}{0}{0}{0},\]
    the error of the prediction $\hat{v} = \mathsf{attn}(\zeta)[\bar{d}+1, n+1]$ satisfies
    \[\EE_\mathcal{D}\left[(\hat{v} + v_{n+1})^2\right] = \min_\beta \EE\left(\inner{\beta}{u_{n+1}} + v_{n+1}\right)^2 + \frac{1}{n} \tr\left(\Gamma \Lambda \Gamma^\top \Phi \right) + \xi^\top (\Gamma^\top + {\Lambda^{-1}})^\top \Lambda (\Gamma^\top + \Lambda^{-1}) \xi.\]
\end{lemma}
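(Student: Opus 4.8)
The plan is a direct computation of the prediction $\hat v$ in closed form, followed by a bias--variance style decomposition of its squared error using the i.i.d.\ structure of the prompt.

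First I would substitute the stated $\mathbf{P} = \stbt{0}{0}{0}{1}$, $\mathbf{Q} = \stbt{\Gamma}{0}{0}{0}$ into the LSA update \eqref{equ:attention}. Writing $\zeta = \stbt{U}{u_q}{V}{0}$ as in \eqref{equ:prompt-form-gen-2}, block multiplication gives $\mathbf{Q}\zeta = \stbt{\Gamma U}{\Gamma u_q}{0}{0}$, so the first block-row of $\zeta^\top \mathbf{Q}\zeta$ is $\obt{U^\top \Gamma U}{U^\top \Gamma u_q}$; the mask $M = \diag(\mathbf{I}_n,0)$ kills the last column of $\zeta M$, and left-multiplying by $\mathbf{P}$ keeps only the last row. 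Reading off the $(\bar d+1, n+1)$ entry yields
\[\hat v = \frac{1}{n}\sum_{i=1}^n v_i\, u_i^\top \Gamma\, u_{n+1} = \inner{\hat w}{u_{n+1}}, \qquad \hat w := \frac{1}{n}\sum_{i=1}^n v_i\, \Gamma^\top u_i \in \RR^{\bar d}.\]
Since the $n$ prompt pairs $(u_i,v_i)$ are i.i.d.\ and independent of $(u_{n+1},v_{n+1})$, the vector $\hat w$ is independent of the query. Expanding the square and using $\EE[u_{n+1}u_{n+1}^\top]=\Lambda$, $\EE[u_{n+1}v_{n+1}]=\xi$ gives
\[\EE_\mathcal{D}[(\hat v + v_{n+1})^2] = \EE[\hat w^\top \Lambda \hat w] + 2\,\EE[\hat w]^\top \xi + \EE_\mathcal{D}[v^2].\]

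Next I compute the two moments of $\hat w$. Linearity gives $\EE[\hat w] = \Gamma^\top \xi$. For the second moment, splitting the double sum into the $n$ diagonal terms and the $n(n-1)$ off-diagonal terms (which factor by independence) and using $\Phi = \mathrm{Cov}_\mathcal{D}[uv] = \EE_\mathcal{D}[v^2 u u^\top] - \xi\xi^\top$ yields
\[\EE[\hat w \hat w^\top] = \frac{1}{n}\,\Gamma^\top \EE_\mathcal{D}[v^2 u u^\top]\,\Gamma + \frac{n-1}{n}\,\Gamma^\top \xi\xi^\top \Gamma = \frac{1}{n}\,\Gamma^\top \Phi\, \Gamma + \Gamma^\top \xi\xi^\top \Gamma.\]
Substituting and using cyclicity of the trace gives
\[\EE_\mathcal{D}[(\hat v + v_{n+1})^2] = \frac{1}{n}\tr(\Gamma \Lambda \Gamma^\top \Phi) + \xi^\top \Gamma \Lambda \Gamma^\top \xi + 2\,\xi^\top \Gamma \xi + \EE_\mathcal{D}[v^2].\]
Finally I reconcile this with the claimed form: expanding $\xi^\top(\Gamma^\top + \Lambda^{-1})^\top \Lambda (\Gamma^\top + \Lambda^{-1})\xi = \xi^\top \Gamma\Lambda\Gamma^\top \xi + 2\,\xi^\top \Gamma \xi + \xi^\top \Lambda^{-1}\xi$ (using symmetry of $\Lambda$ and that $\xi^\top\Gamma\xi$ is a scalar), so it suffices to check $\min_\beta \EE(\inner{\beta}{u_{n+1}}+v_{n+1})^2 = \EE_\mathcal{D}[v^2] - \xi^\top \Lambda^{-1}\xi$; this follows by plugging the optimal $\beta^\star = -\Lambda^{-1}\xi$ from the computation \eqref{equ:compute-best-linear-fit} into $\beta^\top\Lambda\beta + 2\beta^\top\xi + \EE_\mathcal{D}[v^2]$. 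There is no genuine obstacle here: the one step requiring care is the second-moment computation of $\hat w$ --- correctly isolating the $1/n$ variance term and identifying $\EE_\mathcal{D}[v^2 u u^\top]$ with $\Phi + \xi\xi^\top$ --- together with the final algebraic matching; everything else is routine bookkeeping. (Invertibility of $\Lambda$ is part of the hypotheses and is used implicitly.)
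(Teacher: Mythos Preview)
Your proof is correct. Both you and the paper first compute $\hat v = \frac{1}{n}\sum_{i=1}^n v_i u_i^\top \Gamma u_{n+1}$ by direct expansion of the LSA update, but you then diverge in how you handle the squared error.

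The paper introduces both $\beta^\star = -\Lambda^{-1}\xi$ and $\hat\beta := -\Gamma^\top\xi$ up front and writes $\hat v + v_{n+1}$ as a three-term sum $(v_{n+1}+\inner{\beta^\star}{u_{n+1}}) + (\inner{\hat\beta}{u_{n+1}}+\hat v) - (\inner{\beta^\star}{u_{n+1}}+\inner{\hat\beta}{u_{n+1}})$, producing six terms; three cross terms are shown to vanish (two by the tower property conditioned on $u_{n+1}$, one by the normal equations for $\beta^\star$), and the remaining three are exactly the three summands in the lemma. Your route is more direct: you compute the first two moments of the random vector $\hat w = \frac{1}{n}\sum_i v_i\Gamma^\top u_i$ by splitting diagonal and off-diagonal terms, substitute, and only then algebraically massage the result into the stated form using $\min_\beta\EE(\inner{\beta}{u_{n+1}}+v_{n+1})^2 = \EE[v^2]-\xi^\top\Lambda^{-1}\xi$. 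The paper's decomposition has the advantage that each surviving term is immediately identified with its role (irreducible error, variance of $\hat w$, squared bias of $\hat w$ relative to $\beta^\star$) without any after-the-fact matching; your approach avoids having to check that three cross terms vanish and is shorter overall. Both are entirely standard bias--variance arguments and neither needs anything beyond the i.i.d.\ assumption and invertibility of $\Lambda$.
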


We apply this lemma with $u_i = \bar{x}_i$, $v_i = y_i$ and the distribution $\mathcal{D}$ corresponds to the distribution of $x_i$'s and fixed coefficients $w$.
Then,
\begin{align*} 
    \mathcal{L}(\textsf{BTFB}) 
    &= \EE_w\left[\EE_x\left[(\hat{y} + y_{n+1})^2 \mid w\right]\right] \\
    &= \EE_w\left[\min_\beta \EE\left(\inner{\beta}{\bar{x}_{n+1}} + y_{n+1}\right)^2 + \frac{1}{n} \tr\left(\Gamma \Lambda \Gamma^\top \Phi \right) + \xi^\top (\Gamma^\top + {\Lambda^{-1}})^\top \Lambda (\Gamma^\top + \Lambda^{-1}) \xi\right] \\
    &= \EE_w\left[\frac{1}{n} \tr\left(\Gamma \Lambda \Gamma^\top \Phi \right) + \xi^\top (\Gamma^\top + {\Lambda^{-1}})^\top \Lambda (\Gamma^\top + \Lambda^{-1}) \xi\right],
\end{align*}
where
\[ \Lambda = \EE_x[\bar{x}\bar{x}^\top], \Phi = \mathrm{Cov}_x[\bar{x}y \mid w], \xi = \EE_x[\bar{x}y \mid w], \]
and the last line follows because from \eqref{equ:restruct-quadratic-2}, we know that $y_{n+1}$ is linear in $\bar{x}_{n+1}$.

Note that, if we choose $\Gamma = -\Lambda^{-1}$, then
\[\mathcal{L}(\textsf{BTFB}) = \frac{1}{n} \EE_w\left[\tr\left(\Lambda^{-1} \Phi \right)\right].\]

It remains to compute $\EE_w\left[\tr\left(\Lambda^{-1} \Phi \right)\right] = \tr\left(\Lambda^{-1} \EE_w[\Phi] \right)$.
First, due to Lemma~\ref{thm:quadratic-basis}, $\Lambda = \EE[\bar{x}\bar{x}^\top]$ is diagonal with nonzero entries equal to either 1 or 2.
Next, we compute $\xi$, where we apply the expression for $y$ as written in \eqref{equ:restruct-quadratic-2},
\begin{align*}
    \xi = \EE_x[\bar{x}y] 
    = \EE_x[\bar{x}\inner{\bar{x}}{\bar{w}}] 
    = \left[\EE_x[\bar{x}[k]\inner{\bar{x}}{\bar{w}}] \vphantom{x^2}\right]_{k=1}^{\bar{d}}
    = \bigg[\EE_x[\bar{x}[k]^2]\bar{w}[k]\bigg]_{k=1}^{\bar{d}}
    = \Lambda \bar{w}.
\end{align*}
Taking the expectation over $w$ yields that
\[\EE_w[\Phi] = \Lambda \EE_w[\bar{w}\bar{w}^\top] \Lambda.\]
And since $w \sim \mathcal{N}(0, \mathbf{I}_{\bar{d}})$, the covariance matrix $\EE_w[\bar{w}\bar{w}^\top]$ after some permutation of the coordinates can be written as:
\[\begin{bmatrix}
    d+1 & \mathbf{1}_{1 \times d} & \mathbf{0} \\
    \mathbf{1}_{d \times 1} & \mathbf{I}_{d} & \mathbf{0} \\
    \mathbf{0} & \mathbf{0} & \mathbf{I}_{\bar{d}-d-1}
\end{bmatrix},\]
where the top-left $(d+1) \times (d+1)$ submatrix correspond to the coefficients $\bar{w}_{00}, \bar{w}_{11}, \dots, \bar{w}_{dd}$.
And under the same permutation, $\Lambda = \mathrm{diag}(1, 2\mathbf{I}_d, \mathbf{I}_{\bar{d}-d-1})$.

Therefore,
\[\mathcal{L}(\textsf{BTFB}) = \frac{1}{n} \EE_w\left[\tr\left(\Lambda^{-1} \Phi \right)\right] = \frac{1}{n} \left(d+1 + d/2 + \binom{d+2}{2}-d-1\right) = \frac{(d+2)(d+1) + d}{2n}.\]

\subsection{Proof of Lemma~\ref{thm:attn-loss}}
Recall that from our computation in \eqref{equ:compute-best-linear-fit}
\[\beta^\star := \argmin_\beta \EE\left(\inner{\beta}{u_{n+1}} + v_{n+1}\right)^2 = -\Lambda^{-1} \xi.\]
Let $\hat\beta = -\Gamma^\top \xi$.
We rewrite the expected error as
\begin{align*}
    \EE\left[(\hat{v} + v_{n+1})^2\right]
    &= \EE\left(\left(v_{n+1} + \inner{\beta^\star}{u_{n+1}}\right) + \left(\inner{\hat\beta}{u_{n+1}} + \hat{v}\right) - \left(\inner{\beta^\star}{u_{n+1}} + \inner{\hat\beta}{u_{n+1}}\right)\right)^2 \\
    &= \underbrace{\EE\left(v_{n+1} + \inner{\beta^\star}{u_{n+1}}\right)^2}_{\textcircled{1}} + \underbrace{\EE\left(\inner{\hat\beta}{u_{n+1}} + \hat{v}\right)^2}_{\textcircled{2}} + \underbrace{\EE\left(\inner{\beta^\star}{u_{n+1}} + \inner{\hat\beta}{u_{n+1}}\right)^2}_{\textcircled{3}} \\
    &\quad - \underbrace{2 \; \EE\left[\left(v_{n+1} + \inner{\beta^\star}{u_{n+1}}\right) \left(\inner{\beta^\star}{u_{n+1}} + \inner{\hat\beta}{u_{n+1}}\right)\right]}_{\textcircled{4}} \\
    &\quad - \underbrace{2 \; \EE\left[\left(\inner{\hat\beta}{u_{n+1}} + \hat{v}\right) \left(\inner{\beta^\star}{u_{n+1}} + \inner{\hat\beta}{u_{n+1}}\right)\right]}_{\textcircled{5}} \\
    &\quad + \underbrace{2 \; \EE\left[\left(v_{n+1} + \inner{\beta^\star}{u_{n+1}}\right) \left(\inner{\hat\beta}{u_{n+1}} + \hat{v}\right)\right]}_{\textcircled{6}}
\end{align*}

To further simplify these terms, we must first derive an explicit form for $\hat{v}$.
To this end, we directly expand the attention layer:
\begin{align*}
    \zeta^+
    &= \zeta + \frac{1}{n} (\mathbf{P} \zeta) \tbt{I_{n}}{0}{0}{0} ({\zeta}^\top \mathbf{Q} {\zeta})   \\
    &= \zeta + \frac{1}{n} \tbt{0_{d\times d}}{0}{0}{1} \tbt{U}{u_q}{V}{0} \tbt{I_{n}}{0}{0}{0} \left(\tbt{U^\top}{V^\top}{u_q^\top}{0} \tbt{\Gamma}{0}{0}{0} \tbt{U}{u_q}{V}{0}\right) \\
    &= \zeta + \frac{1}{n} \tbt{0}{0}{V}{0} \tbt{I_n}{0}{0}{0} \tbt{U^\top}{V^\top}{u_q^\top}{0} \tbt{\Gamma U}{\Gamma u_q}{0}{0} \\
    &= \zeta + \frac{1}{n} \tbt{0}{0}{V}{0} \tbt{U^\top \Gamma U}{U^\top \Gamma u_q}{0}{0} \\
    &= \tbt{U}{u_q}{V + \frac{1}{n} V U^\top \Gamma U}{\frac{1}{n} V U^\top \Gamma u_q}.
\end{align*}
Therefore, we have that
\[\hat{v} = \frac{1}{n} V U^\top \Gamma u_q = \frac{1}{n} \inner{\sum_{i=1}^n v_i u_i}{\Gamma u_{n+1}}.\]
Then, for any fixed $u$, we have
\begin{align*}
    \EE\left[\inner{\hat\beta}{u_{n+1}} + \hat{v} \mid u_{n+1} = u\right]
    = \EE\left[\frac{1}{n} \sum_{i=1}^n \Gamma^\top v_i u_i + \hat\beta\right]^\top u = 0.
\end{align*}
Therefore, under the tower property of expectation, terms \textcircled{5} and \textcircled{6} are 0.

As for term \textcircled{4}, we have
\begin{align*}
    \frac{1}{2} \textcircled{4}
    &= \EE\left[\tr\left(u_{n+1} (v_{n+1} + \inner{\beta^\star}{u_{n+1}}) (\beta^\star + \hat\beta)^\top\right)\right] \\
    &= \tr\left((\xi + \Lambda \beta^\star)(\beta^\star+\hat\beta)^\top\right) = 0.
\end{align*}

For term \textcircled{2}, we have
\begin{align*}
    \textcircled{2}
    &= \EE \left[\left(\frac{1}{n} \sum_{i=1}^n \Gamma^\top v_i u_i + \hat\beta\right)^\top u_{n+1} {u_{n+1}}^\top \left(\frac{1}{n} \sum_{i=1}^n \Gamma^\top v_i u_i + \hat\beta\right) \right] \\
    &= \EE \left[\left(\frac{1}{n} \sum_{i=1}^n v_i u_i - \xi \right)^\top \Gamma u_{n+1} {u_{n+1}}^\top \Gamma^\top \left(\frac{1}{n} \sum_{i=1}^n v_i u_i - \xi \right) \right] \\
    &= \frac{1}{n} \tr\left(\Gamma \Lambda \Gamma^\top \Phi \right)
\end{align*}

Latsly, for term \textcircled{3}, we have
\begin{align*}
    \textcircled{3}
    &= \EE \left[(\beta^\star + \hat\beta)^\top u_{n+1} {u_{n+1}}^\top (\beta^\star + \hat\beta)\right] \\
    &= \xi^\top (\Gamma^\top + {\Lambda^{-1}})^\top \Lambda (\Gamma^\top + \Lambda^{-1}) \xi.
\end{align*}

Adding up these terms yields the desirable equation.

\subsection{Verification of Proposition~\ref{thm:lsa-lower-bound}}
\label{sec:verification}

\begin{figure}[!htb]
    \centering
    \includegraphics[width=0.4\textwidth]{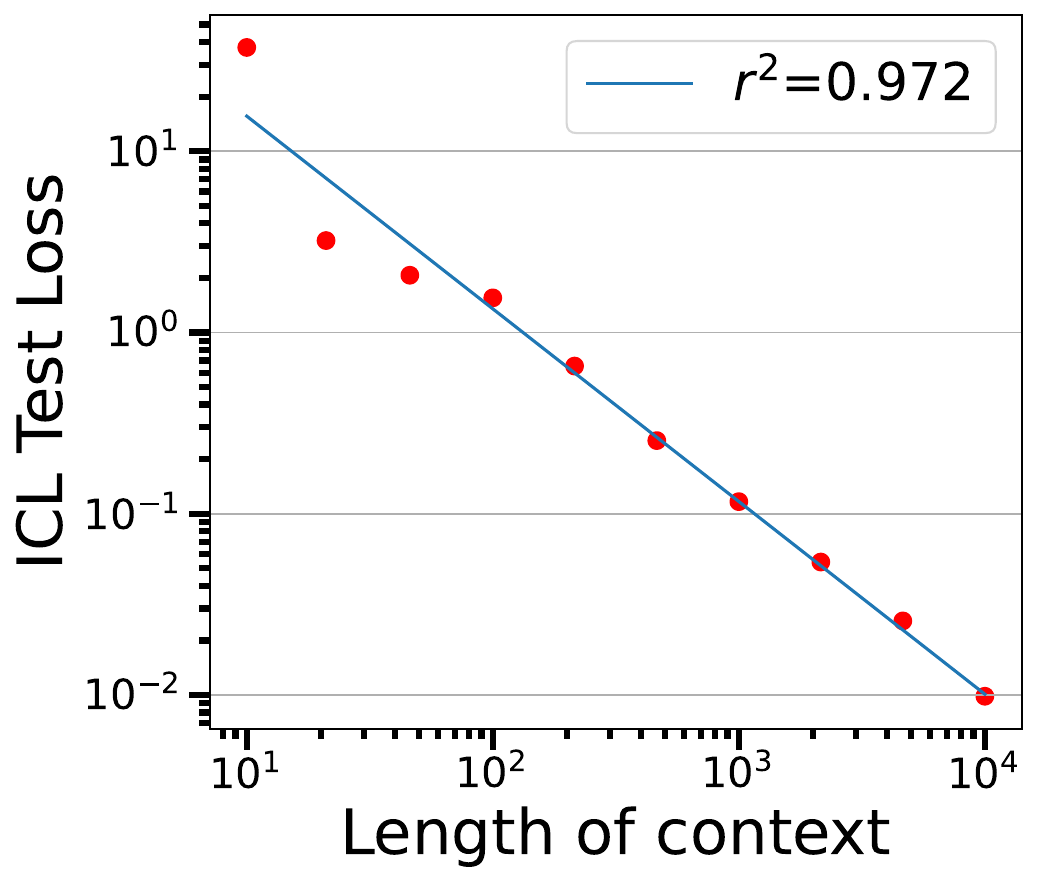}
    \caption{
        Quadratic ICL loss of the bilinear Transformer block construction associated with Proposition~\ref{thm:bilinear-construction}.
    }
    \label{fig:verification}
\end{figure}

To verify Proposition~\ref{thm:lsa-lower-bound}, we apply the bilinear Transformer block construction associated with this result and compute its quadratic ICL loss at various context lengths $n$.
Proposition~\ref{thm:bilinear-construction} states that the ICL loss should scale inversely with $1/n$, which corresponds to a straight line on the log-log scale.
This theoretical prediction is consistent with our empirical findings, shown in Figure~\ref{fig:verification}, where the linear fit on the log-log scale yields an $r^2$ value of 0.97.

It’s important to note that our construction provides only an upper bound on the optimal ICL loss, and much smaller context lengths $n$ may suffice to achieve satisfactory performance.
Additionally, multiple copies of the bilinear Transformer block can be stacked to improve ICL performance, enabling the Transformer to mimic multiple steps of gradient descent on quadratic kernel regression.
 
\section{Proof of Lemma~\ref{thm:quadratic-basis}}
\label{sec:proof-quadratic-basis}

Recall that, in \eqref{equ:quadratic-basis}, we consider a collection of random variable
\begin{equation*}
    \left\{v_{ij} \vphantom{x^2}\right\}_{0 \le i \le j \le d} :=
    \begin{cases}
            1 & \text{if  } i = j = 0, \\
            x[j] & \text{if  } i= 0, j \in \{1, \dots, d\}, \\
            \frac{1}{\sqrt{2}} (x[i]^2-1) & \text{if  } i = j \in \{1, \dots, d\}, \\
            x[i]x[j] & \text{if  } 1 \le i < j \le d,
    \end{cases}
\end{equation*}
where $x \sim \mathcal{N}(0, \mathbf{I}_d)$.
We also define inner product of random variables as $\inner{u}{v} := \EE_x[uv]$.

We check that they are unit norm:
\begin{list}{{\tiny $\blacksquare$}}{\leftmargin=1em}
\setlength{\itemsep}{-1pt}
    \item $\EE[v_{00}^2] = \EE[1^2] = 1$.
    \item $\EE[v_{0j}^2] = \EE[x[j]^2] = 1$.
    \item $\EE[v_{ii}^2] = \EE[\frac{1}{2}(x[i]^2 - 1)^2)] = \frac{1}{2}(\EE[x[i]^4] - 2\EE[x[i]^2] + 1) = 1.$
    \item $\EE[v_{ij}^2] = \EE[x[i]^2 x[j]^2] = \EE[x[i]^2] \EE[x[j]^2] = 1$.
\end{list}

Then, we check that they are orthogonal:
\begin{list}{{\tiny $\blacksquare$}}{\leftmargin=1em}
\setlength{\itemsep}{-1pt}
    \item $\EE[v_{00}v_{0j}] = \EE[x[j]] = 0$.
    \item $\EE[v_{00}v_{ii}] = \EE[\frac{1}{\sqrt{2}} (x[i]^2-1)] = \frac{1}{\sqrt{2}}(\EE[x[i]^2] - 1) = 0$.
    \item $\EE[v_{00}v_{ij}] = \EE[x[i]x[j]] = \EE[x[i]] \EE[x[j]] = 0$.
    \item $\EE[v_{0j}v_{ii}] = \EE[x[j]\frac{1}{\sqrt{2}} (x[i]^2-1)] = \frac{1}{\sqrt{2}} (\EE[x[j]x[i]^2] - \EE[x[j]]) = \frac{1}{\sqrt{2}} \EE[x[j]x[i]^2]$ and note that
    \[\EE[x[j]x[i]^2] =
    \begin{cases}
        \EE[x[j]] \EE[x[i]]^2 = 0 & \text{if } i \neq j,\\
        \EE[x[i]^3] = 0 & \text{if } i = j.
    \end{cases}\]
    \item We have
    \[\EE[v_{0k} v_{ij} = \EE[x[k] \cdot x[i]x[j]] = 
    \begin{cases} 
        \EE[x[i]^2]\EE[x[j]] = 0& \text{if } k = i, \\
        \EE[x[i]]\EE[x[j]^2] = 0 & \text{if } k = j, \\
        \EE[x[k]] \EE[x[i]] \EE[x[j]] = 0& \text{otherwise.}
    \end{cases}\] 
    \item $\EE[v_{kk} v_{ij}] = \EE[\frac{1}{\sqrt{2}} (x[k]^2-1) \cdot x[i][j]] = \frac{1}{\sqrt{2}} (\EE[x[k]^2x[i]x[j]] - \EE[x[i]x[j]]) = \frac{1}{\sqrt{2}} \EE[x[k]^2x[i]x[j]]$, and note that
    \[\EE[x[k]^2 \cdot x[i]x[j]] = 
    \begin{cases} 
        \EE[x[i]^3]\EE[x[j]] = 0& \text{if } k = i, \\
        \EE[x[i]]\EE[x[j]^3] = 0 & \text{if } k = j, \\
        \EE[x[k]^2] \EE[x[i]] \EE[x[j]] = 0& \text{otherwise.}
    \end{cases}\] 
\end{list}

We conclude that $\{v_{ij}\}_{0 \le i \le j \le d}$ form an orthonormal set.

\section{Proof of Proposition~\ref{thm:embed-lower-bound}}
\label{sec:proof-embed-lower-bound}

Recall that we consider inputs $x_i \overset{i.i.d.}{\sim} \mathcal{N}(0, I_d)$.
Let the output of the bilinear layer of $\mathsf{BTFB}$ be 
\begin{equation*}
    \bar{Z} =
    \begin{bmatrix}
        \widetilde{x}_1 & \widetilde{x}_2 & \cdots & \widetilde{x}_n & \widetilde{x}_{n+1} \\
        y_1 & y_2 & \cdots & y_n & 0
    \end{bmatrix},
\end{equation*}
where $\widetilde{x}_i\in \RR^{\bar{d}}$.
Given Proposition~\ref{thm:lsa-lower-bound-gen}, we have that
\[\mathcal{L}(\mathsf{BTFB}) = \EE_w \left[ \EE_x[(\hat{y} + y_{n+1})^2] \right] \ge \EE_w\left[\min_\beta \EE_x\left(y_{n+1} + \inner{\beta}{\widetilde{x}_{n+1}}\right)^2\right].\]

We consider the orthonormal set $\{v_{ij}\}_{0 \le i \le j \le d}$ as defined in \eqref{equ:quadratic-basis}. 
Next, define the projection operator $P_1$ on a random variable $z$ as:
\[P_1(z) = \sum_{0 \le i \le j \le d} \EE[z v_{ij}] v_{ij}.\]
Note that 
\[\EE[(z - P_1(z)) v_{ij}] = \EE[z v_{ij}] - \sum_{0 \le i' \le j' \le d} \EE[z v_{i'j'}] \EE[v_{ij} v_{i'j'}] = 0.\]
So, we have $\EE[(z - P_1(z)) P_1(z)] = 0.$
Recall that, for fixed coefficients $w$, we have
\begin{align*}
    y_{n+1} 
    &= w_{00} + \sum_{i} w_{0i} x_{n+1}[i] + \sum_i w_{ii} x_{n+1}[i]^2 + \sum_{1 \le i < j \le d} w_{ij} x_{n+1}[i]  x_{n+1}[j] \\
    &= \left(w_{00} + \sum_{i=1}^d w_{ii}\right) + \sum_{i} w_{0i} x_{n+1}[i] + \sum_i w_{ii} (x_{n+1}[i]^2 - 1) + \sum_{1 \le i < j \le d} w_{ij} x_{n+1}[i]  x_{n+1}[j]
\end{align*}
Then, by applying Lemma~\ref{thm:quadratic-basis}, it is not difficult to see that 
\begin{align*} 
    P_1(y_{n+1}) 
    &= \left(w_{00} + \sum_{i=1}^d w_{ii}\right) + \sum_{i=1}^d w_{0i} v_{0i} + \sum_{i=1}^d \sqrt{2} w_{ii} v_{ii} + \sum_{1 \le i < j \le d} w_{ij} v_{ij} \\
    &=: \bar{w}_{00} + \sum_{i} \bar{w}_{0i} v_{0i} + \sum_i \bar{w}_{ii} v_{ii} + \sum_{1 \le i < j \le d} \bar{w}_{ij} v_{ij}.\\
\end{align*}
Therefore, for any fixed $\beta \in \RR^{\bar{d}}$, we have that:
\begin{align*}
    \EE_x\left(y_{n+1} + \inner{\beta}{\widetilde{x}_{n+1}}\right)^2 
    &\ge \EE_x\left(P_1(y_{n+1}) + P_1\left(\inner{\beta}{\widetilde{x}_{n+1}}\right)\right)^2 \\
    &= \EE_x\left(\sum_{0 \le i \le j \le d}\bar{w}_{ij} v_{ij} +\sum_{k=1}^{\bar{d}} \beta_k \sum_{0 \le i \le j \le d} \EE\left[v_{ij}\widetilde{x}_{n+1}[k]\right]v_{ij}\right)^2 \\
    &= \EE_x \sum_{0 \le i < j \le d} \left(\left(\bar{w}_{ij} + \sum_{k=1}^{\bar{d}} \beta_k \EE\left[v_{ij}\widetilde{x}_{n+1}[k]\right] \right)v_{ij}\right)^2 \\
    &= \sum_{0 \le i < j \le d} \left(\bar{w}_{ij} + \sum_{k=1}^{\bar{d}} \beta_k \EE\left[v_{ij}\widetilde{x}_{n+1}[k]\right] \right)^2.
\end{align*}

Define a $\binom{d+2}{2}$-dimensional real vector space indexed by $\{(i, j) : 0 \le i \le j \le d\}$.
Define vectors $\vec{w}$ and $\{u_k\}_{k=1}^{\bar{d}}$ on this vector space so that their are entries are:
\[ \vec{w} = \left[\bar{w}_{ij} \vphantom{w^2}\right]_{0 \le i \le j \le d}, u_k = \left[\EE[v_{ij} \widetilde{x}_{n+1}[k]] \vphantom{w^2}\right]_{0 \le i \le j \le d}.\]
Note that, by the theorem statement, the span of $u_k$'s have dimension $\tilde{d} < \binom{d+2}{2}$.

Let $P_2$ be the orthogonal projection onto the the span of $u_k$'s.
Then, for any fixed $w$, we have
\begin{align*} 
    \min_\beta \EE_x\left(y_{n+1} + \inner{\beta}{\widetilde{x}_{n+1}}\right)^2 
    &\ge \min_\beta \sum_{0 \le i \le j \le d} \left(\bar{w}_{ij} + \sum_k \beta_k \EE[v_{ij}\widetilde{x}_{n+1}[k]] \right)^2 \\
    &= \min_\beta \sum_{0 \le i \le j \le d} \left(\bar{w}_{ij} + \sum_k \beta_k \EE[v_{ij}\widetilde{x}_{n+1}[k]] \right)^2 \\
    &= \norm{\vec{w} - P_2(\vec{w})}^2.
\end{align*}

Now we take the expectation over $w$ and let $\mathrm{Cov}[\vec{w}] = \Sigma$.
\begin{align*}
    \EE_w\left[\min_\beta \EE_x\left(y_{n+1} + \inner{\beta}{\widetilde{x}_{n+1}}\right)^2\right]
    &\ge \EE_w \left[\norm{\vec{w} - P_2(\vec{w})}^2 \right] \\
    &= \EE\left[\tr\left((I-P_2) \vec{w}\vec{w}^\top\right)\right]\\
    &= \tr((I - P_2)\Sigma),
\end{align*}
where we used the fact that $P_2$ is idempotent.
Since $\Sigma$ is positive semi-definite, we have
\[\tr((I - P_2)\Sigma) = \tr\left(\Sigma^{1/2}(I - P_2)(I - P_2)\Sigma^{1/2}\right) = \norm{(I - P_2)\Sigma^{1/2}}_F^2.\]
And due to the matrix norm identity $\norm{AB}_F \le \norm{A}_F \norm{B}_{\mathrm{op}}$, we have
\[\norm{I - P_2}_F \le \norm{(I-P_2)\Sigma^{1/2}}_F \cdot \norm{\Sigma^{-1/2}}_{\rm op}.\]

Since $w \sim \mathcal{N}(0, \mathbf{I}_{\bar{d}})$, after some permutation of the coordinates, the covariance matrix $\Sigma = \EE_w[\bar{w}\bar{w}^\top]$  can be written as:
\[\begin{bmatrix}
    d+1 & \mathbf{1}_{1 \times d} & \mathbf{0} \\
    \mathbf{1}_{d \times 1} & 2 \mathbf{I}_{d} & \mathbf{0} \\
    \mathbf{0} & \mathbf{0} & \mathbf{I}_{\bar{d}-d-1}
\end{bmatrix},\]
where the top-left $(d+1) \times (d+1)$ submatrix corresponds to the coefficients $\bar{w}_{00}, \bar{w}_{11}, \dots, \bar{w}_{dd}$.
The eigenvalues of this top-left submatrix are $d+2, \, 1, \, 2, \, \dots, \, 2$ with corresponding eigenvectors
\[[d, 1, \dots 1], [-1, 1, \dots 1], [0, -1, 1, 0, \dots], [0, -1, 0, 1, 0, \dots], \dots, [0, -1, 0, \dots, 1].\]

Therefore, the minimum eigenvalue of $\Sigma$ is 1.
It follows that
\[\tr((I - P_2)\Sigma) = \norm{(I - P_2)\Sigma^{1/2}}_F^2 \ge \norm{I - P_2}_F^2 = \binom{d+2}{2} - \widetilde{d},\]
where the last equality follows from the fact that $I-P_2$ is also an orthogonal projection matrix.
We conclude that
\[\mathcal{L}(\mathsf{BTFB})  \ge \binom{d+2}{2} - \widetilde{d}.\]

\section{Proof of Theorem~\ref{thm:approx-opt}}
\label{sec:proof-approx-opt}

Recall that we are given the bilinear layer corresponding to the kernel function \eqref{equ:kernel}, where
\[\bar{x}_i = (1, x_i[1], \dots, x_i[d], x_i[1]^2-1, \dots, x_i[1]x_i[d], x_i[2]^2-1, \dots, x_i[2]x_i[d], \dots, x_i[d]^2-1),\]
with $x_i \overset{\mathrm{i.i.d.}}{\sim} \mathcal{N}(0, \mathbf{I}_d)$.
The prompt given to the self-attention layer takes the form
\begin{equation*}
    Z =
    \begin{bmatrix}
        \bar{x}_1 & \bar{x}_2 & \cdots & \bar{x}_1 & \bar{x}_{n+1} := \bar{x}_{\textsf{query}} \\
        y_1 & y_2 & \cdots & y_n & 0
    \end{bmatrix}
    =: \tbt{X}{x_q}{Y}{0}.
\end{equation*}
Let $\bar{d} = \binom{d+2}{2}$.
And for $w \sim \mathcal{N}(0, \mathbf{I}_{\bar{d}})$, we have
\begin{equation}
\label{equ:restruct-quadratic}
\begin{aligned}
    y_{i} 
    &= w_{00} + \sum_{j} w_{0j} x_{n+1}[j] + \sum_j w_{jj} x_{i}[j]^2 + \sum_{1 \le j < k \le d} w_{jk} x_{i}[j]  x_i[k] \\
    &= \left(w_{00} + \sum_{j=1}^d w_{jj}\right) + \sum_{j} w_{0i} x_{n+1}[j] + \sum_j w_{jj} (x_{i}[j]^2 - 1)^2 + \sum_{1 \le j < k \le d} w_{jk} x_{i}[j]  x_i[k] \\
    &=: \sum_{0 \le j \le k \le d} \bar{w}_{jk} \bar{x}_i[j, k].\\
\end{aligned}
\end{equation}
For convenience, we sometimes use a single index so that $y_i = \sum_{\ell=1}^{\bar{d}} \bar{w}_{\ell} \bar{x}_\ell$.

There are two parts to this proof:
\begin{list}{{\tiny $\blacksquare$}}{\leftmargin=1em}
\setlength{\itemsep}{-1pt}
    \item Show that the optimal linear self-attention weights must have the sparsity pattern
        \[\mathbf{P} = \tbt{0_{d \times d}}{0}{0}{1}, \mathbf{Q} = \tbt{\Gamma}{0}{0}{0}.\]
    \item Show that any choice of $\Gamma$ that deviates too much from $-\EE[\bar{x}\bar{x}^\top]^{-1}$ must have a greater ICL loss than our upper bound in Theorem~\ref{thm:bilinear-construction}.
\end{list}

\paragraph{Part 1:}
Let us write the attention weights as
\[\mathbf{Q} = \tbt{Q_{11}}{Q_{12}}{Q_{21}}{Q_{22}}, \mathbf{P} = \tbt{P_{11}}{P_{12}}{P_{21}}{P_{22}},\]
where $Q_{11} \in \RR^{d \times d}, Q_{12} \in \RR^{d}, Q_{21} \in \RR^{1 \times d}$ and $Q_{22} \in \RR$, and similarly for $\mathbf{P}$.

Then, we expand the definition of the self-attention layer:
{\small
\begin{align*}
    \mathsf{attn}(Z)
    &= Z + \frac{1}{n} (\mathbf{P} Z) \tbt{I_{n}}{0}{0}{0} (Z^\top \mathbf{Q} Z)   \\
    &= Z + \frac{1}{n} \left( \tbt{P_{11}}{P_{12}}{P_{21}}{P_{22}} \tbt{X}{x_q}{Y}{0}\right) \tbt{I_{n}}{0}{0}{0} \left(\tbt{X^\top}{Y^\top}{x_q^\top}{0} \tbt{Q_{11}}{Q_{12}}{Q_{21}}{Q_{22}} \tbt{X}{x_q}{Y}{0}\right)   \\
    &= Z + \frac{1}{n} \tbt{P_{11} X + P_{21} Y}{P_{11} x_q}{P_{21} X + P_{22} Y}{P_{21} x_q} \tbt{I_{n}}{0}{0}{0} \left(\tbt{X^\top Q_{11} + Y^\top Q_{21}}{X^\top Q_{12} + Y^\top Q_{22}}{x_q^\top Q_{11}}{x_q^\top Q_{12}} \tbt{X}{x_q}{Y}{0}\right)   \\
    &= Z + \frac{1}{n} \tbt{P_{11} X + P_{21} Y}{P_{11} x_q}{P_{21} X + P_{22} Y}{P_{21} x_q} \tbt{*}{X^\top Q_{11} x_q + Y^\top Q_{21} x_q}{0}{0}.
\end{align*}
}
So, $\hat{y} = \mathsf{attn}(Z)[\bar{d}+1, n+1] = \frac{1}{n} (P_{21} X + P_{22} Y)(X^\top Q_{11} + Y^\top Q_{21})x_q$.
If we let
\[H = \frac{1}{n} \sum_{i=1}^n \tbo{\bar{x}_i}{y_i} [\bar{x}_i^\top \; y_i],\]
$b^\top$ be the last row of $\mathbf{P}$, and $A$ be the first $\bar{d}$ columns of $\mathbf{Q}$, then we have
\[\hat{y} = b^\top H A \bar{x}_{n+1}.\]
The ICL loss can be written as
    \[\mathcal{L}(\mathsf{BTFB}) =: L(b, A) = \EE_Z[(b^\top H A \bar{x}_{n+1} + \bar{w}^\top \bar{x}_{n+1})^2].\]
And we can set every entry of $\mathbf{P}, \mathbf{Q}$ outside of $b$ and $A$ as zero.

Define $\lambda_j = \EE[\bar{x}_{n+1}[j]^2]$.
And note that from Lemma~\ref{thm:quadratic-basis}, $\EE[\bar{x}_{n+1}[j]\bar{x}_{n+1}[j']] = 0$ for any $j \neq j'$.
Let $A_j$ be the $j$th column of $A$ and denote $G_j = b A_j^\top$, then we can write the loss function as:
\[ L(b, A) = \sum_{j=1}^{\bar{d}} \lambda_j \EE \left[(b^\top H A + w^\top)[j]^2\right] = \sum_{j=1}^{\bar{d}} \lambda_j \EE \left[(b^\top H A_j + \bar{w}_j)^2\right] = \sum_{j=1}^{\bar{d}} \lambda_j \EE \left[(\tr(H G_j^\top) + \bar{w}_j)^2\right].\]

Consider each of $j = 1, \dots, \bar{d}$.
Let $C^{(j)} := \argmin_C \EE \left[(\tr(H C^\top) + \bar{w}_j)^2\right].$
Then, we can write
\begin{align*}
    \tr(H {C^{(j)}}^\top)
    &= \sum_{k=1}^{\bar{d}}\sum_{\ell=1}^{\bar{d}} C^{(j)}_{k\ell} H_{k\ell} + \sum_{k=1}^{\bar{d}} C^{(j)}_{k, \bar{d}+1} H_{k, \bar{d}+1} + \sum_{k=1}^{\bar{d}} C^{(j)}_{\bar{d}+1, k} H_{\bar{d}+1, k} + C^{(j)}_{\bar{d}+1,\bar{d}+1} H_{\bar{d}+1,\bar{d}+1} \\
    &= \underbrace{\sum_{k=1}^{\bar{d}}\sum_{\ell=1}^{\bar{d}} \frac{C^{(j)}_{k\ell}}{n} \sum_i \bar{x}_i[k]\bar{x}_i[\ell]}_{\tau_1} + \underbrace{\sum_{k=1}^{\bar{d}} \frac{C^{(j)}_{\bar{d}+1, k} + C^{(j)}_{k, \bar{d}+1}}{n} \sum_i \bar{x}_i[k] y_i}_{\tau_2} + \underbrace{\frac{C^{(j)}_{\bar{d}+1,\bar{d}+1}}{n} \sum_i y_i^2}_{\tau_3}.
\end{align*}
It follows that
\begin{align*}
    \EE \left[(\tr(H {C^{(j)}}^\top) + \bar{w}_j)^2\right]
    &= \EE[(\tau_1 + \tau_3)^2] + \EE[(\tau_2 + \bar{w}_j)^2] + 2 \EE[(\tau_1 + \tau_3)(\tau_2 + \bar{w}_j)] \\
    &= \EE[(\tau_1 + \tau_3)^2] + \EE[(\tau_2 + \bar{w}_j)^2] \\
    &\ge \EE[(\tau_2 + \bar{w}_j)^2].
\end{align*}
Recall the expression for $y_i$'s in \eqref{equ:restruct-quadratic}, we know that $\tau_1, \tau_3$ are even in $w$ and $\tau_2, \bar{w}_j$ are odd in $w$.
Then, due to the symmetry of $w$, the cross term in the equation above is 0.

Therefore, we must have $\tau_1 = \tau_3 = 0$, which means that the entries of $C$ are zero except on the last row and column.
Then, we can achieve the minimum of $\EE \left[(\tr(H G_j^\top) + \bar{w}_j)^2\right]$ by setting
\[b = [0, \dots, 0, 1], A_j = \left[C^{(j)}_{\bar{d}+1, 1} + C^{(j)}_{1, \bar{d}+1}, \; \dots, \; C^{(j)}_{\bar{d}+1, \bar{d}} + C^{(j)}_{\bar{d},\bar{d}+1}, \; 0\right].\]

Repeating this analysis to all $j = 1, \dots, \bar{d}$ leads to the conclusion that the optimal attention weights can be written as
\[\mathbf{P} = \tbt{0_{d \times d}}{0}{0}{1}, \mathbf{Q} = \tbt{\Gamma}{0}{0}{0}.\]

\paragraph{Part 2:}
Now, given the form of the optimal attention weights $\mathbf{P}, \mathbf{Q}$, we optimize for $\Gamma$.
Let the minimizer be $\Gamma^\star$ and we write $\Gamma^\star = - \EE[\bar{x}\bar{x}^\top]^{-1} + \Delta$.

Recall that from Lemma~\ref{thm:attn-loss}, for any fixed $\bar{w}$, the attention layer's prediction error can be written as
\[\EE_x\left[(\hat{y} + y_{n+1})^2\right] = \min_\beta \EE\left(\inner{\beta}{\bar{x}_{n+1}} + y_{n+1}\right)^2 + \frac{1}{n} \tr\left(\Gamma \Lambda \Gamma^\top \Phi \right) + \xi^\top (\Gamma^\top + {\Lambda^{-1}})^\top \Lambda (\Gamma^\top + \Lambda^{-1}) \xi,\]
where 
\[ y_i = \inner{\bar{x}_i}{\bar{w}}, \Lambda = \EE_x[\bar{x} \bar{x}^\top], \Phi = \mathrm{Cov}_x[\bar{x}y], \xi = \EE_x[\bar{x}y]. \]
Note that because $\Phi$ and $\Lambda$ are positive semidefinite,
\[\tr\left(\Gamma \Lambda \Gamma^\top \Phi \right) = \tr\left(\Phi^{1/2} \Gamma \Lambda \Gamma^\top \Phi^{1/2} \right) = \tr\left((\Phi^{1/2} \Gamma \Lambda^{1/2}) (\Phi^{1/2} \Gamma \Lambda^{1/2})^\top\right) \ge 0.\]
Therefore,
\[\EE_x\left[(\hat{y} + y_{n+1})^2\right] \ge \xi^\top (\Gamma^\top + {\Lambda^{-1}})^\top \Lambda (\Gamma^\top + \Lambda^{-1})\xi.\]
After taking the expectation over $w$, we find that the minimum ICL loss is:
\begin{flalign*}
    && \min_\Gamma \mathcal{L}
    &\ge \EE_w\left[\xi^\top ({\Gamma^\star}^\top + {\Lambda^{-1}})^\top \Lambda ({\Gamma^\star}^\top + \Lambda^{-1}) \xi\right] \\
    && &= \EE_w\left[\tr\left(\Delta \Lambda \Delta^\top \xi\xi^\top\right)\right] \\
    && &= \tr\left(\Delta \Lambda \Delta^\top \EE[\xi\xi^\top]\right) && \vartriangleright \text{ Define } \Sigma := \EE[\xi\xi^\top]\\
    && &= \tr\left((\Lambda^{1/2} \Delta^\top \Sigma^{1/2})^\top(\Lambda^{1/2} \Delta^\top \Sigma^{1/2})\right) \\
    && &= \norm{\Lambda^{1/2} \Delta^\top \Sigma^{1/2}}_F^2.
\end{flalign*}

And by Theorem~\ref{thm:bilinear-construction}, we must have
\[\norm{\Lambda^{1/2} \Delta^\top \Sigma^{1/2}}_F^2 \le \frac{(d+2)(d+1) + d}{2n}.\]
Because of the matrix norm identity $\norm{AB}_F \le \norm{A}_F \norm{B}_{\mathrm{op}}$, we have
\[\norm{\Delta}_F = \norm{\Delta^\top}_F \le \norm{\Lambda^{1/2} \Delta^\top \Sigma^{1/2}}_F \norm{\Lambda^{-1/2}}_{\mathrm{op}} \norm{\Sigma^{-1/2}}_{\mathrm{op}}.\]
Now, it suffices to find the smallest eigenvalue of $\Sigma$ and $\Lambda$.

First, due to Lemma~\ref{thm:quadratic-basis}, $\Lambda = \EE[\bar{x}\bar{x}^\top]$ is diagonal with nonzero entries equal to either 1 or 2.
Thus, $\norm{\Lambda^{-1/2}}_{\mathrm{op}} = \max(1/\sqrt{2}, 1) = 1$.
Next, we directly compute $\xi$, where we apply the expression for $y$ as written in \eqref{equ:restruct-quadratic},
\begin{align*}
    \xi = \EE_x[\bar{x}y] 
    = \EE_x[\bar{x}\inner{\bar{x}}{\bar{w}}] 
    = \left[\EE_x[\bar{x}[k]\inner{\bar{x}}{\bar{w}}] \vphantom{x^2}\right]_{k=1}^{\bar{d}}
    = \bigg[\EE_x[\bar{x}[k]^2]\bar{w}[k]\bigg]_{k=1}^{\bar{d}}
    = \Lambda \bar{w}.
\end{align*}
Taking the expectation over $w$ yields that
\[\Sigma = \Lambda \EE_w[\bar{w}\bar{w}^\top] \Lambda.\]
And since $w \sim \mathcal{N}(0, \mathbf{I}_{\bar{d}})$, the covariance matrix $\EE_w[\bar{w}\bar{w}^\top]$ after some permutation of the coordinates can be written as:
\[\begin{bmatrix}
    d+1 & \mathbf{1}_{1 \times d} & \mathbf{0} \\
    \mathbf{1}_{d \times 1} & \mathbf{I}_{d} & \mathbf{0} \\
    \mathbf{0} & \mathbf{0} & \mathbf{I}_{\bar{d}-d-1}
\end{bmatrix},\]
where the top-left $(d+1) \times (d+1)$ submatrix correspond to the coefficients $\bar{w}_{00}, \bar{w}_{11}, \dots, \bar{w}_{dd}$.
The eigenvalues of this top-left submatrix are $\frac{1}{2}(d+2 \pm \sqrt{d^2+4d}), \, 1, \, \dots, \, 1$ with corresponding eigenvectors
\[[\frac{1}{2}(d \pm \sqrt{d^2+4d}), 1, \dots 1], [0, -1, 1, 0, \dots], [0, -1, 0, 1, 0, \dots], \dots, [0, -1, 0, \dots, 1].\]
Therefore, we have
\[\norm{\Sigma^{-1/2}}_{\mathrm{op}} \le \norm{\Lambda^{-1}}_{\mathrm{op}} \sqrt{\frac{2}{d+2-\sqrt{d^2+4d}}} = \sqrt{\frac{2}{d+2-\sqrt{d^2+4d}}}.\]
We can approximate the RHS by a Taylor expansion:
\begin{align*}
    \sqrt{\frac{2}{d+2-\sqrt{d^2+4d}}}
    &= \sqrt{\frac{2/(d+2)}{1-\sqrt{1-4/(d+2)^2}}} \\
    &= \sqrt{\frac{2/(d+2)}{2/(d+2)^2 - O(d^{-4})}} \\
    &= \sqrt{(d+2)\frac{1}{1-O(d^{-2})}} \\
    &= \sqrt{d+2 + O(d^{-2})}.
\end{align*}

In conclusion, we have
\begin{align*}
    \norm{\Delta}_F 
    &\le \norm{\Lambda^{1/2} \Delta^\top \Sigma^{1/2}}_F \norm{\Lambda^{-1/2}}_{\mathrm{op}} \norm{\Sigma^{-1/2}}_{\mathrm{op}} \\
    &= \sqrt{\frac{(d+2)(d+1) + d}{2n}} \cdot \sqrt{\frac{2}{d+2-\sqrt{d^2+4d}}} \\
    &= n^{-1/2} \sqrt{\frac{(d+2)(d+1) + d}{d+2-\sqrt{d^2+4d}}} \\
    &\in O\left(\frac{d^{3/2}}{\sqrt{n}}\right),
\end{align*}
as desired.
 
\section{Proof of Theorem~\ref{thm:block-construction}}
\label{sec:app-block-construction}
\begingroup
\renewcommand*{\arraystretch}{1.25}

We shall state a more complete version of Theorem~\ref{thm:block-construction}, with the construction explicitly specified.
Note that we use \% to denote the remainder operator, with $a \% b$ having the range $\{1, \dots, b\}$. 

\begin{theorem}
    For the quadratic function in-context learning problem as described in Section~\ref{sec:in-context}, we consider a $\mathsf{TF_{bilin}}$ with $2L$ layers and $\bar{d} = 2d+1$.
    For $\ell = 1, \dots, L$, we choose the $\ell$th bilinear feed-forward layer weights so that
    \[\mathbf{W}_0^{(\ell)} = 
    \begin{bmatrix}
        \mathbf{0}_{(d+1) \times 1} &\mathbf{0}_{(d+1)\times d} & \mathbf{0}_{(d+1)\times d} \\
        \mathbf{0}_{d \times 1} &\widetilde{\mathbf{W}}_0^{(\ell)} & \mathbf{0}_{d \times d}
    \end{bmatrix}, 
    \mathbf{W}_1^{(\ell)} = 
    \begin{bmatrix}
        \mathbf{0}_{(d+1) \times 1} &\mathbf{0}_{(d+1)\times d} & \mathbf{0}_{(d+1) \times d} \\
        \mathbf{0}_{d \times 1} &\widetilde{\mathbf{W}}_1^{(\ell)} & \mathbf{0}_{d \times d}
    \end{bmatrix},
    \]
    with
    \[\widetilde{\mathbf{W}}_0^{(\ell)} = I_d, \widetilde{\mathbf{W}}_1^{(\ell)} = 
    \begin{cases}
        E_1 & \text{if } \ell = 1, \\
        -E_{(\ell-1) \% d} + E_{\ell \% d} & \text{otherwise},\\
    \end{cases}\]
    where $E_j$ denotes the matrix where the $j$th column is equal to all 1's and 0 everywhere else.
    And we choose the $\ell$th self-attention layer weights as
    \[\mathbf{P}^{(2\ell)} = \tbt{0_{d \times d}}{0}{0}{1}, \mathbf{Q}^{(2\ell)} = \tbt{\Gamma^{(\ell)}}{0}{0}{0}.\]

    Then, for $\ell = 1, \dots, L$, the forward pass of $\mathsf{TF_{bilin}}$ follows the form:
    
    \begin{align*}
        Z^{(0)} &= Z =: \begin{bmatrix}
        1 & 1 & \cdots & 1 \\
        x_1 & x_2 & \cdots & x_{n+1} \\
        0 & 0 & \cdots & 0 \\
        y^{(0)}_1 & y^{(0)}_2 & \cdots & y^{(0)}_{n+1}
        \end{bmatrix}, \\
        Z^{(2\ell-1)} &= \begin{bmatrix}
        1 & 1 & \cdots & 1 \\
        x_1 & x_2 & \cdots & x_{n+1} \\
        x_1[\ell \% d] \cdot x_1 & x_2[\ell \% d] \cdot x_2 & \cdots & x_{n+1}[\ell \% d] \cdot x_{n+1} \\
        y^{(\ell-1)}_1 & y^{(\ell-1)}_2 & \cdots & y^{(\ell-1)}_{n+1}
        \end{bmatrix}, \\ \\
        Z^{(2\ell)} &= \begin{bmatrix}
        1 & 1 & \cdots & 1 \\
        x_1 & x_2 & \cdots & x_{n+1} \\
        x_1[\ell \% d] \cdot x_1 & x_2[\ell \% d] \cdot x_2 & \cdots & x_{n+1}[\ell \% d] \cdot x_{n+1} \\
        y^{(\ell)}_1 & y^{(\ell)}_2 & \cdots & y^{(\ell)}_{n+1}
        \end{bmatrix}.
    \end{align*}
    
    Furthermore, $\mathsf{TF_{bilin}}$ implements block-coordinate descent over a quadratic regression in the sense that
    \[\hat{y}_i^{(\ell)} = y_i + \obt{1}{x_{\mathsf{query}}} W^{(\ell)} \tbo{1}{x_{\mathsf{query}}}, \]
    where $W^{(\ell)}$ are iterates of the block-coordinate descent update \eqref{equ:block-update} with initialization $W^{(0)} = 0$, step sizes $\eta^{(\ell)} = -{\Gamma^{(\ell)}}^\top$ and the sequence of feature subsets can be written as
    \[ (1, x_i[1], \dots, x_i[d], x_i[1] x_i[\ell], \dots, x_i[d] x_i[\ell]). \]
\end{theorem}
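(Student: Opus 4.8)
The plan is to prove, by strong induction on $\ell = 0, 1, \dots, L$, that $Z^{(2\ell)}$ has exactly the form displayed in the theorem and, crucially, that its bottom row is $\big(y_i + f(x_i; W^{(\ell)})\big)_{i=1}^{n+1}$ with the convention $y_{n+1}:=0$ and $f(x;W):=\obt{1}{x}W\tbo{1}{x}$, where $W^{(0)}=0$ and $W^{(\ell)}$ is obtained from $W^{(\ell-1)}$ by the block-coordinate update \eqref{equ:block-update} with step size $\eta^{(\ell)}=-{\Gamma^{(\ell)}}^\top$ and feature block $b_\ell$ equal to the monomial set $\big(1, x[1],\dots,x[d], x[1]x[\ell\%d],\dots,x[d]x[\ell\%d]\big)$. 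Since $f(x;W)=\inner{\mathrm{svec}(W)}{\phi(x)}$ for $\phi$ the plain monomial map and $\phi(x)$ restricted to the coordinates of $b_\ell$ is precisely the feature vector $\bar x^{(\ell)}:=(1,x,x[\ell\%d]\cdot x)$, the induction both pins down the forward-pass form and yields the claimed block-coordinate descent identity (reading column $i=n+1$ gives $\hat y^{(\ell)}=f(x_{\mathsf{query}};W^{(\ell)})$). The base case $\ell=0$ is immediate: $Z^{(0)}=Z$ has bottom row $(y_1,\dots,y_n,0)=\big(y_i+f(x_i;0)\big)_i$, and its middle block is $\mathbf 0$.

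For the inductive step I first analyze the $\ell$-th bilinear layer acting on $Z^{(2\ell-2)}$. Because $\mathbf W_0^{(\ell)},\mathbf W_1^{(\ell)}$ vanish outside their bottom $d$ rows and their columns $2,\dots,d+1$, the Hadamard term in \eqref{equ:bilinear} is supported on rows $d+2,\dots,2d+1$, and on column $i$ it equals $(\widetilde{\mathbf W}_0^{(\ell)}x_i)\odot(\widetilde{\mathbf W}_1^{(\ell)}x_i)$, where I use that rows $1,\dots,d+1$ of $Z^{(2\ell-2)}$ still read $(1,x_i)$ since no earlier layer modifies them. With $\widetilde{\mathbf W}_0^{(\ell)}=I_d$ and $\widetilde{\mathbf W}_1^{(\ell)}=-E_{(\ell-1)\%d}+E_{\ell\%d}$ (resp.\ $E_1$ at $\ell=1$, where the residual in those rows is $\mathbf 0$), this term equals $\big(x_i[\ell\%d]-x_i[(\ell-1)\%d]\big)\cdot x_i$; adding the residual $x_i[(\ell-1)\%d]\cdot x_i$ carried in $Z^{(2\ell-2)}$ telescopes to $x_i[\ell\%d]\cdot x_i$, which is exactly the middle block of $Z^{(2\ell-1)}$. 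The masked bottom row is untouched, so each bilinear block merely exchanges the previous quadratic slice for the next.

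Next I expand the $\ell$-th linear self-attention layer on $Z^{(2\ell-1)}$, whose columns are $\bar x_i^{(\ell)}\in\RR^{2d+1}$ over labels $y_i^{(\ell-1)}$. Since $\mathbf P^{(2\ell)}$ has its only nonzero entry at position $(2d+2,2d+2)$, the update $\frac1n\mathbf P^{(2\ell)}ZM(Z^\top\mathbf Q^{(2\ell)}Z)$ is supported on the last row, leaving rows $1,\dots,2d+1$ fixed; and the same matrix expansion used in the proof of Lemma~\ref{thm:attn-loss} gives, for every column $j$, the new label $y_j^{(\ell)}=y_j^{(\ell-1)}+\frac1n\sum_{i=1}^n y_i^{(\ell-1)}(\bar x_i^{(\ell)})^\top\Gamma^{(\ell)}\bar x_j^{(\ell)}$, the mask $M$ restricting the sum to $i\le n$ while $j$ still ranges over all columns (in particular the query residual $y_{n+1}^{(\ell-1)}=f(x_{\mathsf{query}};W^{(\ell-1)})$ is carried forward by the skip connection). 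By the induction hypothesis $y_i^{(\ell-1)}=y_i+f(x_i;W^{(\ell-1)})$ is the current regression residual $r_i^{(\ell-1)}$, so $\frac1n\sum_i r_i^{(\ell-1)}\bar x_i^{(\ell)}=\nabla_{b_\ell}L(W^{(\ell-1)})$; substituting the block update $\mathrm{svec}(W^{(\ell)})|_{b_\ell}=\mathrm{svec}(W^{(\ell-1)})|_{b_\ell}+{\Gamma^{(\ell)}}^\top\nabla_{b_\ell}L(W^{(\ell-1)})$, coordinates outside $b_\ell$ unchanged, into $f(x_j;\cdot)=\inner{\mathrm{svec}(\cdot)}{\phi(x_j)}$ reproduces exactly the attention increment, hence $y_j^{(\ell)}=y_j+f(x_j;W^{(\ell)})$, closing the induction.

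The main obstacle is the bookkeeping in this last step: identifying precisely which entries of $\mathrm{svec}(W)$ lie in $b_\ell$, handling the cyclic index $\ell\%d$, and noting that the constant and linear coordinates sit in every $b_\ell$ and are thus re-updated at each step — still a legitimate instance of \eqref{equ:block-update}, which only asks that $\bigcup_\ell b_\ell$ exhaust all indices — and then checking that under the svec normalization the block gradient is $\frac1n\sum_i r_i\bar x_i^{(\ell)}$ with no stray constants, so that $\eta^{(\ell)}=-{\Gamma^{(\ell)}}^\top$ holds verbatim. Everything else (the two layerwise computations and the telescoping residual) is routine.
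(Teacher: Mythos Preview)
Your proposal is correct and follows essentially the same approach as the paper's proof: induction on $\ell$, explicit computation of the bilinear layer (with the same telescoping argument to swap out $x_i[(\ell-1)\%d]\cdot x_i$ for $x_i[\ell\%d]\cdot x_i$), explicit expansion of the attention layer to obtain $y_j^{(\ell)}=y_j^{(\ell-1)}+\frac1n\sum_{i=1}^n y_i^{(\ell-1)}(\bar x_i^{(\ell)})^\top\Gamma^{(\ell)}\bar x_j^{(\ell)}$, and finally identification of this increment with the block-coordinate update. Your acknowledgment of the $\mathrm{svec}$ bookkeeping as the main obstacle is apt; the paper handles it at the same level of detail you outline.
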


\begin{proof}
    We shall prove the claim through induction.
    First, we consider the bilinear feed-forward layers.
    For the $\ell$-th bilinear layer, its input is $Z^{(2\ell-2)}$, and with our chosen weights, we have
    {\small
    \begin{align*}
    Z^{(1)} 
    &= Z^{(0)} + \left(\tbt{\mathbf{W_1}^{(1))}}{0}{0}{0} Z^{(0)}\right) \odot \left(\tbt{\mathbf{W_2}^{(1)}}{0}{0}{0} Z^{(0)}\right) \\
    &= 
    \begin{bmatrix}
        1 & 1 & \cdots & 1 \\
        x_1 & x_2 & \cdots & x_{n+1} \\
        0 & 0 & \cdots & 0 \\
        y^{(0)}_1 & y^{(0)}_2 & \cdots & y^{(0)}_{n+1}
    \end{bmatrix}
    + \begin{bmatrix}
        1 & 1 & \cdots & 1 \\
        x_1 & x_2 & \cdots & x_{n+1} \\
        x_1[1] \cdot x_1 & x_2[1] x_2 & \cdots & x_{n+1}[1]\cdot x_{n+1} \\
        y^{(0)}_1 & y^{(0)}_2 & \cdots & y^{(0)}_{n+1}
    \end{bmatrix}\\
    &= \begin{bmatrix}
        1 & 1 & \cdots & 1 \\
        x_1 & x_2 & \cdots & x_{n+1} \\
        x_1[1] \cdot x_1 & x_2[1] x_2 & \cdots & x_{n+1}[1]\cdot x_{n+1} \\
        y^{(0)}_1 & y^{(0)}_2 & \cdots & y^{(0)}_{n+1}
    \end{bmatrix}.
    \end{align*}
    }
    and for $\ell > 1$,
    {\footnotesize
    \begin{align*}
    Z^{(2\ell-1)} 
    &= Z^{(2\ell-2)} + \left(\tbt{\mathbf{W_1}^{(\ell)}}{0}{0}{0} Z^{(2\ell-2)}\right) \odot \left(\tbt{\mathbf{W_2}^{(\ell)}}{0}{0}{0} Z^{(2\ell-2)}\right) \\
    &= 
    \begin{bmatrix}
        1 & 1 & \cdots & 1 \\
        x_1 & x_2 & \cdots & x_{n+1} \\
        x_1[(\ell-1) \% d] \cdot x_1 & x_2[(\ell-1) \% d] \cdot x_2 & \cdots & x_{n+1}[(\ell-1) \% d] \cdot x_{n+1} \\
        y^{(\ell-1)}_1 & y^{(\ell-1)}_2 & \cdots & y^{(\ell-1)}_{n+1}
    \end{bmatrix} \\
    &\quad +
    \begin{bmatrix}
        1 & 1 & \cdots & 1 \\
        x_1 & x_2 & \cdots & x_{n+1} \\
        (x_1[\ell \% d] - x_1[(\ell-1) \% d]) \cdot x_1 & (x_2[\ell \% d] - x_2[(\ell-1) \% d]) \cdot x_2 & \cdots & (x_{n+1}[\ell \% d] - x_{n+1}[(\ell-1) \% d])\cdot x_{n+1} \\
        y^{(\ell-1)}_1 & y^{(\ell-1)}_2 & \cdots & y^{(\ell-1)}_{n+1}
    \end{bmatrix}\\
    &= \begin{bmatrix}
        1 & 1 & \cdots & 1 \\
        x_1 & x_2 & \cdots & x_{n+1} \\
        x_1[\ell \% d] \cdot x_1 & x_2[\ell \% d] \cdot x_2 & \cdots & x_{n+1}[\ell \% d] \cdot x_{n+1} \\
        y^{(\ell-1)}_1 & y^{(\ell-1)}_2 & \cdots & y^{(\ell-1)}_{n+1}
    \end{bmatrix}.
    \end{align*}
    }

    Next, we turn our focus to the self-attention layers.
    For the $\ell$-th self-attention layer, its input is $Z^{(2\ell-1)}$.
    Denote
    \[Z^{(2\ell-1)} =: \tbt{X}{x_q}{Y}{y_q},
    \]
    where the $X \in \RR^{\bar{d} \times n}, Y \in \RR^{1 \times n}, x_q \in \RR^{\bar{d}}$ and $y_q \in \RR$.
    {\small
    \begin{align*}
        Z^{(2\ell)}
        &= Z^{(2\ell-1)} + \frac{1}{n} \left(\mathbf{P}^{(\ell)} Z^{(2\ell-1)}\right) \tbt{I_{n}}{0}{0}{0} \left({Z^{(2\ell-1)}}^\top \mathbf{Q}^{(\ell)} Z^{(2\ell-1)}\right)   \\
        &= Z^{(2\ell-1)} + \frac{1}{n} \tbt{0_{d\times d}}{0}{0}{1} \tbt{X}{x_q}{Y}{y_q} \tbt{I_{n}}{0}{0}{0} \left(\tbt{X^\top}{Y^\top}{x_q^\top}{y_q} \tbt{\Gamma^{(\ell)}}{0}{0}{0} \tbt{X}{x_q}{Y}{y_q}\right) \\
        &= Z^{(2\ell-1)} + \frac{1}{n} \tbt{0}{0}{Y}{y_q} \tbt{I_n}{0}{0}{0} \tbt{X^\top}{V^\top}{x_q^\top}{y_q} \tbt{\Gamma^{(\ell)} X}{\Gamma^{(\ell)} x_q}{0}{0} \\
        &= Z^{(2\ell-1)} + \frac{1}{n} \tbt{0}{0}{Y}{y_q} \tbt{X^\top \Gamma X}{X^\top \Gamma^{(\ell)} x_q}{0}{0} \\
        &= \tbt{X}{x_q}{Y + \frac{1}{n} Y X^\top \Gamma^{(\ell)} X}{y_q + \frac{1}{n} Y X^\top \Gamma^{(\ell)} x_q} \\
        &= \begin{bmatrix}
        1 & 1 & \cdots & 1 \\
        x_1 & x_2 & \cdots & x_{n+1} \\
        x_1[\ell \% d] \cdot x_1 & x_2[\ell \% d] \cdot x_2 & \cdots & x_{n+1}[\ell \% d] \cdot x_{n+1} \\
        y^{(\ell)}_1 & y^{(\ell)}_2 & \cdots & y^{(\ell)}_{n+1}
        \end{bmatrix},
    \end{align*}
    }
    where 
    \begin{align*}
        y_{i}^{(\ell)} &= y_{i}^{(\ell-1)} + \inner{\bar{x}_i^{(\ell)}}{\frac{1}{n}\sum_{j=1}^n{\Gamma^{(\ell)}}^\top \bar{x}_j^{(\ell)} y_{j}^{(\ell-1)}}, \\
        \bar{x}_i^{(\ell)} &= (1, x_i, x_i[\ell \% d] \cdot x_i).
    \end{align*}

    Now we compare $y^{(\ell)}$ against the iterates of block-coordinate descent through an induction argument.
    Let
    \[f(x; W) = \obt{1}{x} W \tbo{1}{x}.\]
    Note that for any $j \in b_{\ell}$:
    \begin{align*}
        \partial w_j \left(\frac{1}{2n} \sum_{i=1}^n \left(f(x_i, W^{(\ell-1)}) + y_i\right)^2\right)
        &= \frac{1}{n} \sum_{i=1}^n \left(f(x_i, W^{(\ell-1)}) + y_i\right) \widetilde{x}_i[j] \\
        &= \frac{1}{n} \sum_{i=1}^n y^{(\ell-1)}_i \widetilde{x}_i[j] \\
        &= \frac{1}{n} \sum_{i=1}^n y^{(\ell-1)}_i \bar{x}^{(\ell)}_i[j],
    \end{align*}
    where we use $\widetilde{x}_i$ to denote the collection all quadratic monomials on $x_i$ and note that $\bar{x}^{(\ell)}_i = \widetilde{x}_i[b_\ell]$.
    Therefore,
    \begin{align*}
        y_i + f(x_i, W^{(\ell)})
        &= y_i + f(x_i, W^{(\ell-1)}) - \inner{\bar{x}_i^{(\ell)}}{\frac{1}{n} \sum_{i=1}^n \eta^{(\ell)}\bar{x}_i y^{(\ell-1)}_i},
    \end{align*}
    which by induction is equal to $y_i^{(\ell)}$ if we set $\eta^{(\ell)} = -{\Gamma^{(\ell)}}^\top$. 
    So we are done.
\end{proof}

\endgroup

\section{Proof of Corollary~\ref{thm:poly-block-construction}}
\label{sec:proof-poly-block-construction}
Because the attention layers constructed in Theorem~\ref{thm:block-construction} only modify the last row, the only change from the proof of Theorem~\ref{thm:block-construction} is to design a sequence of feature subsets that cycle through all degree-$p$ polynomial features.

Recall that, for matrix parameters $\mathbf{W}_0, \mathbf{W}_1 \in \RR^{\bar{d} \times \bar{d}}$, we define the bilinear layer as
\begin{equation*}
    \mathsf{bilin}(Z) = Z + \left(\tbt{\mathbf{W}_0}{\mathbf{0}}{\mathbf{0}}{0} Z\right) \odot \left(\tbt{\mathbf{W}_1}{\mathbf{0}}{\mathbf{0}}{0} Z\right),
\end{equation*}
where $\odot$ denotes the Hadamard product.
Then, we partition the weights into submatricies of the following sizes:
\[
    \begin{bmatrix}
        (d+1) \times 1 & (d+1)\times d & (d+1)\times d \\
        d \times 1 & d \times d & d \times d
    \end{bmatrix}.
\]
Under this partition, we reserve the first $d+1$ dimensions of the feature space to the original variables $(1, x[1], \dots, x[d])$ and use the remaining $\bar{d}-d-1 = d$ dimensions as the scratchpad for computing nonlinear features.
We can define the feed-forward layers to perform the following useful operations of the features:
\begin{list}{{\tiny $\blacksquare$}}{\leftmargin=1em}
\setlength{\topsep}{-1pt}
\setlength{\itemsep}{-3pt}
    \item We can erase all of the currently computed nonlinear features:
    \[\mathbf{W}_0 = 
    \begin{bmatrix}
        \mathbf{0}_{(d+1) \times 1} &\mathbf{0}_{(d+1)\times d} & \mathbf{0}_{(d+1)\times d} \\
        \mathbf{0}_{d \times 1} & \mathbf{0}_{d \times d} & \mathbf{I}_d
    \end{bmatrix},
    \mathbf{W}_1 = 
    \begin{bmatrix}
        \mathbf{0}_{(d+1) \times 1} &\mathbf{0}_{(d+1)\times d} & \mathbf{0}_{(d+1) \times d} \\
        -\mathbf{1}_{d \times 1} & \mathbf{0}_{d \times d} & \mathbf{0}_{d \times d}
    \end{bmatrix}.
    \]
    \item If the scratchpad space is all 0's, we can generate quadratic features of the form $x[j]x[1], \dots, x[j]x[d]$:
    \[\mathbf{W}_0 = 
    \begin{bmatrix}
        \mathbf{0}_{(d+1) \times 1} &\mathbf{0}_{(d+1)\times d} & \mathbf{0}_{(d+1)\times d} \\
        \mathbf{0}_{d \times 1} & \mathbf{I}_0 & \mathbf{0}_{d \times d}
    \end{bmatrix}, 
    \mathbf{W}_1 = 
    \begin{bmatrix}
        \mathbf{0}_{(d+1) \times 1} &\mathbf{0}_{(d+1)\times d} & \mathbf{0}_{(d+1) \times d} \\
        \mathbf{0}_{d \times 1} &  \mathbf{E}_j & \mathbf{0}_{d \times d}
    \end{bmatrix},
    \]
    where $\mathbf{E}_j$ denotes the matrix where the $j$th column is all 1's and 0 everywhere else.
    \item Lastly, we can multiply every existing nonlinear feature by the variable $x[j]$:
    \[
    \mathbf{W}_0 =
        \begin{bmatrix}
        \mathbf{0}_{(d+1) \times 1} &\mathbf{0}_{(d+1)\times d} & \mathbf{0}_{(d+1)\times d} \\
        \mathbf{0}_{d \times 1} & \mathbf{0}_{d \times d} & \mathbf{I}_d
    \end{bmatrix},
    \mathbf{W}_1 = 
    \begin{bmatrix}
        \mathbf{0}_{(d+1) \times 1} &\mathbf{0}_{(d+1)\times d} & \mathbf{0}_{(d+1) \times d} \\
        -\mathbf{1}_{d \times 1} &  \mathbf{E}_j & \mathbf{0}_{d \times d}
    \end{bmatrix}.
    \]
\end{list}
We note that the original variables $(1, x[1], \dots, x[d])$ are always carried in the forward pass by the residual connection.

Now, let $g(x) = x[j_1]x[j_2]\cdots x[j_{p-1}]$ be a monomial with degree equal to $p-1$.
Then we can generate the features $x[1]g(x), \dots, x[d]g(x)$ over $p$ feed-forward layers.

\begin{list}{{\tiny $\blacksquare$}}{\leftmargin=1em}
\setlength{\topsep}{-1pt}
\setlength{\itemsep}{-3pt}
    \item We first clear the nonlinear features in the last $d$ dimensions.
    \item Then, in the next layer, we generate $x[1]x[j_1], \dots, x[d]x[j_1]$.
    \item For the next $p-2$ layers, we multiply the nonlinear features by $x[j_2], \dots, x[j_{p-1}]$, respectively.
\end{list}

We can repeat this procedure for all $\binom{d+p-2}{p-1}$ monomials with degree $p-1$.
Note that all monomials of lower degrees would be generated as part of the intermediate layers in the procedure above.
Therefore, we can cycle through all possible polynomial features with degree up to $p$ in $\binom{d+p-2}{p-1} p \le \max(d, p)^{\min(d, p)}$ bilinear Transformer blocks whose embedding dimensions are $\bar{d} = 2d+1$.

\section{Additional Experimental Results}
\label{sec:additional-experiment}

In this section, we present several additional plots from our experiments from our experiments on quadratic and cubic ICL tasks described in Section~\ref{sec:efficiency}.

First, we present a new comparison examining how the prompt length during training may affect the performance of the learned Transformers.
e consider a quadratic ICL setup similar to that in Figure\ref{fig:block-quadratic}, where we have $d=3$ and $\bar{d}=4$.
Using a bilinear Transformer with $8$ layers (which implements 4 steps of block coordinate descent), we train models using training prompts of length $n_{\rm train} \in \{100, 150, 200, 250\}$.
This yields one trained Transformer for each choice of $n_{\rm train}$.
We then compare their ICL test loss across test prompts of varying lengths
We find that larger values of $n_{\rm train}$ leads to lower ICL loss when the test prompt is long but performance degrades quickly when the test prompt length is shorter than $n_{\rm train}$.
Because the variance becomes quite large when the length of training prompts is much lower than $n_{\rm train}$, we omit the error bar for these parts and use dashed line to plot the average.
Furthermore, unlike other figures, the error bar in this figure represent half the standard deviation because the losses are very small --- even minor variations would produce excessively large error bars on a log scale.

\begin{figure}[!h]
    \centering
    \includegraphics[width=0.6\linewidth]{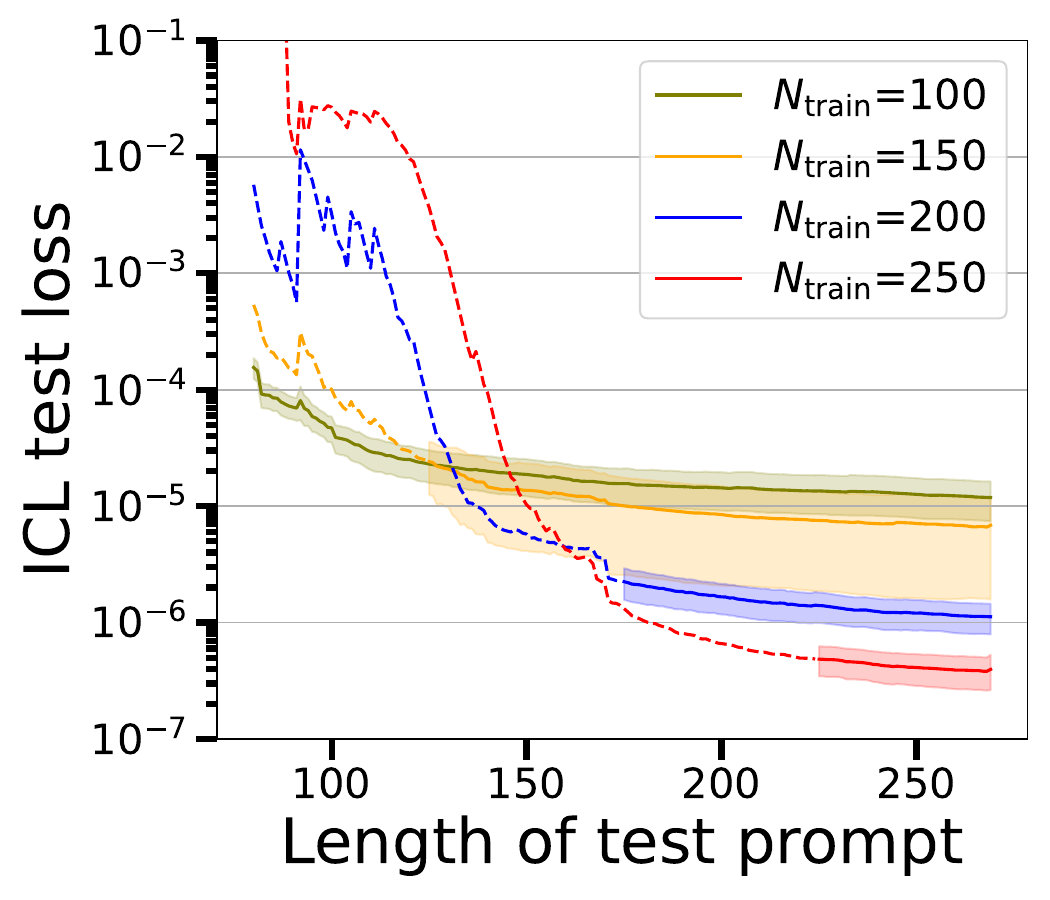}
    \caption{Quadratic ICL test loss for 8-layer bilinear Transformer models trained on training prompts of length $n_{\rm train} \in \{100, 150, 200, 250\}$, respectively.}
    \label{fig:gen-plot}
\end{figure}

\clearpage

For the various bilinear Transformer models in our experiments, we plot their ICL losses over 20000 training iterations.
At each iteration, the test loss is computed from a test prompt independently sampled from the same distribution as the training prompts.
We also inspect the iterates of the block-coordinate descent update implemented by the Transformers.
We compute predictions corresponding to different numbers of block-coordinate update steps by using the sub-network where the first $2\ell$ layers correspond to $\ell$ steps.

The next three figures pertain to the experiment on quadratic target functions with 4 variables ($d=4$).
For a Transformer with $2L$ layers, Theorem~\ref{thm:block-construction} indicates that it implements $L$ steps of block-coordinate descent.
We note the significant improvement in the ICL loss as we increase the depth of the Transformer.
We also observe that training shallower Transformer models exhibits less variation.

Due to the large degrees of freedom in the bilinear Transformer, the initial iterations may exhibit very large run-to-run variations.
Therefore, we omit the error bars for these early iterations and plot the average using a dashed line.
For the experiments on quadratic ICL, we omit the error bars for the first 3,000 iterations.

\begin{figure}[H]
    \centering
    \includegraphics[width=0.6\linewidth]{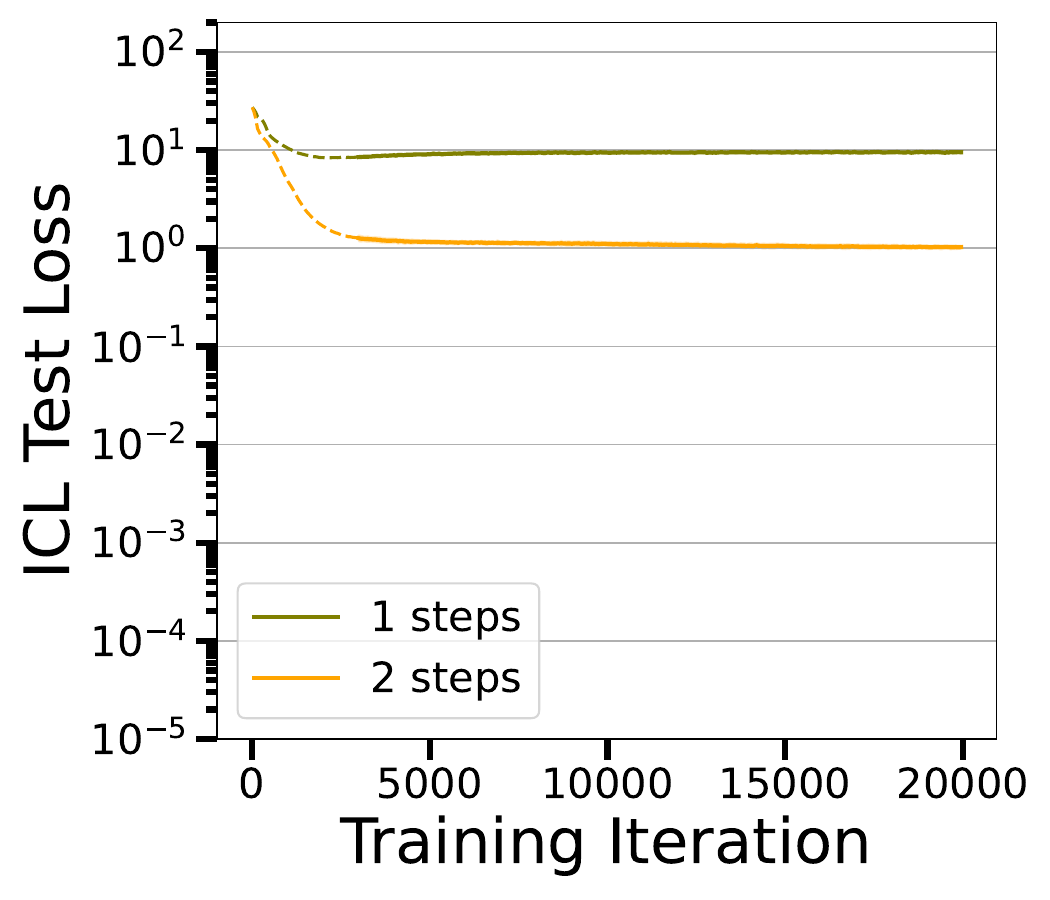}
    \caption{Loss curve (over 5 trials) for the quadratic ICL task, $d = 4, L = 2$.}
\end{figure}

\begin{figure}[H]
    \centering
    \includegraphics[width=0.6\linewidth]{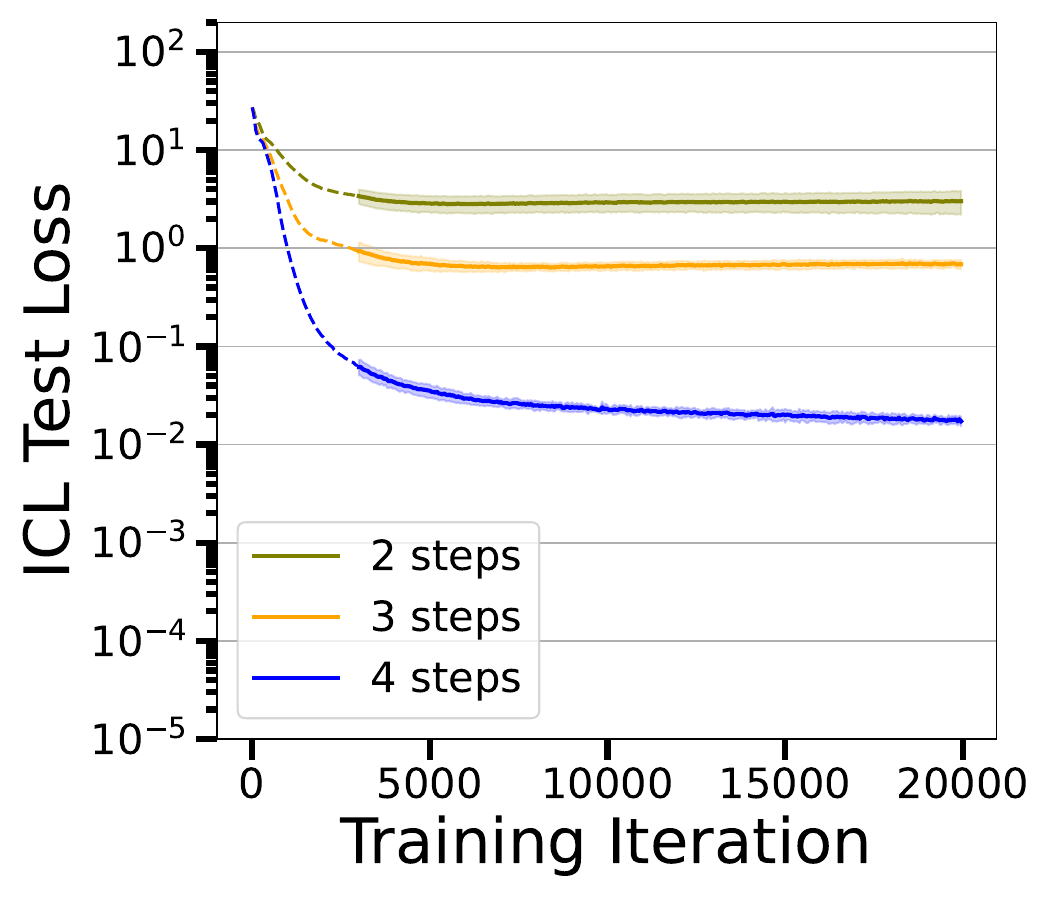}
    \caption{Loss curve (over 5 trials) for the quadratic ICL task, $d = 4, L = 4$.}
\end{figure}

\begin{figure}[H]
    \centering
    \includegraphics[width=0.6\linewidth]{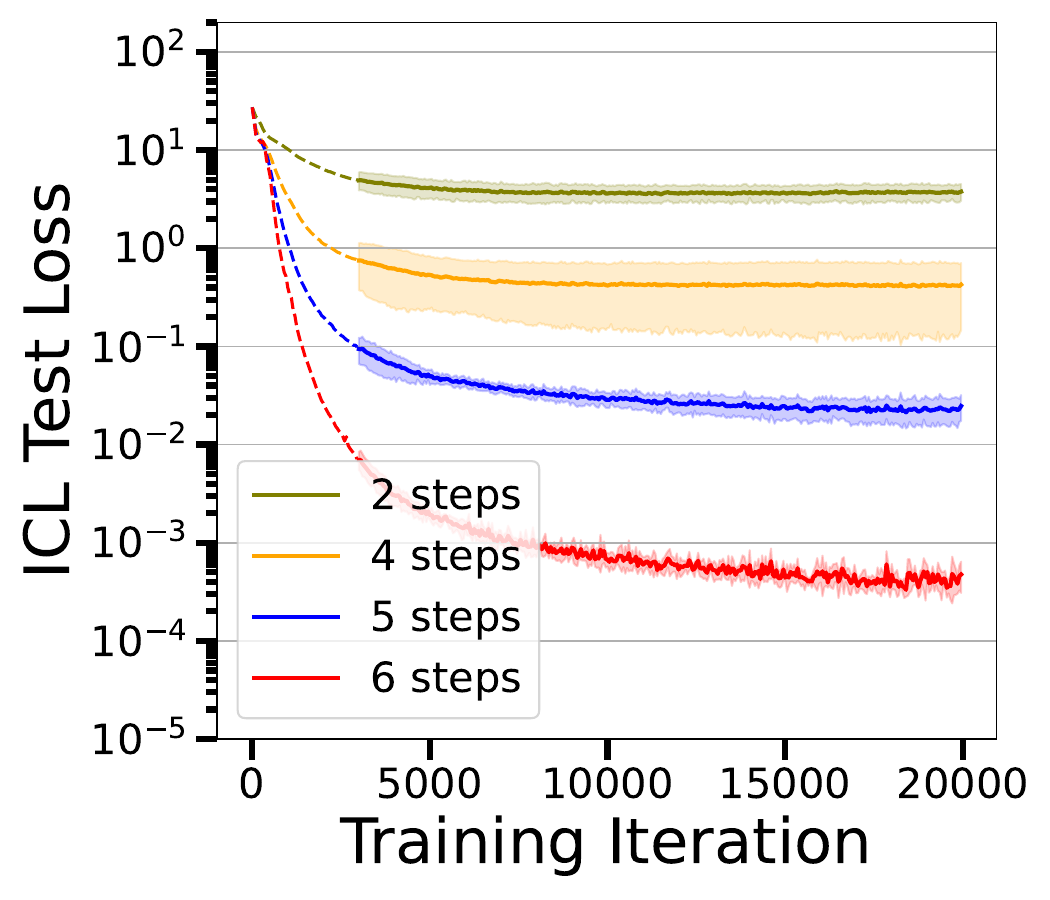}
    \caption{Loss curve (over 5 trials) for the quadratic ICL task, $d = 4, L = 6$.}
\end{figure}

The next two figures pertain to the experiments on quadratic target functions with 3 variables ($d=3$).
Similar observations to those in the $d=4$ case carry over here.
We note that the error bar in the plot for $L=4$ may appear large due to the log scale on the $y$-axis --- the actual standard deviation between trials is only about $10^{-5}$.

\begin{figure}[H]
    \centering
    \includegraphics[width=0.6\linewidth]{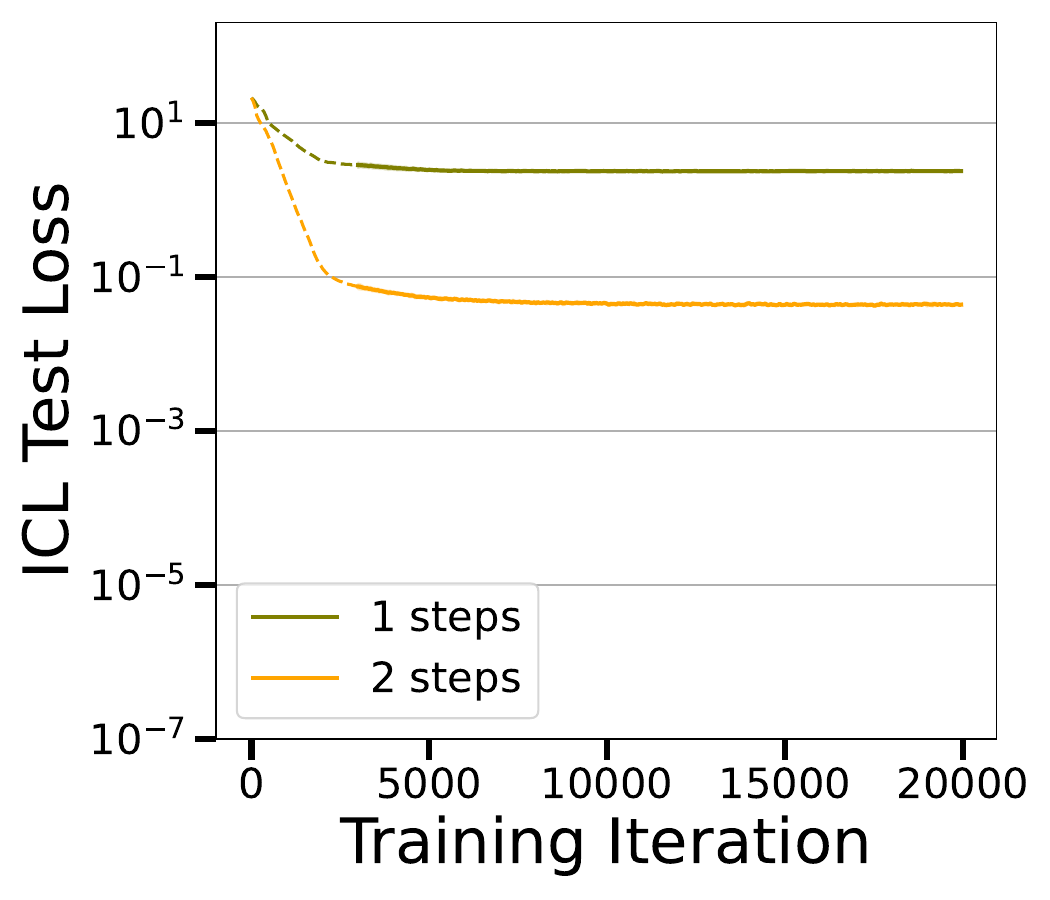}
    \caption{Loss curve (over 5 trials) for the quadratic ICL task, $d = 3, L = 2$.}
\end{figure}

\begin{figure}[H]
    \centering
    \includegraphics[width=0.6\linewidth]{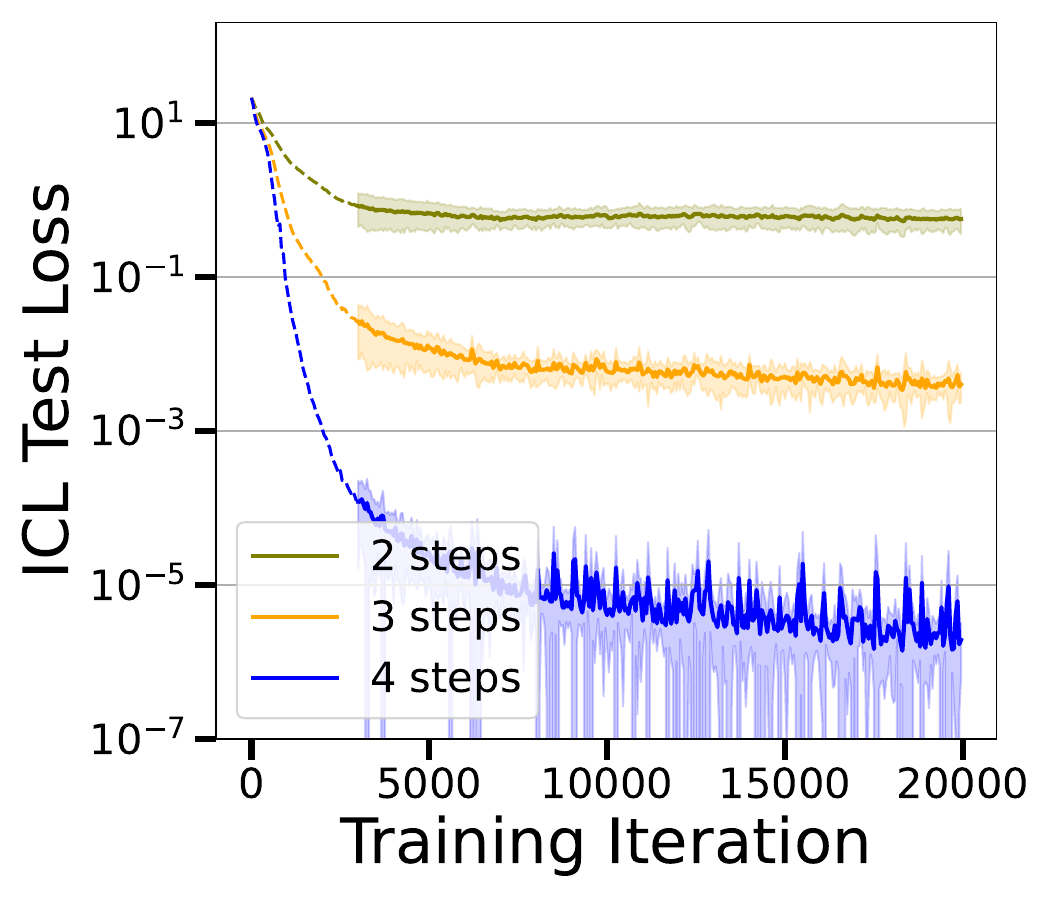}
    \caption{Loss curve (over 5 trials) for the quadratic ICL task, $d = 3, L = 4$.}
\end{figure}

The last figure pertains to the experiment on cubic target functions, where we compare the performance of bilinear Transformers with dense \eqref{equ:bilinear} and sparse~\ref{equ:sparse-bilinear} bilinear layers.
We find that the sparse version performs much worse, indicating that the sparse bilinear layers inhibit the learning of higher-order features in context.
Because there are even greater degrees of freedom for a cubic ICL task, we omit the error bar for the first 9000 iterations.

\begin{figure}[!h]
    \centering
    \includegraphics[width=0.6\linewidth]{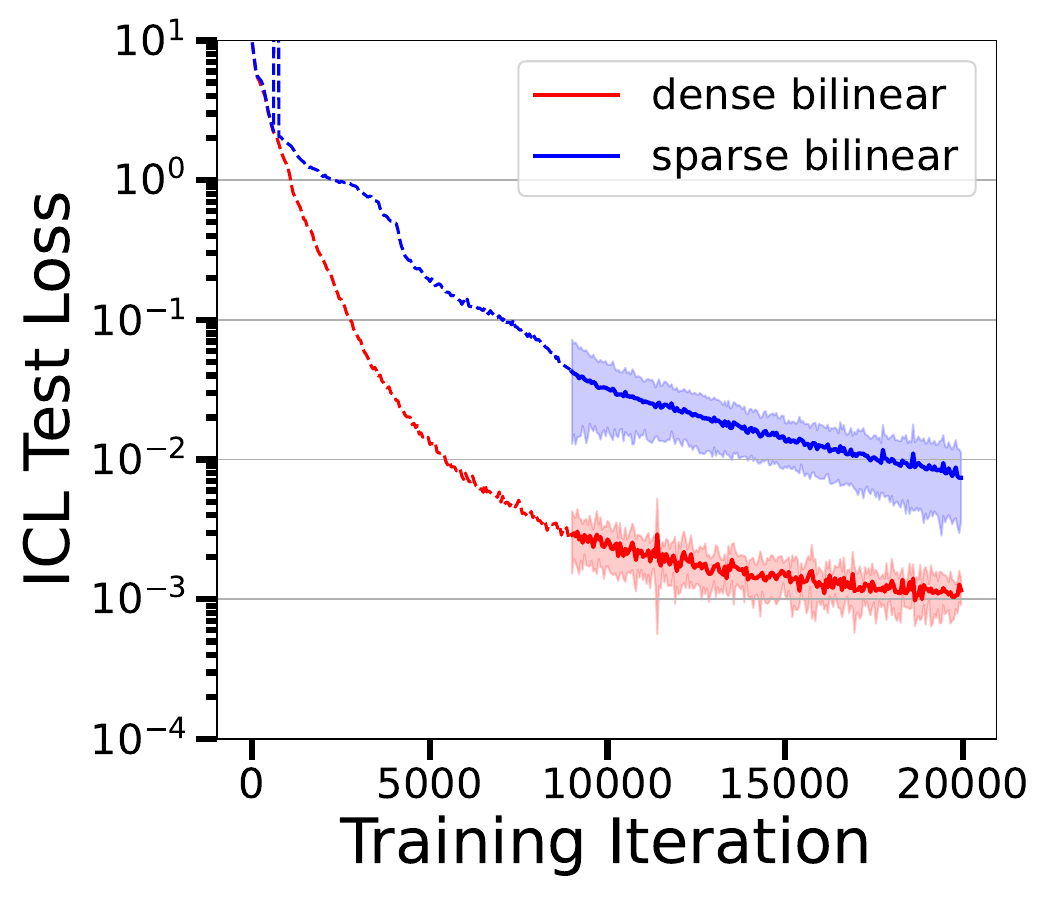}
    \caption{Loss curve (over 5 trials) for the cubic ICL task.}
\end{figure}

\section{Experiment Details}
\label{sec:experiment-details}
In our experiments, we trained our models using the Adam optimizer with a learning rate of $10^{-3}$ and no weight decay.
For each iteration, we independently sample a batch of 4000 prompts.

\begin{list}{{\tiny $\blacksquare$}}{\leftmargin=1em}
\setlength{\itemsep}{2pt}
    \item For the experiment on quadratic ICL, we trained a linear Transformer with 6 layers and $2L$-layer bilinear Transformer for $L = \{2, 4, 6\}$.
    For the other parameters, we used $n=200, \bar{d}=12$.
    We performed two sets of experiments with quadratic target functions of 3 variables ($d=3$) and 4 variables ($d=4$), where all other parameters are fixed to be the same.
    \item For the experiment in Figure~\ref{fig:gen-plot}, we trained an 8-layer bilinear Transformer, which implements 4 steps of block-coordinate descent.
    We used $d=3, \bar{d}=12$ and varied the prompt length at training $n_{\rm train} \in \{100, 150, 200, 280\}$.
    \item For the experiment on cubic ICL, we trained a 5-layer linear Transformer and two 10-layer bilinear Transformers with $n=100, d = 4, \bar{d}=15$.
    For the bilinear Transformer model with sparse bilinear layers, we only train on the free weights as specified by \eqref{equ:sparse-bilinear}, while the remaining entries of bilinear layer parameters are fixed to be zero.
    \item For the experiment in Figure~\ref{fig:verification}, we directly use the bilinear Transformer block construction given in Appendix~\ref{sec:proof-bilinear-construction} without any training. We compute the test loss by drawing 100 prompts with $d=4, \bar{d}=15$.
\end{list}

\noindent All of the experiments were conducted on a single Nvidia V100 GPU.
The total runtime of our experiments are about 2 days.
 

\begin{thebibliography}{42}
\providecommand{\natexlab}[1]{#1}
\providecommand{\url}[1]{\texttt{#1}}
\expandafter\ifx\csname urlstyle\endcsname\relax
  \providecommand{\doi}[1]{doi: #1}\else
  \providecommand{\doi}{doi: \begingroup \urlstyle{rm}\Url}\fi

\bibitem[Ahn et~al.(2024{\natexlab{a}})Ahn, Cheng, Daneshmand, and Sra]{ahn2024transformers}
Kwangjun Ahn, Xiang Cheng, Hadi Daneshmand, and Suvrit Sra.
\newblock Transformers learn to implement preconditioned gradient descent for in-context learning.
\newblock \emph{Advances in Neural Information Processing Systems}, 36, 2024{\natexlab{a}}.

\bibitem[Ahn et~al.(2024{\natexlab{b}})Ahn, Cheng, Song, Yun, Jadbabaie, and Sra]{ahn2023linear}
Kwangjun Ahn, Xiang Cheng, Minhak Song, Chulhee Yun, Ali Jadbabaie, and Suvrit Sra.
\newblock Linear attention is (maybe) all you need (to understand transformer optimization).
\newblock In \emph{The Twelfth International Conference on Learning Representations}, 2024{\natexlab{b}}.

\bibitem[Aky{\"u}rek et~al.(2023)Aky{\"u}rek, Schuurmans, Andreas, Ma, and Zhou]{akyurek2022learning}
Ekin Aky{\"u}rek, Dale Schuurmans, Jacob Andreas, Tengyu Ma, and Denny Zhou.
\newblock What learning algorithm is in-context learning? investigations with linear models.
\newblock In \emph{International Conference on Learning Representations}, 2023.

\bibitem[Bahdanau et~al.(2015)Bahdanau, Cho, and Bengio]{bahdanau2014neural}
Dzmitry Bahdanau, Kyunghyun Cho, and Yoshua Bengio.
\newblock Neural machine translation by jointly learning to align and translate.
\newblock In \emph{International Conference on Learning Representations}, 2015.

\bibitem[Bai et~al.(2024)Bai, Chen, Wang, Xiong, and Mei]{bai2024transformers}
Yu~Bai, Fan Chen, Huan Wang, Caiming Xiong, and Song Mei.
\newblock Transformers as statisticians: Provable in-context learning with in-context algorithm selection.
\newblock \emph{Advances in neural information processing systems}, 36, 2024.

\bibitem[Bhattamishra et~al.(2024)Bhattamishra, Patel, Blunsom, and Kanade]{bhattamishra2023understanding}
Satwik Bhattamishra, Arkil Patel, Phil Blunsom, and Varun Kanade.
\newblock Understanding in-context learning in transformers and llms by learning to learn discrete functions.
\newblock In \emph{International Conference on Learning Representations}, 2024.

\bibitem[Brown et~al.(2020)Brown, Mann, Ryder, Subbiah, Kaplan, et~al.]{brown2020language}
Tom Brown, Benjamin Mann, Nick Ryder, Melanie Subbiah, Jared~D Kaplan, et~al.
\newblock Language models are few-shot learners.
\newblock \emph{Advances in neural information processing systems}, 2020.

\bibitem[Chen et~al.(2024)Chen, Sheen, Wang, and Yang]{chen2024training}
Siyu Chen, Heejune Sheen, Tianhao Wang, and Zhuoran Yang.
\newblock Training dynamics of multi-head softmax attention for in-context learning: Emergence, convergence, and optimality.
\newblock In \emph{COLT}, page 4573, 2024.
\newblock Extended abstract.

\bibitem[Cheng et~al.(2024)Cheng, Chen, and Sra]{cheng2023transformers}
Xiang Cheng, Yuxin Chen, and Suvrit Sra.
\newblock Transformers implement functional gradient descent to learn non-linear functions in context.
\newblock In \emph{Forty-first International Conference on Machine Learning}, 2024.

\bibitem[Dai et~al.(2023)Dai, Sun, Dong, Hao, Ma, Sui, and Wei]{dai2023can}
Damai Dai, Yutao Sun, Li~Dong, Yaru Hao, Shuming Ma, Zhifang Sui, and Furu Wei.
\newblock Why can gpt learn in-context? language models secretly perform gradient descent as meta-optimizers.
\newblock In \emph{Findings of the Association for Computational Linguistics: ACL 2023}, pages 4005--4019, 2023.

\bibitem[Dao and Gu(2024)]{dao2024transformers}
Tri Dao and Albert Gu.
\newblock Transformers are ssms: Generalized models and efficient algorithms through structured state space duality.
\newblock In \emph{International Conference on Machine Learning}, 2024.

\bibitem[Dauphin et~al.(2017)Dauphin, Fan, Auli, and Grangier]{dauphin2017language}
Yann~N Dauphin, Angela Fan, Michael Auli, and David Grangier.
\newblock Language modeling with gated convolutional networks.
\newblock In \emph{International conference on machine learning}, pages 933--941. PMLR, 2017.

\bibitem[Dosovitskiy et~al.(2021)Dosovitskiy, Beyer, Kolesnikov, Weissenborn, et~al.]{dosovitskiy2020image}
Alexey Dosovitskiy, Lucas Beyer, Alexander Kolesnikov, Dirk Weissenborn, et~al.
\newblock An image is worth 16x16 words: Transformers for image recognition at scale.
\newblock In \emph{International Conference on Learning Representations}, 2021.

\bibitem[Edelman et~al.(2022)Edelman, Goel, Kakade, and Zhang]{edelman2022inductive}
Benjamin~L Edelman, Surbhi Goel, Sham Kakade, and Cyril Zhang.
\newblock Inductive biases and variable creation in self-attention mechanisms.
\newblock In \emph{International Conference on Machine Learning}, pages 5793--5831. PMLR, 2022.

\bibitem[Garg et~al.(2022)Garg, Tsipras, Liang, and Valiant]{garg2022can}
Shivam Garg, Dimitris Tsipras, Percy~S Liang, and Gregory Valiant.
\newblock What can transformers learn in-context? a case study of simple function classes.
\newblock \emph{Advances in Neural Information Processing Systems}, 35:\penalty0 30583--30598, 2022.

\bibitem[Giannou et~al.(2023)Giannou, Rajput, Sohn, Lee, Lee, and Papailiopoulos]{giannou2023looped}
Angeliki Giannou, Shashank Rajput, Jy-yong Sohn, Kangwook Lee, Jason~D Lee, and Dimitris Papailiopoulos.
\newblock Looped transformers as programmable computers.
\newblock In \emph{International Conference on Machine Learning}, pages 11398--11442. PMLR, 2023.

\bibitem[Guo et~al.(2024)Guo, Hu, Mei, Wang, Xiong, Savarese, and Bai]{guo2023transformers}
Tianyu Guo, Wei Hu, Song Mei, Huan Wang, Caiming Xiong, Silvio Savarese, and Yu~Bai.
\newblock How do transformers learn in-context beyond simple functions? a case study on learning with representations.
\newblock In \emph{The Twelfth International Conference on Learning Representations}, 2024.

\bibitem[Huang et~al.(2024)Huang, Cheng, and Liang]{huang2023context}
Yu~Huang, Yuan Cheng, and Yingbin Liang.
\newblock In-context convergence of transformers.
\newblock In \emph{Proceedings of the 41st International Conference on Machine Learning}, Proceedings of Machine Learning Research, pages 19660--19722. PMLR, 2024.

\bibitem[Li et~al.(2025)Li, Gong, Yang, Shan, Liu, Zhu, Zhang, Guo, Chen, Li, et~al.]{li2025minimax}
Aonian Li, Bangwei Gong, Bo~Yang, Boji Shan, Chang Liu, Cheng Zhu, Chunhao Zhang, Congchao Guo, Da~Chen, Dong Li, et~al.
\newblock Minimax-01: Scaling foundation models with lightning attention.
\newblock \emph{arXiv preprint arXiv:2501.08313}, 2025.

\bibitem[Li et~al.(2023)Li, Ildiz, Papailiopoulos, and Oymak]{li2023transformers}
Yingcong Li, Muhammed~Emrullah Ildiz, Dimitris Papailiopoulos, and Samet Oymak.
\newblock Transformers as algorithms: Generalization and stability in in-context learning.
\newblock In \emph{International Conference on Machine Learning}, pages 19565--19594. PMLR, 2023.

\bibitem[Lin et~al.(2024)Lin, Bai, and Mei]{lin2023transformers}
Licong Lin, Yu~Bai, and Song Mei.
\newblock Transformers as decision makers: Provable in-context reinforcement learning via supervised pretraining.
\newblock In \emph{The Twelfth International Conference on Learning Representations}, 2024.

\bibitem[Mahankali et~al.(2024)Mahankali, Hashimoto, and Ma]{mahankali2023one}
Arvind Mahankali, Tatsunori~B Hashimoto, and Tengyu Ma.
\newblock One step of gradient descent is provably the optimal in-context learner with one layer of linear self-attention.
\newblock In \emph{International Conference on Learning Representations}, 2024.

\bibitem[Min et~al.(2022)Min, Lewis, Zettlemoyer, and Hajishirzi]{min2021metaicl}
Sewon Min, Mike Lewis, Luke Zettlemoyer, and Hannaneh Hajishirzi.
\newblock Metaicl: Learning to learn in context.
\newblock In \emph{Proceedings of the 2022 Conference of the North American Chapter of the Association for Computational Linguistics: Human Language Technologies}, pages 2791--2809, 2022.

\bibitem[Mnih and Hinton(2007)]{mnih2007three}
Andriy Mnih and Geoffrey Hinton.
\newblock Three new graphical models for statistical language modelling.
\newblock In \emph{Proceedings of the 24th international conference on Machine learning}, pages 641--648, 2007.

\bibitem[Nichani et~al.(2024)Nichani, Damian, and Lee]{nichani2024transformers}
Eshaan Nichani, Alex Damian, and Jason~D. Lee.
\newblock How transformers learn causal structure with gradient descent.
\newblock In \emph{Proceedings of the 41st International Conference on Machine Learning}, volume 235 of \emph{Proceedings of Machine Learning Research}, pages 38018--38070. PMLR, 2024.

\bibitem[Oko et~al.(2024)Oko, Song, Suzuki, and Wu]{oko2024pretrained}
Kazusato Oko, Yujin Song, Taiji Suzuki, and Denny Wu.
\newblock Pretrained transformer efficiently learns low-dimensional target functions in-context.
\newblock \emph{Advances in Neural Information Processing Systems}, 37:\penalty0 77316--77365, 2024.

\bibitem[Olsson et~al.(2022)Olsson, Elhage, Nanda, Joseph, DasSarma, Henighan, et~al.]{olsson2022context}
Catherine Olsson, Nelson Elhage, Neel Nanda, Nicholas Joseph, Nova DasSarma, Tom Henighan, et~al.
\newblock In-context learning and induction heads.
\newblock \emph{arXiv preprint arXiv:2209.11895}, 2022.

\bibitem[OpenAI(2023)]{achiam2023gpt}
OpenAI.
\newblock Gpt-4 technical report.
\newblock \emph{arXiv preprint arXiv:2303.08774}, 2023.

\bibitem[P{\'e}rez et~al.(2021)P{\'e}rez, Barcel{\'o}, and Marinkovic]{perez2021attention}
Jorge P{\'e}rez, Pablo Barcel{\'o}, and Javier Marinkovic.
\newblock Attention is turing-complete.
\newblock \emph{Journal of Machine Learning Research}, 22\penalty0 (75):\penalty0 1--35, 2021.

\bibitem[Radford et~al.(2023)Radford, Kim, Xu, Brockman, McLeavey, and Sutskever]{radford2023robust}
Alec Radford, Jong~Wook Kim, Tao Xu, Greg Brockman, Christine McLeavey, and Ilya Sutskever.
\newblock Robust speech recognition via large-scale weak supervision.
\newblock In \emph{International conference on machine learning}, pages 28492--28518. PMLR, 2023.

\bibitem[Reddy(2024)]{reddy2024mechanistic}
Gautam Reddy.
\newblock The mechanistic basis of data dependence and abrupt learning in an in-context classification task.
\newblock In \emph{The Twelfth International Conference on Learning Representations}, 2024.

\bibitem[Shazeer(2020)]{shazeer2020glu}
Noam Shazeer.
\newblock Glu variants improve transformer.
\newblock \emph{arXiv preprint arXiv:2002.05202}, 2020.

\bibitem[Vaswani et~al.(2017)Vaswani, Shazeer, Parmar, Uszkoreit, Jones, et~al.]{vaswani2017attention}
Ashish Vaswani, Noam Shazeer, Niki Parmar, Jakob Uszkoreit, Llion Jones, et~al.
\newblock Attention is all you need.
\newblock \emph{Advances in neural information processing systems}, 30, 2017.

\bibitem[Von~Oswald et~al.(2023)Von~Oswald, Niklasson, Randazzo, Sacramento, et~al.]{von2023transformers}
Johannes Von~Oswald, Eyvind Niklasson, Ettore Randazzo, Jo{\~a}o Sacramento, et~al.
\newblock Transformers learn in-context by gradient descent.
\newblock In \emph{International Conference on Machine Learning}. PMLR, 2023.

\bibitem[Wei et~al.(2022{\natexlab{a}})Wei, Chen, and Ma]{wei2022statistically}
Colin Wei, Yining Chen, and Tengyu Ma.
\newblock Statistically meaningful approximation: a case study on approximating turing machines with transformers.
\newblock \emph{Advances in Neural Information Processing Systems}, 35:\penalty0 12071--12083, 2022{\natexlab{a}}.

\bibitem[Wei et~al.(2022{\natexlab{b}})Wei, Tay, Bommasani, Raffel, Zoph, Borgeaud, Yogatama, Bosma, Zhou, Metzler, et~al.]{wei2022emergent}
Jason Wei, Yi~Tay, Rishi Bommasani, Colin Raffel, Barret Zoph, Sebastian Borgeaud, Dani Yogatama, Maarten Bosma, Denny Zhou, Donald Metzler, et~al.
\newblock Emergent abilities of large language models.
\newblock \emph{Transactions on Machine Learning Research}, 2022{\natexlab{b}}.

\bibitem[Wu et~al.(2024)Wu, Zou, Chen, Braverman, Gu, and Bartlett]{wu2024many}
Jingfeng Wu, Difan Zou, Zixiang Chen, Vladimir Braverman, Quanquan Gu, and Peter Bartlett.
\newblock How many pretraining tasks are needed for in-context learning of linear regression?
\newblock In \emph{The Twelfth International Conference on Learning Representations}, 2024.

\bibitem[Yang et~al.(2025)Yang, Kautz, and Hatamizadeh]{yang2025gated}
Songlin Yang, Jan Kautz, and Ali Hatamizadeh.
\newblock Gated delta networks: Improving mamba2 with delta rule.
\newblock In \emph{The Thirteenth International Conference on Learning Representations}, 2025.

\bibitem[Yang et~al.(2024)Yang, Huang, Liang, and Chi]{yang2024context}
Tong Yang, Yu~Huang, Yingbin Liang, and Yuejie Chi.
\newblock In-context learning with representations: Contextual generalization of trained transformers.
\newblock In \emph{Advances in Neural Information Processing Systems}, volume~37, pages 85867--85898, 2024.

\bibitem[Yang et~al.(2023)Yang, Li, Lin, Wang, Lin, Liu, and Wang]{yang2023dawn}
Zhengyuan Yang, Linjie Li, Kevin Lin, Jianfeng Wang, Chung-Ching Lin, Zicheng Liu, and Lijuan Wang.
\newblock The dawn of lmms: Preliminary explorations with gpt-4v (ision).
\newblock \emph{arXiv preprint arXiv:2309.17421}, 2023.

\bibitem[Zhang et~al.(2024{\natexlab{a}})Zhang, Frei, and Bartlett]{zhang2023trained}
Ruiqi Zhang, Spencer Frei, and Peter~L Bartlett.
\newblock Trained transformers learn linear models in-context.
\newblock \emph{Journal of Machine Learning Research}, 25\penalty0 (49):\penalty0 1--55, 2024{\natexlab{a}}.

\bibitem[Zhang et~al.(2024{\natexlab{b}})Zhang, Wu, and Bartlett]{zhang2024context}
Ruiqi Zhang, Jingfeng Wu, and Peter~L Bartlett.
\newblock In-context learning of a linear transformer block: benefits of the mlp component and one-step gd initialization.
\newblock \emph{Advances in Neural Information Processing Systems}, 2024{\natexlab{b}}.

\end{thebibliography}
\end{document}